\let\originallhook\lhook
\let\lhook\originallhook
\newcommand{\BC}{{\mathbb C}}%
\newcommand{\BK}{{\mathbb K}}%
\newcommand{\BN}{{\mathbb N}}%
\newcommand{\BP}{{\mathbb P}}%
\newcommand{\BQ}{{\mathbb Q}}%
\newcommand{\BR}{{\mathbb R}}%
\newcommand{\BS}{{\mathbb S}}%
\newcommand{\calC}{{\mathcal C}}%
\newcommand{\calD}{{\mathcal D}}%
\newcommand{\calE}{{\mathcal E}}%
\newcommand{\calF}{{\mathcal F}}%
\newcommand{\calG}{{\mathcal G}}%
\newcommand{\calH}{{\mathcal H}}%
\newcommand{\calI}{{\mathcal I}}%
\newcommand{\calP}{{\mathcal P}}%
\newcommand{\calR}{{\mathcal R}}%
\newcommand{\calX}{{\mathcal X}}%
\newcommand{\calY}{{\mathcal Y}}%
\newcommand{\eps}{\epsilon}%
\newcommand{\arrow}{\rightarrow}%
\DeclareMathOperator{\im}{Im}
\DeclareMathOperator{\Tr}{Tr}%
\newcommand{\smallspace}{\vskip 2mm\noindent}
\newcommand{\colori}{\ensuremath{\gamma}}  % variable for colors
\newcommand{\SymGr}[1]{\Sigma_{#1}}         % symmetric groups
\newcommand{\irred}[1]{\calH^{(#1)}}       % irred repr of SO(3) given by sph. harmonics
\newcommand{\vv}[1]{\boldsymbol{#1}}
\newtheorem{definition}{Definition}
\newtheorem{proposition}{Proposition}
\newtheorem{theorem}{Theorem}
\newtheorem{lemma}[proposition]{Lemma}
\newtheorem{corollary}[proposition]{Corollary}
\newcommand{\refSecVariants}{B}
\newcommand{\refSecCompleteTheorems}{D}
\newcommand{\refSecAlgorithm}{E}
\newcommand{\refSubsecEquivariantMaps}{G2}
\newcommand{\refSubsecDualTensorHom}{G5}
\newcommand{\refSubsecPPSDsNormalForm}{H1}
\newcommand{\refSubsecPPSDsOnMultisets}{H2}
\newcommand{\refSecCovariantComplete}{I4}
\newcommand{\refSecMatrixMoments}{M}
\newcommand{\refSecProofTheoremMatMult}{N}
\newcommand{\refSecExperimentDetails}{P}
\newcommand{\refTheoremTopologicalCompleteness}{1}
\newcommand{\refTheoremFiniteFeatures}{2}
\newcommand{\refTheoremAlgCompleteness}{3}
\newcommand{\refTheoremInvariantTheory}{4}
\title{Complete and Efficient Covariants for 3D Point Configurations with Application to Learning  Molecular Quantum Properties}
\author[1]{\small Hartmut Maennel}
\author[2]{\small Oliver T. Unke}
\author[2,3,4,5,6]{\small Klaus-Robert M\"uller}
\affil[1]{\footnotesize Google DeepMind Z\"urich, Switzerland}
\affil[2]{\footnotesize Google DeepMind Berlin, Germany}
\affil[3]{\footnotesize TU Berlin, Machine Learning Group, Berlin, Germany}
\affil[4]{\footnotesize Berlin Institute for the Foundation of Learning and Data, Germany}
\affil[5]{\footnotesize Max Planck Institute for Informatics Saarbr\"ucken, Germany}
\affil[6]{\footnotesize Korea University, Department of Artificial Intelligence, Seoul, Korea}
\date{}
\begin{document}

\maketitle
\begin{abstract}
When modeling physical properties of molecules with machine learning, it is desirable to 
incorporate $SO(3)$-covariance. While such models based on low body order features are not complete, we formulate and prove general completeness properties for higher order methods, and show that $6k-5$ of these features are enough for up to $k$ atoms. We also find that the Clebsch--Gordan operations commonly used in these methods can be replaced by matrix multiplications without sacrificing completeness, lowering the scaling from $O(l^6)$ to $O(l^3)$ in the degree of the features. We apply this to quantum chemistry, but the proposed methods are generally applicable for problems involving 3D point configurations.
\end{abstract}
\section*{Introduction}
Atomistic simulations have proven indispensable for advancing chemistry and materials science, providing insights into the behavior of matter at the atomic level.  In the past, these simulations have been computationally demanding, but the advent of Density Functional Theory (DFT) \cite{kohn1965self} significantly enhanced the accessibility of atomistic simulations, and recent breakthroughs in machine learning (ML) have further accelerated progress \cite{rupp2012fast,snyder2012finding,brockherde2017bypassing,bogojeski2020quantum,hermann2020deep,pfau2020ab}. ML methods trained on \textit{ab initio} data now enable the fast and accurate prediction of quantum properties orders of magnitude faster than traditional calculations \cite{noe2020machine,von2020exploring,unke2021machine,keith2021combining,glielmo2021unsupervised,deringer2021gaussian}.
A cornerstone of these methods, whether utilizing kernel-based approaches \cite{unke2017toolkit,chmiela2017machine,chmiela2019sgdml,glielmo2020gaussian} or deep learning \cite{montavon2013machine,schutt2017quantum,schutt2017schnet,schutt2018schnet,unke2019physnet}, lies in the effective representation of molecules \cite{Behler2011,hansen2015machine,faber2018alchemical,christensen2020fchl} or materials \cite{schutt2014represent,butler2018machine,sauceda2022bigdml} through carefully chosen features or descriptors. Early examples include the Coulomb Matrix representation \cite{rupp2012fast} and SOAP \cite{Bartok2013OnRC}, while recent advancements extend this principle beyond rotationally invariant representations with the design of equivariant model architectures \cite{thomas2018tensor,anderson2019cormorant,fuchs2020se,satorras2021n,unke2021se,schutt2021equivariant,unke2021spookynet,batzner20223,frank2022so3krates,musaelian2023learning}.

However, \cite{pozdnyakov2020incompleteness} pointed out that commonly available descriptors are not able to uniquely identify some molecular structures~\cite{pozdnyakov2020incompleteness,Pozdnyakov2022,nigam2023completeness}. This can lead to ambiguities (two distinct structures may be mapped to the same descriptor) that hamper the performance of ML models. Effectively, a lack of uniqueness is similar to introducing a high level of noise into the learning process and may hinder generalization. 
A second important shortcoming of some modern ML architectures was discussed by \cite{fu2022forces} and only becomes visible when running molecular dynamics (MD) simulations \cite{fu2022forces}. It was observed that ML models with excellent prediction accuracy for energies and forces can nevertheless show unphysical instabilities (e.g.~spurious bond dissociation) when simulating longer MD trajectories --- limiting their usefulness in practice.  Equivariant architectures, however, as broad anecdotal evidence and some theoretical analyses have shown \cite{fu2022forces,frank2023from},  were found to enable stable MD simulations over long timescales  \cite{fu2022forces,chmiela2019sgdml,schutt2017schnet,schutt2018schnet,schutt2021equivariant,unke2021spookynet,frank2022so3krates,frank2023from}. \\
\begin{figure*}
    \centering
    \includegraphics[width=\textwidth]{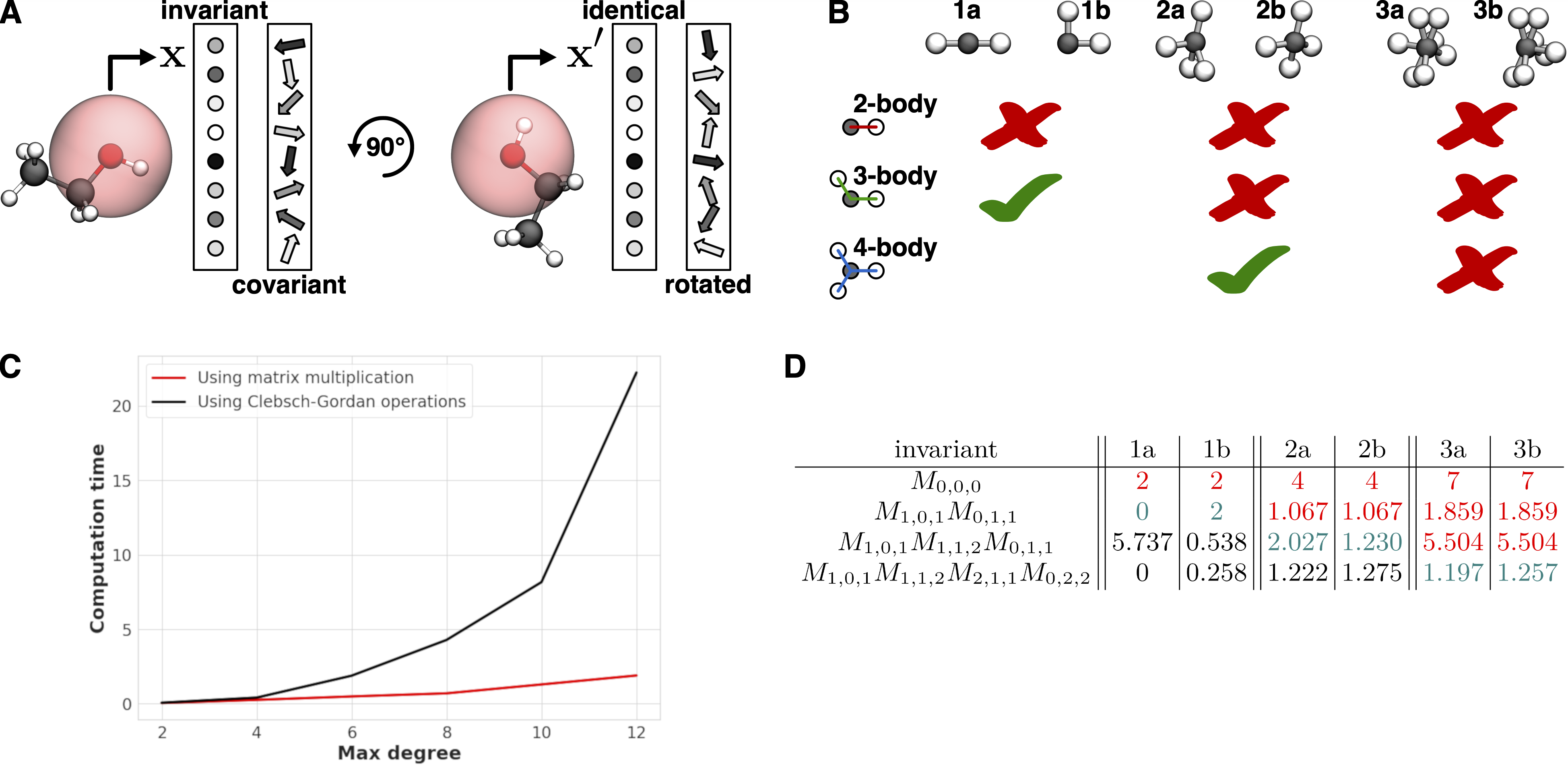}
    \caption{\textbf{A}: Given an atom (red) in a local chemical environment (translucent red sphere), the aim is to find a descriptor, i.e.~a fixed-size feature vector $\mathbf{x}$, such that all different environments are also mapped to different descriptors (uniqueness). The features can be either invariant or covariant w.r.t.\ rotations, meaning that when the environment is rotated, the new features $\mathbf{x'}$ are either identical (invariant) or ``rotate in the same way'' (covariant). 
    \textbf{B}: Examples for local chemical environments (taken from Ref.~\cite{Pozdnyakov2020}), which cannot be distinguished by features constructed from $m$-body information. From the perspective of the central black atom, the environments 1a and 1b appear identical when considering only 2-body information (e.g.\ distances), but are readily distinguished by 3-body information (e.g.\ angles). For environments 2a and 2b, 4-body information (e.g.\ dihedral angles) is necessary to distinguish them, whereas environments 3a and 3b require even higher-order information.
    \textbf{C}: Computational cost for evaluating learned invariants.
    While the straightforward GPU/TPU-friendly implementation of invariants using Clebsch--Gordan operations scales with $O(l^6)$, the proposed implementation replacing them by matrix multiplications scales with $O(l^3)$, 
    enabling the use of higher degrees.
    \textbf{D}: Examples of invariants that distinguish the pairs of B above. According to Theorem~\ref{theorem:MatMultComplete} there must be a distinguishing invariant which is given by matrix multiplication as in \eqref{eq:matrix_invariants}, here we used
    $l=2$. Products of $k$ matrices give $(k+1)$--body invariants, so for structure pairs 1, 2, and 3 (see panel B) we need 2, 3, and 4 matrix factors, respectively. While in general these invariants are $SO(3)$--invariants, the sum of the matrix indices chosen here is even, so the invariants are actually $O(3)$--invariants and these numbers show that the pairs are also in distinct $O(3)$--equivalent classes.
    }
    \label{fig:overview}
\end{figure*}
% \\
Both aspects lead to the interesting theoretical question of {\em how to construct a provably unique invariant, or more generally, a ``complete'' (to be defined below) equivariant and computationally efficient representation of descriptors for atomistic simulations}.
We will study this challenge  both by theoretical means and by performing empirical atomistic simulations.
\smallspace
Let us assume that  the origin of our coordinate system was fixed meaningfully and we are looking for unique 
descriptors of point sets that are equivariant under rotations in $SO(3)$.
\\
To get invariant features, we can use a rotationally invariant function of $n$ points (e.g.~distance from the origin for $n=1$, or angles between two points for $n=2$), and then sum over all $n$--tuples of points in the configuration. 
Such descriptors are called ``$(n+1)$--body functions''. It was recently shown that
descriptors based on 2- and 3-body information (distances and angles) are unable to distinguish some non-equivalent environments \cite{Pozdnyakov2020}. Even 4-body information (dihedrals) is not sufficient in all cases
(see Fig.~\ref{fig:overview}B) and it is necessary to include 
higher $m$-body information for some structures. Other methods that construct descriptors implicitly, e.g.\ by message-passing \cite{gilmer2017neural}, suffer from similar problems \cite{Pozdnyakov2022}.
%----------------------------------------
\section*{Results and discussion}
Let us start defining an appropriate mathematical language. In applications to
chemistry, the points in the point set can belong to different atom types/elements which have to be treated
differently. We assume there is a fixed finite set $\calC$ of ``colors'' (the atom types/elements), and each point in the
point set $S$ is assigned a color in $\calC$, i.e.\ $S = \bigcup_{\gamma\in\calC} S_\gamma$.\\
We propose to take as potential features all \emph{polynomial point set descriptors} (PPSDs), i.e.~all scalar expressions that can be written down for colored point sets, using the coordinates of points, constants from $\BR$, addition, multiplication, and summations over all points of a given color, such that these expressions can be evaluated for any point set independent of the number of points (See Appendix \refSubsecPPSDsNormalForm\ for  formal definitions).
\smallspace
In practice, a variant of PPSDs is more useful, using polynomials only for the angular part (i.e.~as a function on the sphere $\BS^2$) and some other function space for the radial part. With the assumptions that
these radial functions are analytic and allow approximation of continuous functions in the radius, we can 
(with some extra effort) prove almost the same theorems, see Appendix \refSecVariants\ for the definitions, and later sections for details and proofs.
%-----------------------------------------------------
\smallspace
{\bf General theorems:}
We now describe informally a series of mathematical theorems about PPSDs that we prove in this work,
see respective Appendices %\ref{sec:CompleteTheorems} 
for the precise formulations and proofs.
\smallspace
We first observe (see Appendix \refSubsecPPSDsNormalForm) that the computation of any scalar PPSD
can be arranged into two steps:
\begin{enumerate}
  \item Evaluate expressions involving only one summation sign: $\sum_{\vv r\in S_\gamma} P(\vv r)$ for some color $\gamma$ and polynomial $P:\BR^3\arrow \BR$ acting on point coordinates $\vv r$.
  We call them \emph{fundamental features}.
  \item Evaluate polynomials in fundamental features.
\end{enumerate}
This separability into two steps allows any PPSD to be {\bf evaluated in time $O(n)$} where $n$ is the number of points (here atoms), which is a major advantage over e.g.~descriptors based on rational functions, for which this is generally not possible.\\
We call a PPSD that can be written such that all polynomials in fundamental features have degree $d$ ``homogeneous 
of order $d$'' \footnote{Note that this is only the degree of the polynomial in fundamental features (step 2), it does not take into account the degrees of the polynomials used to construct the fundamental
features themselves. When we multiply out and move all summations
to the left (see Appendix \refSubsecPPSDsNormalForm) this order corresponds to the depth of the summations, since each fundamental feature comes with one summation sign}. 
The order of such a PPSD is unique, for a proof and a refinement of this notion see Appendix
\refSubsecPPSDsOnMultisets. PPSDs of order $d$ are also said to be of ``{\bf body order} $d+1$'' (this 
convention includes one atom at the origin of the coordinate system in the count).\\
In this language, there are infinitely many independent $SO(3)$--invariant PPSDs of body order 3, but the examples in \cite{Pozdnyakov2020} show that there are inequivalent configurations that cannot be distinguished by invariant functions of body orders $\leq 4$ (see Fig.~\ref{fig:overview}B).
Our {\bf Topological Completeness Theorem} (Theorem \refTheoremTopologicalCompleteness\ in Appendix \refSecCompleteTheorems) says that this problem vanishes
when we allow arbitrary body orders, even when we restrict the functions to be {\em polynomial} invariants: Any two $SO(3)$--inequivalent configurations can be distinguished by $SO(3)$--invariant PPSDs, i.e.~taking the values of \emph{all polynomial $SO(3)$--invariant} functions gives a \emph{unique} descriptor.
In general for \emph{covariant} functions the values of PPSDs change when we rotate a configuration, so this uniqueness property has to be expressed differently: We prove that there are enough $SO(3)$--covariant PPSDs to approximate any continuous $SO(3)$--covariant function of colored point sets.
\smallspace
Without bound on the number of points in the configurations it is of course necessary to use infinitely many
independent invariant functions to distinguish all $SO(3)$--inequivalent configurations, as these form an infinite dimensional space. However, we can ask how many features are necessary to uniquely identify configurations of up to $k$ points. Our {\bf Finiteness Theorem} (Theorem \refTheoremFiniteFeatures\ in Appendix \refSecCompleteTheorems) gives a linear upper bound of $6k-5$, with some guarantees for the distance of non--equivalent configurations. Its proof is based on real algebraic geometry and subanalytic geometry.
\smallspace
%-----------------------------------------------------
{\bf Practical construction:}
We will  now  show how to produce unique features in such a way that we
never leave the space of covariant features: \\
Let $\irred{l}$ be the irreducible $(2l+1)$--dimensional (real) representation
of $SO(3)$, and $Y_l:\BR^3\arrow\irred{l}$ be a $SO(3)$--covariant polynomial (which is unique on the sphere $\BS^2$ up to a scalar constant factor, see Appendix \refSubsecEquivariantMaps).
These $Y_l$ are given by (real valued) spherical harmonics of degree $l$.
We now proceed again in two stages:
\begin{enumerate}
  \item Evaluate spherical harmonics \footnote{To be precise, we multiply the spherical harmonics with radial basis functions. We use exponent $2k$ here to have only
    polynomial functions. In practice, we would rather use different, decaying functions,
    this is treated as ``case ii'' in general as one of the variations, see Appendix 
    \refSecVariants.}
    :\\
    $\sum_{\vv r\in S_\gamma} |\vv r|^{2k} Y_l(\vv r)$
    for all colors $\gamma$ and $k=0,1,2,...$ and $l=0,1,2,...$ \\
    These are covariant \emph{fundamental features} (i.e.~of order~$1$) with values in $\irred{l}$.
  \item Iterate for $d=1,2,...$: Compute Clebsch--Gordan operations \footnote{These project
    the tensor product of two representations to an irreducible component, see e.g. \cite{Unke2024e3x}.}
    $\calH^{l_1} \otimes \calH^{l_2}\arrow \calH^{l_3}$ for $|l_1-l_2| \leq l_3 \leq l_1+l_2$, where the feature in $\calH^{l_1}$ is a fundamental feature, and the feature in $\calH^{l_2}$ is of order $d$. This gives covariant features of order $d+1$.
\end{enumerate}
Clearly, this construction appears to be somewhat special, so we may ask whether it actually gives ``enough'' invariants (i.e.~achieves completeness).
This is in fact true in a very strong sense: Our {\bf Algebraic Completeness Theorem} (Theorem \refTheoremAlgCompleteness) says that {\em all}  invariant / covariant
PPSDs can be obtained as a linear combination of them; in a sense this is just the isotypical decomposition of 
the space of all PPSDs (see Appendix \refSecCovariantComplete).
\smallspace
While the above strategy to construct invariant / covariant functions
has been used e.g. in \cite{Thomas2018TensorFN,anderson2019cormorant,
willatt2019atom,Nigam2020RecursiveEA, Nigam2022,Batatia2022,musaelian2023learning}, our novel completeness theorems 
show that this avoids the potential incompleteness problem pointed out in
\cite{pozdnyakov2020incompleteness}.  In fact, by our algebraic completeness theorem we get {\em all} polynomial covariant functions,
and by the topological completeness theorem those are {\em sufficient} to approximate any continuous covariant function. We also get an algebraic completeness theorem for features constructed from
tensor products and contractions as in \cite{shapeev2016moment}, see 
{\bf Theorem~\refTheoremInvariantTheory}; this is based on classical invariant theory.
\smallspace
%-----------------------------------------------------
{\bf Computational bottleneck:}
We now turn to a particular {\em efficient} variant of our construction. 
Since invariant PPSDs of order $<4$ are 
not sufficient for distinguishing all $SO(3)$--equivalence classes, we need to construct covariants of higher body orders, i.e.~in the above procedure we need to use the Clebsch--Gordan 
products. Note that their computational cost
is independent of the number of points, and is linear in the number of products, but 
scales as $O(l^6)$ when we take the tensor product of two representations of the form
$\irred{0}\oplus...\oplus\irred{l}$. But with unrestricted number of points in our configuration, we cannot bound the $l$, even if we are just considering 
configurations on $\BS^2\subset\BR^3$: Using $Y_l:\BS^2\arrow \irred{l}$ only for $l=0,1,...,L-1$ yields a $|\calC|\cdot L^2$--dimensional vector space of fundamental features $\sum_{\vv r\in S_\gamma} Y_l(\vv r)$
(and all PPSDs are polynomials in the fundamental features). So this could only describe a configuration space of 
a dimension $\leq |\calC|\cdot L^2$, not the $\infty$--dimensional space of configurations on $\BS^2$
with an unbounded number of points.
\smallspace
Consequently, the bottleneck for a larger number of points (necessitating using larger $l$ for constructing the fundamental features) can be determined as the $O(l^6)$ Clebsch--Gordan operation.
We will now propose how to construct a subset of local descriptors that alternatively to 
 Clebsch--Gordan operations relies only on 
matrix-matrix multiplication. This procedure scales as only $O(l^3)$ for 
bilinear operations on two representations of the form
$\irred{0}\oplus...\oplus\irred{l}$. 
A similar speedup was published recently in \cite{Luo2024}, replacing the Clebsch--Gordan operation 
by the multiplication of functions. However, since multiplying functions (instead of matrices) is commutative, this does not reproduce the anti--commutative part of the Clebsch--Gordan operations. Therefore 
the construction  in \cite{Luo2024} does not have the full expressivity desired and would not satisfy our Algebraic Completeness Theorem or Theorem \ref{theorem:MatMultComplete} below. In particular, since commutative products cannot produce
pseudo--tensors, its invariants could not distinguish configurations from their mirror images.
\smallspace
%-----------------------------------------------------
{\bf Matrix Construction:} 
Our key idea for removing the computational bottleneck is to apply the Clebsch--Gordan relation
\begin{equation}
   \irred{a} \otimes \irred{b}
   \simeq
   \irred{|a-b|} \oplus \irred{|a-b|+1} \oplus... \oplus \irred{a+b}
   \label{eq:Clebsch_Gordan}
\end{equation}
``backwards'' to efficiently encode a collection of features in
$\calH^{(|a-b|)} \oplus ... \oplus \calH^{(a+b)}$ as a $(2a+1)\times (2b+1)$ matrix in
\[
  Lin(\irred{a},\irred{b})
  \simeq {\irred{a}}^* \otimes \irred{b}
  \simeq \irred{a} \otimes \irred{b}
\]
(see Appendix \refSubsecDualTensorHom) and then the matrix multiplication is a covariant map of representations
\[
    Lin(\irred{a},\irred{b}) \times Lin(\irred{b},\irred{c}) \arrow Lin(\irred{a},\irred{c}).
\]
With Schur's Lemma one can show that it can be expressed as a linear combination of Clebsch--Gordan operations, so unless some coefficients are zero, we can expect this operation to be as useful as the Clebsch--Gordan operations for constructing covariant features 
of higher body order. This is indeed the case and to formulate the
corresponding theorem, we define the involved features:\\
Let $\iota_{a,b,l}: \irred{l}\arrow Mat_{2b+1,2a+1}$ be the embedding given by \eqref{eq:Clebsch_Gordan}
%[for some fixed choice of multiplicative constants]
%of $\irred{l}$ into the representation given by $(2l+1)\times(2l+1)$ matrices 
and define the ``matrix moments''
\begin{equation}
   M_{a,b,l}(\colori) := \iota_{a,b,l} \sum_{\vv r \in S_\colori} Y_l(\vv r)
   \label{eq:MatrixMoments}
\end{equation}
which are $(2b+1)\times(2a+1)$ matrices 
(see Appendix \refSecMatrixMoments\ for some examples for explicit formulas).
Then the result of the multiplication
\begin{equation}
   M_{a_{m-1}, a_m, l_m}(\gamma_m)\cdot ... 
   \cdot M_{a_1, a_2, l_2}(\gamma_2) \cdot M_{0,a_1, l_1}(\gamma_1)
   \label{eq:matrix_invariants}
\end{equation}
\hskip 23mm
\includegraphics[width=0.66\textwidth]{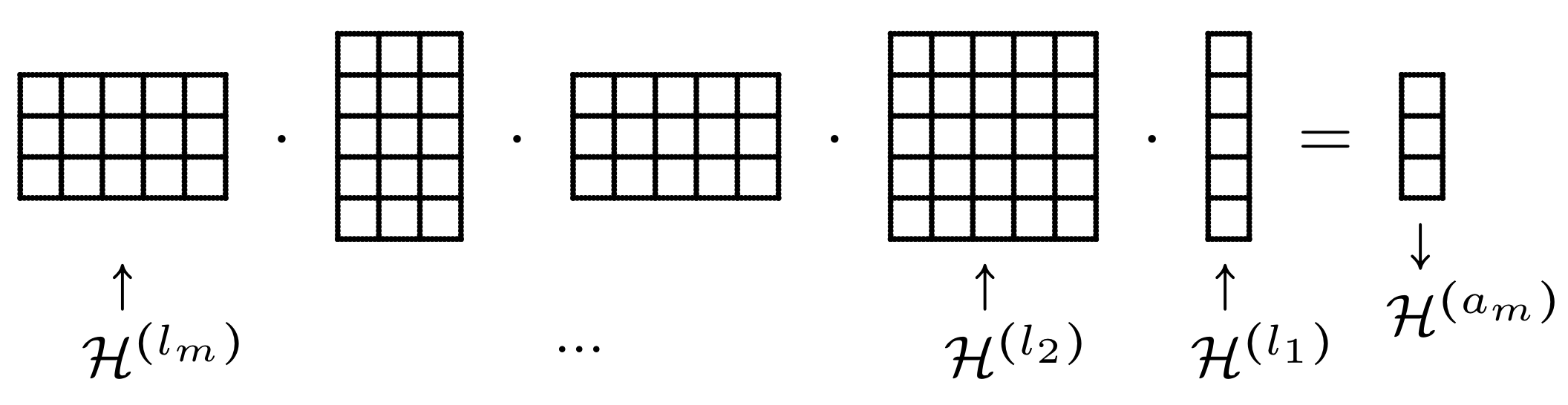}
\\
\noindent
with $ l_1=a_1$ and 
$
 |a_1-a_2| \leq l_2 \leq a_1+a_2,...,|a_{m-1}-a_m| \leq l_m \leq a_{m-1}+a_m
$
are covariant $a_m\times 1$ matrices, i.e. vectors in $\irred{a_m}$, given by polynomials of degree $l_1+...+l_m$, and computing them takes $O(m\cdot a^3)$ steps for an upper bound $a \geq a_i$.
\setcounter{theorem}{4} 
\begin{theorem}[Algebraic Completeness for features from matrix multiplication]\label{theorem:MatMultComplete}\phantom{.}\\
  Any $SO(3)$--covariant feature with values in a $\irred{l}$ can be written as a linear combination
  of the $SO(3)$--covariants \eqref{eq:matrix_invariants} with $a_m=l$.\\
  For $O(3)$--covariants it is enough to use those features given by \eqref{eq:matrix_invariants} 
  with the appropriate parity of $l_1+...+l_m$.
\end{theorem}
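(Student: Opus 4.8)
\emph{Proof strategy.} The plan is to reduce the statement to the Algebraic Completeness Theorem (Theorem~\refTheoremAlgCompleteness) by checking that the matrix moments \eqref{eq:MatrixMoments} are rigged so that one matrix--vector multiplication performs exactly one Clebsch--Gordan projection. By Theorem~\refTheoremAlgCompleteness, every $SO(3)$--covariant PPSD with values in $\irred{l}$ is a linear combination of the features produced by the two--stage construction above: starting from fundamental features $u_i:=\sum_{\vv r\in S_{\gamma_i}}|\vv r|^{2k_i}Y_{l_i}(\vv r)\in\irred{l_i}$ for $i=1,\dots,m$, one sets $b_1:=l_1$, $w^{(1)}:=u_1$, and for $i=2,\dots,m$ puts $w^{(i)}:=\pi^{b_i}_{l_i,b_{i-1}}(u_i\otimes w^{(i-1)})\in\irred{b_i}$, where $\pi^{b_i}_{l_i,b_{i-1}}:\irred{l_i}\otimes\irred{b_{i-1}}\arrow\irred{b_i}$ is the Clebsch--Gordan projection and $|l_i-b_{i-1}|\le b_i\le l_i+b_{i-1}$. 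So it suffices to write each such ``caterpillar'' feature $w^{(m)}$ with $b_m=l$ as a nonzero multiple of \eqref{eq:matrix_invariants} with $a_m=l$. Matching parameters, I set $a_i:=b_i$, $a_0:=0$ (forcing $l_1=a_1$), and keep $\gamma_i,l_i,k_i$; the constraint $|a_{i-1}-a_i|\le l_i\le a_{i-1}+a_i$ of \eqref{eq:matrix_invariants} is, by symmetry of the triangle inequality, the same as $|l_i-b_{i-1}|\le b_i\le l_i+b_{i-1}$.

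The core step is the identity ``one multiplication $=$ one projection'': for $|a-b|\le l\le a+b$ the bilinear map $\irred{l}\otimes\irred{a}\arrow\irred{b}$ sending $(w,v)$ to $\iota_{a,b,l}(w)\cdot v$ (embed $w$ into $Lin(\irred{a},\irred{b})$ via \eqref{eq:Clebsch_Gordan}, then apply the matrix to $v$) equals $c\cdot\pi^{b}_{l,a}$ for some constant $c\neq 0$. It is $SO(3)$--equivariant as a composite of equivariant maps, and $\Hom_{SO(3)}(\irred{l}\otimes\irred{a},\irred{b})$ is one--dimensional by Schur's Lemma, so the map is a scalar multiple of $\pi^{b}_{l,a}$; it is nonzero because $\iota_{a,b,l}$ embeds $\irred{l}$ isomorphically onto the $\irred{l}$--summand of $Lin(\irred{a},\irred{b})$ while the evaluation pairing $Lin(\irred{a},\irred{b})\otimes\irred{a}\arrow\irred{b}$ annihilates no nonzero operator. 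Iterating this down the caterpillar --- using the canonical isomorphism $\irred{a_1}\cong Lin(\irred{0},\irred{a_1})$ for the initial factor $M_{0,a_1,l_1}(\gamma_1)$, taking the matrix moments with their radial weights $\iota_{a,b,l}\sum_{\vv r\in S_\gamma}|\vv r|^{2k}Y_l(\vv r)$ (which does not change the covariance type), and absorbing any reordering sign into $c$ --- gives $w^{(m)}=c\,M_{a_{m-1},a_m,l_m}(\gamma_m)\cdots M_{0,a_1,l_1}(\gamma_1)$ with $a_m=l$ and $c\neq 0$. Conversely each product \eqref{eq:matrix_invariants} is a matrix of PPSDs and is manifestly $SO(3)$--covariant, so the span of the features \eqref{eq:matrix_invariants} with $a_m=l$ is precisely the space of $SO(3)$--covariant PPSDs with values in $\irred{l}$, which proves the first claim.

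For the $O(3)$ statement I would track parity. Let $\tau$ be the involution ``replace the configuration by its point reflection through the origin''; it commutes with the $SO(3)$--action. Since inverting the configuration multiplies each $u_i$ by $(-1)^{l_i}$ (the parity of $Y_{l_i}$) while the embeddings $\iota$, the Clebsch--Gordan projections and the matrix multiplications are fixed linear maps, each feature \eqref{eq:matrix_invariants} is a $\tau$--eigenvector with eigenvalue $(-1)^{l_1+\cdots+l_m}$; equivalently it is an $O(3)$--covariant with values in $\irred{l}$ of parity $(-1)^{l_1+\cdots+l_m}$. An $O(3)$--covariant PPSD $f$ of parity $\varepsilon$ with values in $\irred{l}$ is, in particular, a $\tau$--eigenvector with eigenvalue $\varepsilon$ (by Schur, $-I$ acts on the irrep as $\pm\mathrm{id}$); regarding it as $SO(3)$--covariant gives an expansion $f=\sum_j c_j F_j$ with the $F_j$ of the form \eqref{eq:matrix_invariants} and $a_m=l$, and applying $\tfrac12(\mathrm{id}+\varepsilon\tau)$ kills every $F_j$ whose degree $l_1+\cdots+l_m$ has the wrong parity, leaving $f$ expressed through features \eqref{eq:matrix_invariants} of the appropriate parity only.

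The main obstacle is the nonvanishing of $c$ in the core step --- verifying that encoding a fundamental feature in the \emph{correct} Clebsch--Gordan summand of $Lin(\irred{a},\irred{b})$ and then multiplying really reproduces the intended channel rather than killing it; the remaining work is bookkeeping about index ranges and parities. Note that because the caterpillar normal form supplied by Theorem~\refTheoremAlgCompleteness\ contracts one fresh fundamental feature at a time, every factor of \eqref{eq:matrix_invariants} lives in a single isotypic component, so we never need the (separately true, and also a consequence of Schur's Lemma) statement that the full matrix--multiplication map $Lin(\irred{a},\irred{b})\times Lin(\irred{b},\irred{c})\arrow Lin(\irred{a},\irred{c})$ has all its Clebsch--Gordan coefficients nonzero.
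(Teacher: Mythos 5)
Your proposal is correct and follows the same overall route as the paper's Appendix \refSecProofTheoremMatMult: reduce to Theorem \refTheoremAlgCompleteness, use Schur's lemma to identify the matrix operation with a scalar multiple of the Clebsch--Gordan projection, induct on the length of the chain, and handle $O(3)$ by restricting to the right parity. Where you genuinely diverge is in the nonvanishing step. The paper spends a full subsection passing to complex spherical harmonics, writing the highest-weight vector of the $\irred{a}$--copy inside $Lin(\irred{l},\irred{l})$ in a weight basis, and multiplying out two such highest-weight vectors to show that the product has all nonzero entries. You bypass this with the observation you flag at the end: since the rightmost factor $M_{0,a_1,l_1}$ is a column vector, every partial product is again a vector, so the only bilinear map whose nonvanishing matters is $\irred{l}\otimes\irred{a}\arrow\irred{b}$, $(w,v)\mapsto\iota_{a,b,l}(w)\cdot v$, and this is nonzero simply because $\iota_{a,b,l}$ is injective and a nonzero linear map does not annihilate every vector. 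This is more elementary than the paper's highest-weight computation, and in fact it supplies a step the paper leaves implicit (the explicit weight-basis calculation there is carried out for square matrix--matrix products with a common degree $l$, not for the rectangular matrix--vector products $M_{a,b,l}\cdot v$ that the final induction actually invokes; your argument covers exactly what is needed). The tradeoff is that the paper's computation gives additional quantitative information relevant to the matrix-of-matrices construction, where one does multiply full matrices without extracting vectors after each step, but that extra generality is not required for Theorem \refTheoremMatMultComplete\ itself. Your $\tau$--eigenvector treatment of the $O(3)$ case spells out, with the projection $\tfrac12(\mathrm{id}+\varepsilon\tau)$, what the paper states in one sentence, and is correct.
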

\noindent
For the proof see Appendix \refSecProofTheoremMatMult.
\smallspace
{\bf Learning a linear combination:}
While computing one given invariant of the form \eqref{eq:matrix_invariants} would not be more efficient than 
with Clebsch--Gordan operations (as it would waste whole matrices for encoding only one feature), for applications in
Machine Learning we always compute with linear combinations of features (with learnable coefficients), and both the
Clebsch--Gordan operation and Matrix Multiplication define maps
\[
   \left(\irred{0}\oplus...\oplus\irred{l}\right) \otimes
   \left(\irred{0}\oplus...\oplus\irred{l}\right) \ \arrow \ 
   \irred{0}\oplus...\oplus\irred{l}
\]
which are used to build up different linear combinations of covariants of higher body
order. In the Clebsch--Gordan case we also can add to the learnable coefficients of the input features
further learnable parameters that give different weights to the individual parts 
$\irred{l_1}\otimes\irred{l_2}\arrow\irred{l}$
that contribute to the same $\irred{l}$ in the output, whereas in the Matrix Multiplication case these
mixture coefficients are fixed (but depend on the shape of the matrices involved).
However, Theorem \ref{theorem:MatMultComplete} shows that using different shapes of matrices is already sufficient to generate all possible covariants, so both methods can
in principle learn the same functions.
\smallspace
{\bf Matrix of Matrices construction for efficiency:} For practical applications it is important how to organize the matrix multiplications
efficiently. In particular when using GPUs / TPUs with hardware support
for matrix multiplication, it is much more favorable to compute with a few large matrices
than with many small matrices.
Therefore we will use linear combinations of $\iota_{a,b,l}$ for $l=|a-b|,...,a+b$
to fill a $(2b+1) \times (2a+1)$ matrix, and pack $r \times r$ small matrices for $a,b$ in 
$\{l_1, l_2,...,l_r\}$ into a large square matrix of side length $(2l_1 +1) + ... + (2l_r+1)$.\\
Then $k-1$ such matrices are multiplied to get a matrix built out of covariants
of body order $k$.
\smallspace
\includegraphics[width=0.7\textwidth]{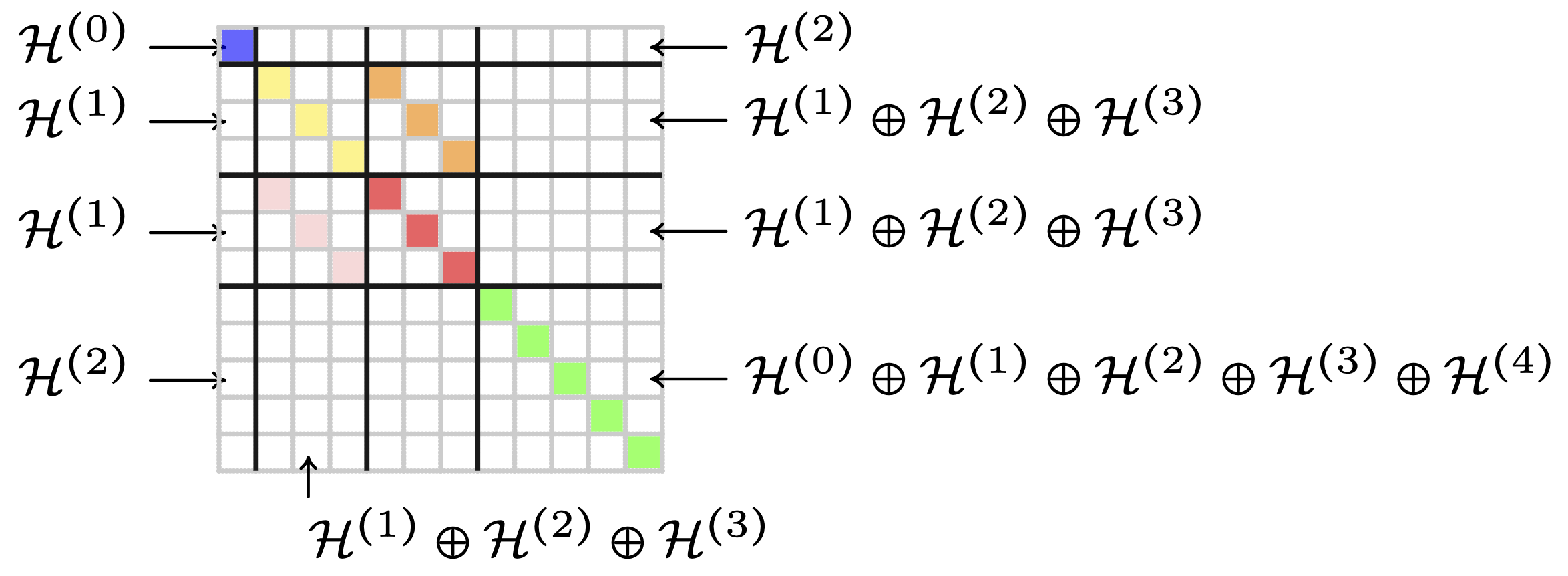}
\smallspace
This matrix can then be applied to $n_1$ column vectors
from $\irred{l_1}\oplus \irred{l_2} \oplus ... \oplus \irred{l_r}$ 
to get covariant vectors of body order $k+1$.
\smallspace
\hskip 8mm
\includegraphics[width=0.6\textwidth]{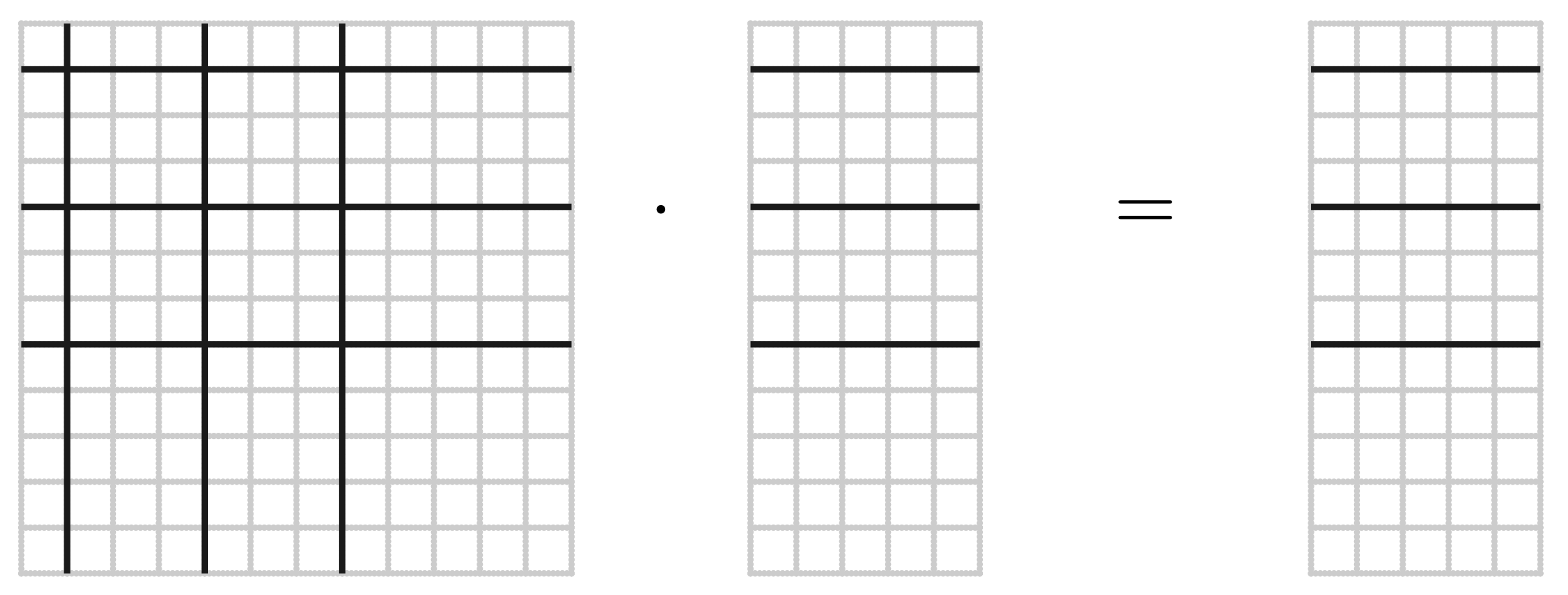}
% \raisebox{-15mm}{
% \begin{pspicture}(24mm,24mm)
%   \addtopsstyle{gridstyle}{gridlabels=0pt}
%   \psgrid[unit=0.2, subgriddiv=1, griddots=0,
%           style=gridstyle,subgridcolor=black!10,gridcolor=black!20]
%   % --- lines ---
%   \psline[linestyle=solid, linecolor=black!90](0,2.2)(2.4,2.2)
%   \psline[linestyle=solid, linecolor=black!90](0,1.6)(2.4,1.6)
%   \psline[linestyle=solid, linecolor=black!90](0,1.0)(2.4,1.0)
%   \psline[linestyle=solid, linecolor=black!90](0.2,0)(0.2,2.4)
%   \psline[linestyle=solid, linecolor=black!90](0.8,0)(0.8,2.4)
%   \psline[linestyle=solid, linecolor=black!90](1.4,0)(1.4,2.4)
% \end{pspicture}
% }
% \ $\cdot$ \ 
% \raisebox{-15mm}{
% \begin{pspicture}(10mm,24mm)
%   \addtopsstyle{gridstyle}{gridlabels=0pt}
%   \psgrid[unit=0.2, subgriddiv=1, griddots=0,
%           style=gridstyle,subgridcolor=black!10,gridcolor=black!20]
%   % --- lines ---
%   \psline[linestyle=solid, linecolor=black!90](0,2.2)(1.0,2.2)
%   \psline[linestyle=solid, linecolor=black!90](0,1.6)(1.0,1.6)
%   \psline[linestyle=solid, linecolor=black!90](0,1.0)(1.0,1.0)
% \end{pspicture}
% }
% \quad = \quad
% \raisebox{-15mm}{
% \begin{pspicture}(10mm,24mm)
%   \addtopsstyle{gridstyle}{gridlabels=0pt}
%   \psgrid[unit=0.2, subgriddiv=1, griddots=0,
%           style=gridstyle,subgridcolor=black!10,gridcolor=black!20]
%   % --- lines ---
%   \psline[linestyle=solid, linecolor=black!90](0,2.2)(1.0,2.2)
%   \psline[linestyle=solid, linecolor=black!90](0,1.6)(1.0,1.6)
%   \psline[linestyle=solid, linecolor=black!90](0,1.0)(1.0,1.0)
% \end{pspicture}
% }
\smallspace
If the end result should be scalars, we can take scalar
products of the $n_1$ column vectors in $\irred{l}$ with $n_2$ new
covariants in $\irred{l}$ to obtain
$r\cdot n_1\cdot n_2$ invariants of body order $k+2$;
also the traces of the 
square submatrices of the matrix product give invariants of body order $k$ (marked in
color in the above example diagram).
\smallspace
{\bf Full architectures:}
The proposed matrix products approach can be readily used to {\em replace} Clebsch--Gordan operations across all possible learning architectures giving rise to significant efficiency gains. 
\\
As a proof of concept, in the following experiments we will focus on the simplest such architecture which only
computes a {\em linear combination} of many such invariants, see Appendix \refSecAlgorithm\ for code and more details (e.g. in 
practice we may want to shift the matrices by the identity to obtain a similar effect to skip connections in ResNets.)
\\
Extensions of this minimal architecture could use a deep neural network instead of a linear combination
of invariants, or can use nonlinear activation functions to modify the matrices obtained in intermediate
steps. In architectures using several layers of Clebsch--Gordan operations, such  activation
functions are restricted to functions of the scalar channel, since ``you cannot apply a
transcendental function to a vector''. Maybe surprisingly, in our matrix formulation this actually
becomes possible: Applying any analytic function to our $(2l+1)\times(2l+1)$
matrices (not element wise, but e.g. implemented as Taylor series for matrices) is also a covariant operation! Notably, Matrix exponentiation has been suggested as an efficient 
and useful operation in Neural Networks in \cite{Fischbacher2020}.
%-----------------------------------------------------
\section*{Experimental results}
%-----------------------------------------------------
Our methods yield complete representations and can thus indeed distinguish (molecular) configurations that require higher order features (see \cite{Pozdnyakov2020,Pozdnyakov2022}). This is demonstrated experimentally in Fig.~\ref{fig:overview}B/D.
\smallspace
In Fig.~\ref{fig:overview}C we used the library E3x (\cite{Unke2024e3x}), which allows switching between full tensor layers using the
Clebsch--Gordan operation and ``Fused Tensor Layers'' for which we implemented matrix multiplication instead of the Clebsch--Gordan operations. The plot shows the inference 
run time measured on CPUs for computing a function defined by two Tensor layers, depending
on the the setting of ``max degree'' and whether full or fused layers were used.
\smallspace
In another synthetic experiment, we learn an invariant
polynomial of degree 10 with
Clebsch--Gordan operations and with our matrix multiplication framework, and plot the training 
curves averaged over 10 data sets.
\\ \noindent
\includegraphics[width=0.4\textwidth]{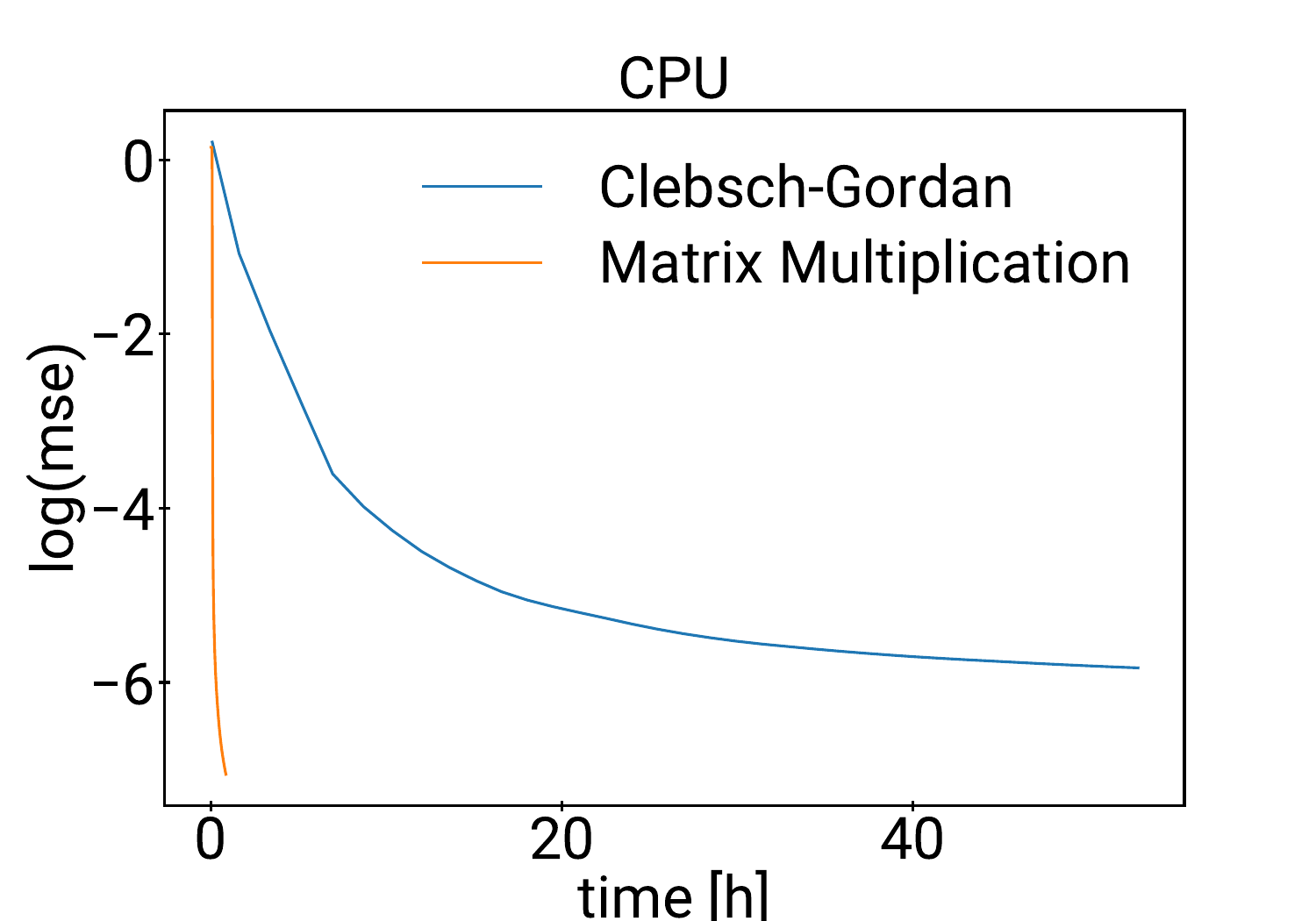}
\hskip-2mm
\includegraphics[width=0.4\textwidth]{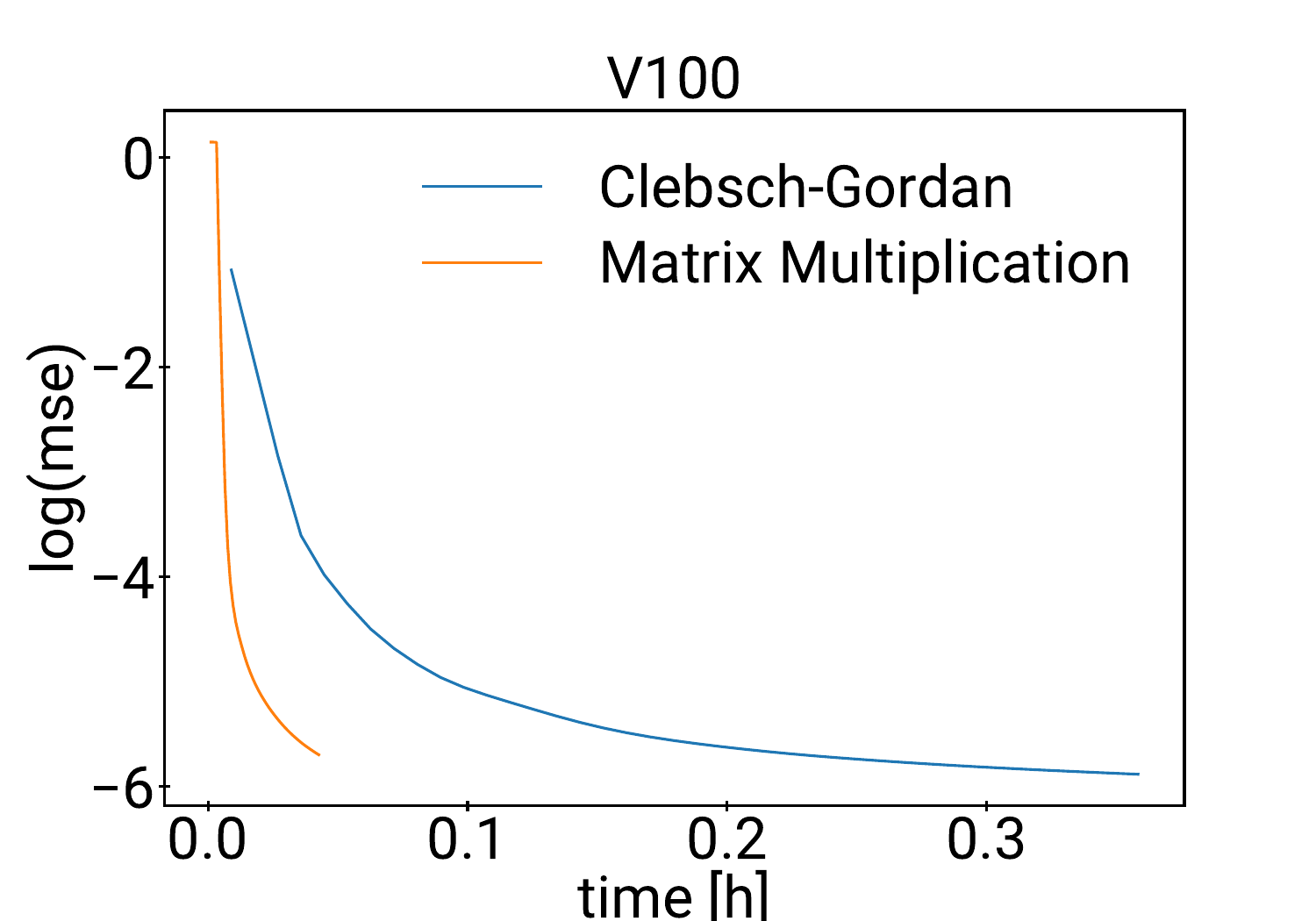}
\smallspace
On CPUs, almost all the time is spent in the Clebsch--Gordan operations, and replacing them
by the matrix multiplication method makes the training faster by a factor over 100.
When using GPUs, the speedup is not quite as dramatic, but still a factor of 8.4 on V100. (Details in Appendix~\refSecExperimentDetails)
\smallspace
As a first demonstration of our framework for atomistic simulations we show that with the simple architecture that \emph{linearly combines} the resulting polynomial invariants, we 
can learn forces with local features alone that, interestingly, can match the accuracy of other more complex  methods which use
several message passing / self attention steps with nonlinear networks (So3krates, \cite{frank2022so3krates}) or global kernel methods (sGDML, \cite{chmiela2019sgdml}). Specifically, our experiments (see Table \ref{tab:MD22}) show that accuracies align, notably, independent of the molecule sizes, see Appendix \refSecExperimentDetails\ for details. 
Since our model is just a linear combination of features of known body order and $L$,
in future studies, one could use such models to investigate body order expansions or study the influence of larger $L$s in detail.
\smallspace
Appropriately representing chemical structure and atomic environments in molecules and materials is an important prerequisite for accurate machine learning models in chemistry.
Ideal descriptors are unique, computationally efficient, and covariant.
In this work we have established an algebraic framework that enables a practical construction of provably {\em complete} system(s) of features with these desired properties that holds for any 3D point configurations. Apart from the abstract theoretical contribution of this work, we show that our construction can be readily implemented as matrix-matrix multiplication -- reducing computational complexity from $O(l^6)$ to $O(l^3)$ compared to Clebsch-Gordan operations. This yields large efficiency gains while maintaining the performance level of standard machine learning models for atomistic simulation.
\smallspace
In summary, our theoretically well founded unique, covariant, and efficient
%equivariant/invariant 
descriptors provide a versatile basis for future atomistic modeling and potentially other applications of machine learning on point configurations. 
\begin{table}[ht]
    \centering
    \begin{tabular}{l|r|r||c|c|c}
    molecule & atoms & samples & sGDML & ours & So3krates\\
    \hline
    Ac-Ala3-NHMe      &   42 &   6000 &  0.80  &   0.47   &  0.24 \\
    DHA               &   53 &   8000 &  0.75  &   0.42   &  0.24 \\
    AT-AT             &   60 &   3000 &  0.69  &   0.43   &  0.22 \\
    stachyose         &   85 &   8000 &  0.67  &   0.33   &  0.44 \\
    AT-AT-CG-CG       &  118 &   2000 &  0.70  &   0.48   &  0.33 \\
    buckyball catcher &  148 &    600 &  0.68  &   0.27   &  0.24 \\
    Nanotubes         &  370 &    800 &  0.52  &   0.77   &  0.73 \\
    \end{tabular}
    \caption{Comparison of force accuracies [kcal/mol/\AA] for our simple linear combination of polynomial features with two more sophisticated models.}
    \label{tab:MD22}
\end{table}
\smallspace
\section*{Acknowledgement}
The authors acknowledge valuable discussions with Romuald Elie and Zhengdao Chen.\\
Correspondence to HM (hartmutm@google.com) and KRM (klausrobert@google.com).
\newcommand{\lalgso}{{\mathfrak{so}}}      % Lie algebra so
\newcommand{\refTheoremMatMultComplete}{5}
\newcommand{\eqrefMatrixInvariants}{(3)}
\newcommand{\refFigOverview}{1}
\newcommand{\refTabMD}{I}
\appendix
% ==================================================================================
\setcounter{theorem}{0} 
\noindent{\huge \bf Appendix}
%--------------------------------------------
\section*{Organization of the appendix}
%--------------------------------------------
\noindent
{\footnotesize(Headlines in bold are hyperrefs to the corresponding section.)}
\smallspace
\hyperref[sec:OverviewDiagram]{\bf{A. Overview diagram statements and proofs}}\\
Page \pageref{sec:OverviewDiagram}: We give an overview diagram that explains the main statements and their
logical connection. This is meant to accompany the later sections, it is not intended to be 
fully read and understood on its own.
\smallspace
\hyperref[sec:Variants]{\bf{B. Variants of \texorpdfstring{$G, X$}{G, X}, and function spaces}}\\
Page \pageref{sec:Variants}:
Most statements will be true in different settings that we consider, we explain these settings here.
The most important distinction is between the polynomial case (which is used in the main part)
and the practically more relevant case of combining polynomial functions of the angular part
with more general radial functions; some first relations between these two cases are explained.
\smallspace 
\hyperref[sec:FiniteConditions]{\bf{C. Weaker conditions on radial functions}}\\
Page \pageref{sec:FiniteConditions}:
We prove two ``finitary'' consequences of the two conditions on radial basis functions.
In many cases one can directly use these consequences instead of the original conditions
to prove statements about configurations with fixed numbers of points.
\smallspace
\hyperref[sec:CompleteTheorems]{\bf{D. General theorems}}\\
Page \pageref{sec:CompleteTheorems}:
We give exact formulations of the general theorems (Theorems 1 -- 4) in the different 
settings we consider.
\smallspace
\hyperref[sec:Algorithm]{\bf{E. Pseudo code for algorithm}}\\
Page \pageref{sec:Algorithm}:
We detail the simplest algorithm for parameterized invariants using matrix multiplication
and the ``matrix of matrices'' approach.
\smallspace
\hyperref[sec:Incompleteness2point]{\bf{F. Incompleteness of 3 body functions}}\\
Page \pageref{sec:Incompleteness2point}:
We illustrate with 2-dimensional examples why 3 body functions are not enough
to uniquely characterize point configurations on the circle.
\smallspace
\hyperref[sec:RepresentationTheory]{\bf{G. Some background from representation theory}}\\
Page \pageref{sec:RepresentationTheory}:
Some specific points from the representation theory of compact groups that are used
in the following.
\smallspace
\hyperref[sec:PPSDs]{\bf{H. Polynomial point set descriptors (PPSDs)}}\\
Page \pageref{sec:PPSDs}:
We define PPSDs, compare them to functions on point configurations of fixed size, 
and prove the uniqueness of a Normal Form, which is used in the proof of Theorem~\ref{theorem:InvariantTheory}.
\smallspace
\hyperref[sec:ProofTopologicalCompleteness]{\bf{I. Proof of Theorem~\ref{theorem:TopologicalCompleteness}}}\\
Page \pageref{sec:ProofTopologicalCompleteness}:
Topological completeness: Using {\em all} invariant (/covariant) polynomial point set descriptors (with values
in a given representation of $G$) gives ``complete'' sets of descriptors. These are mainly topological 
arguments (embedding into finite dimensional representation, separation of compact orbits, approximation
by polynomials on compact subsets).
This will be complemented in later sections by algebraic completeness theorems that say that specific
constructions span the relevant subspaces of polynomials.
\smallspace
\hyperref[sec:ProofFiniteFeatures]{\bf{J. Proof of Theorem~\ref{theorem:FiniteFeatures}}}\\
Page \pageref{sec:ProofFiniteFeatures}:
Finiteness: Assuming polynomial functions (case i), or analytic radial basis functions and
points in a compact region $X$ (case 2ii), we prove
that $5n-6$ features are enough to distinguish $G$--equivalence classes of configurations of $n$ points in $\BR^3$. We also give counterexamples that this statement fails in case 3ii (no
compactness assumption) or when we only require the radial basis functions to be smooth instead of analytic.
From weaker assumptions we can at least show that finitely many features are enough.
\smallspace
\hyperref[sec:ProofAlgCompleteness]{\bf{K. Proof of Theorem~\ref{theorem:AlgCompleteness}}}\\
Page \pageref{sec:ProofAlgCompleteness}:
Algebraic completeness for features based on spherical harmonics and Clebsch--Gordan operations.
This is essentially the isotypical decomposition of the $G$--representation given by the PPSDs.
\smallspace
\hyperref[sec:ProofInvariantTheory]{\bf{L. Proof of Theorem~\ref{theorem:InvariantTheory}}}\\
Page \pageref{sec:ProofInvariantTheory}:
Algebraic completeness for features based on tensor moments and tensor contractions.
This is added for completeness (and because the setting of PPSDs and their Normal Form allows an easy
and direct reduction to the first fundamental theorem of invariant theory); but this theorem is not used
in the rest of the paper.
\smallspace
\hyperref[sec:MatrixMomentEx]{\bf{M. Matrix moments examples}}\\
Page \pageref{sec:MatrixMomentEx}:
The $3\times 3$ and $5\times 5$ matrix moments given as concrete matrices of polynomials.
\smallspace
\hyperref[sec:ProofTheoremMatMult]{\bf{N. Proof of Theorem \refTheoremMatMultComplete}}\\
Page \pageref{sec:ProofTheoremMatMult}:
Algebraic completeness for features based on spherical harmonics and matrix multiplication.
\smallspace
\hyperref[sec:JaxMatrixMultiplication]{\bf{O. JAX implementation of matrix multiplication}}\\
Page \pageref{sec:JaxMatrixMultiplication}:
A peculiar run time observation for large numbers of small matrix multiplication on accelerators with
hardware support for matrix multiplication. 
This was a major motivation for the ``matrix of matrices'' construction.
\smallspace
\hyperref[sec:ExperimentDetails]{\bf{P. Details for experiments}}\\
Page \pageref{sec:ExperimentDetails}:
Description of the numerical experiments on synthetic data and quantum chemistry data.
%--------------------------------------------
\section{Overview diagram statements and proofs}
\label{sec:OverviewDiagram}
%--------------------------------------------
% \input{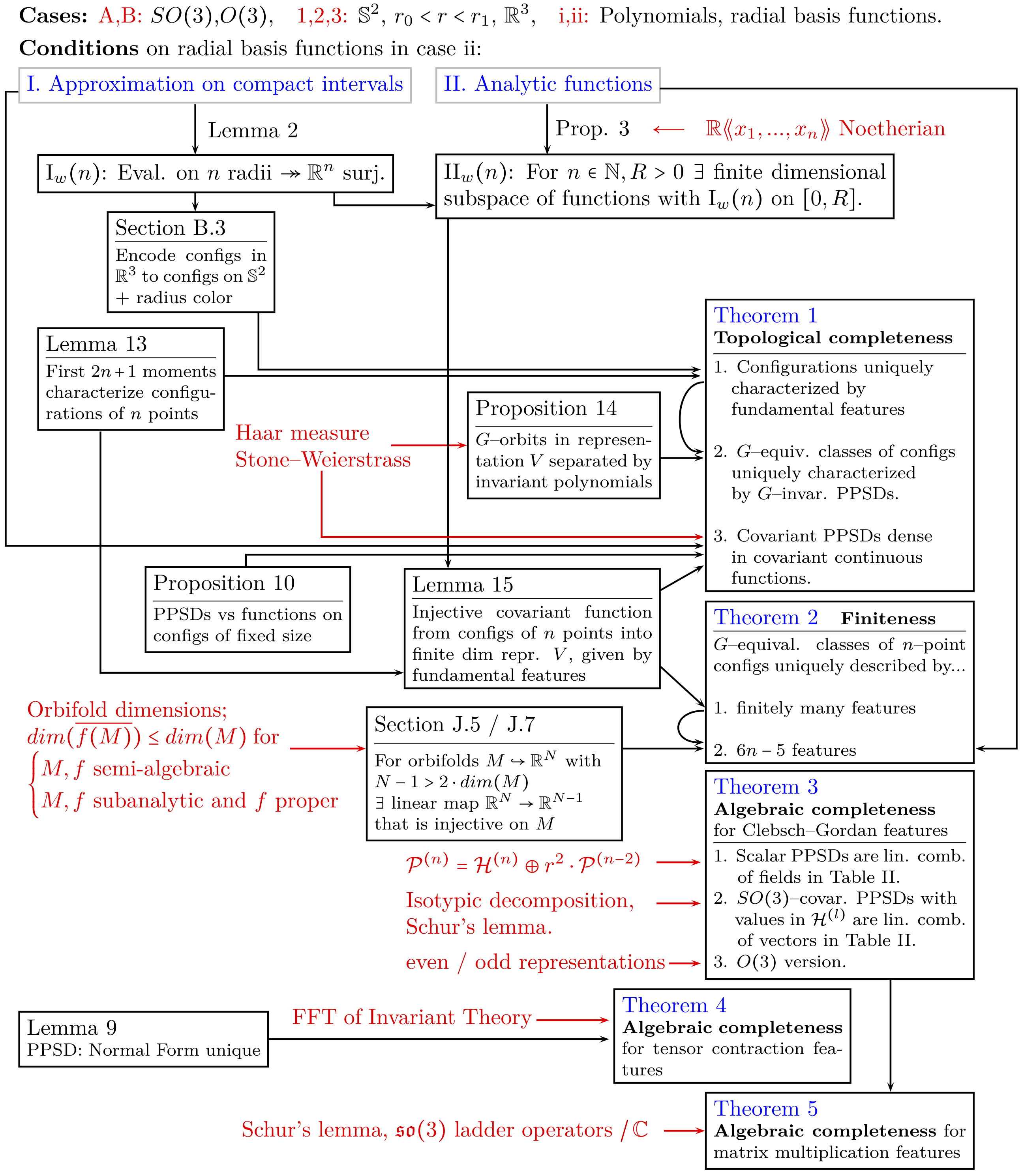}
\includegraphics[width=\textwidth]{OverviewDiagram.png}
\smallspace
The boxes refer to statements formulated and proven in this appendix, the red
text snippets refer to statements that can be found in the references.
Arrows show which statements are used in the proof of another statement. (Crossings
of lines have no particular significance, the arrows should be considered as independent.)
\smallspace
{\bf Disclaimer:} This diagram is meant as a slightly simplified overview
to accompany the actual definitions, theorems, and proofs. The short formulations
should remind the reader which statement is meant, but not give a complete
formulation of the prerequisites and detailed statements.
\smallspace
Explanations for some specific parts of the diagram:\\
\begin{itemize}
  \item Theorem \ref{theorem:TopologicalCompleteness}, part 1:\\
    Lemma \ref{lemma:FundFeatDetConfig} is enough in case i (polynomials), but we need the arguments from section \ref{subsec:VariantsDistinguishingCapabilities}
    for case ii (radial basis functions).
  \item Haar measure:\\
    This means the existence of an invariant measure on our compact
    group $G=SO(3)$ or $G=O(3)$, it is used to average polynomials over
    $G$ to get $G$--invariant (or $G$--covariant) polynomials.
  \item Lemma \ref{lemma:InjMapToVS}:\\
    In case i (polynomials) the statement as formulated in the diagram follows from Lemma \ref{lemma:FundFeatDetConfig}.\\
    The actual formulation of Lemma \ref{lemma:InjMapToVS} in Section \ref{subsec:ConfFinitelyManyFeatures} is for case ii (radial basis functions) and assumes the points in the configurations are inside some large ball with radius $R$. This allows applying II$_w(n)$, but only gives Theorem \ref{theorem:FiniteFeatures} in cases 1,2 (and it cannot give more, since Theorem \ref{theorem:FiniteFeatures} is not true in case 3ii).
  \item Theorem \ref{theorem:TopologicalCompleteness}, part 2:\\
    We could use Lemma \ref{lemma:InjMapToVS} together with Proposition \ref{prop:G_invariant_functions}, but that requires the strong condition II, or its finite version II$_w(n)$.
    Instead we use the compactness of the orbits and the weak condition I$_w(n)$, which is enough for Theorem \ref{theorem:TopologicalCompleteness} part 2.
  \item Proposition \ref{prop:NFPolynomial_unique} / Theorem \ref{theorem:TopologicalCompleteness},
    part 3:\\
    The full density statement as formulated in Appendix \ref{sec:CompleteTheorems} is requiring density only on compact subsets of the (infinite dimensional) configuration space, Proposition \ref{prop:NFPolynomial_unique} relates this to (finite dimensional) bounded pieces of the configuration space of a fixed number of points of each color.
  \item Theorem \ref{theorem:FiniteFeatures}: This is only true in cases
    1, 2i, 2ii, 3i, we give counterexamples for case 3ii.
  \item $\calP^{(n)} = \irred{n} \oplus r^2\cdot \calP^{(n-2)}$:\\
    $\calP^{(n)}$ are the homogeneous polynomials of degree $n$ in $x,y,z$,
    $\irred{n}$ is the subspace of homogeneous polynomials, 
    $r^2=x^2+y^2+z^2$.
  \item Schur's lemma:\\
    This is printed in red since it is a well known theorem of representation theory. However, we also include its proof as Lemma \ref{lemma:Schur} since we use it not (as usual) for complex representations, but for odd dimensional real representations.
  \item FFT:\\
    First Fundamental Theorem, see Appendix \ref{sec:ProofInvariantTheory}.
\end{itemize}

%--------------------------------------------
\section{Variants of \texorpdfstring{$G, X$}{G, X}, and function spaces}
\label{sec:Variants}
%--------------------------------------------
%--------------------------------------------
\subsection{Description of variants}
%--------------------------------------------
\noindent
We will distinguish the different set ups based on three criteria:\\
The symmetry group (options A/B), the point domains (options 1/2/3), and the function
spaces (i/ii). We will refer to the arising cases by combinations of these tags, e.g.
the tag ``2i'' will mean any symmetry group (i.e. options A or B), point domain 2, and function class i. Most theorems are valid for any of these cases, with slightly differing proofs.
The exception is the finiteness theorem (Theorem \ref{theorem:FiniteFeatures}), which is valid
only in the cases 1, 2, and 3i, but not in the case 3ii.
We now describe these three criteria in turn.
%--------------------------------------------
\vskip 5mm \noindent
{\bf \underline{Symmetry group $G$} }
\vskip 5mm \noindent
%--------------------------------------------
We can distinguish the symmetry groups
\begin{enumerate}
    \item[A.] G=SO(3)
    \item[B.] G=O(3)
\end{enumerate}
{\bf Usage:}\\
We would construct $O(3)$ invariants for scalars that are invariant under reflection.
If the scalars we want to approximate can distinguish between a point set and its mirror image,
then we need $SO(3)$--invariants (or if we know that it is a pseudoscalar, i.e.\ gets a factor $(-1)$
when reflected, we model it as a $O(3)$--covariant). 
More generally for covariants, $O(3)$ gives finer information, so if we have
the information how the output should change under reflection, we would use $G=O(3)$.
\smallspace
{\bf Representations:}\\
The group $O(3)$ is the direct product of $SO(3)$ and $\{Id, -Id\}$.
Therefore, irreducible representations of $\rho_{l,\sigma}: O(3)\arrow O(2l+1)$ correspond to irreducible
representations $\rho_l:SO(3)\arrow SO(2l+1)$ with an additional sign $\sigma\in \{\pm 1\}$ such that
$\rho(-Id_3) = \sigma \cdot Id_{2l+1}$. 
For $\sigma = 1$ we call the representation even, for $\sigma=-1$ odd.
%--------------------------------------------
\vskip 5mm\noindent
{\bf \underline{Point domains $X\subseteq \BR^3$} }
\vskip 5mm
%--------------------------------------------
\noindent
We distinguish three cases of subsets of $\BR^3$ from which the points are taken:
\begin{enumerate}
    \item Sphere: $X=\BS^2$
    \item Spherical shell: $X=\{\vv r \in \BR^3\,\Big|\, r_0 \leq |\vv r| \leq r_1\}$
    \item Full space: $X=\BR^3$
\end{enumerate}
{\bf Usage:}\\
In chemistry we may want to model energy contributions from each atom of a molecule. If another atom
gets too close to the central atom (that we use as origin of our coordinates), the energy approaches
infinity. Since we are usually only interested in conformations below some energy cutoff, this also
gives a lower limit $r_0$ of the distances between atoms. On the other hand, we may only use contributions
of atoms that are within a certain radius $r_1$ from the central atom, and ignore or model separately 
longer range interactions. So for this application, a domain of the form 2, i.e.
\[
    X=\{\vv r \in \BR^3\,\Big|\, r_0 \leq |\vv r| \leq r_1\}
\]
seems the most appropriate. The variant 1 ($X=\BS^2$) is mainly important for us as an intermediate
step to prove theorems about variants 2 and 3, and variant 3 is mainly to state 
general theorems without specifying $r_0, r_1$.
%--------------------------------------------
\vskip 5mm\noindent
{\bf \underline{Function spaces} }
\vskip 5mm
%--------------------------------------------
\noindent
When $X$ is not only the sphere, we also allow the function spaces:
\begin{enumerate}[label={\roman*.}]
    \item polynomials, or
    \item linear combinations of the constant 1 and of functions of the form
      \[
         \vv r\ \mapsto \ g_i(|\vv r|) \cdot P\Big(\frac{\vv r}{|\vv r|}\Big) 
      \]
      with a polynomial $P$ and $g_i$ one of a set of continuous radial basis
      functions $g_i:\BR_{\geq 0}\arrow \BR$ with $g_i(0)=0$ for $i$ in some index set $I$.
\end{enumerate}
{\bf Conditions on the radial basis functions:}\\
We required $g_i(0)=0$ above because otherwise the product with a non-constant polynomial on $\BS^2$
would not be well defined at the origin. Furthermore, we require
\begin{enumerate}
    \item[I.] Any continuous function on a compact interval $[a,b]$ with $0<a<b$ can be 
      approximated uniformly by linear combinations of the $g_i$.
    \item[II.] The $g_i$ are analytic functions.
\end{enumerate}
The condition II is only fully needed in Theorem \ref{theorem:FiniteFeatures} for obtaining the
upper bound on the number of features needed for unique characterization. For other statements,
weaker conditions are enough, see section \ref{sec:FiniteConditions} below.
\smallspace
{\bf Usage:}\\
For chemistry applications the functions on $X=\{\vv r \in \BR^3\,\Big|\, r_0 \leq |\vv r| \leq r_1\}$
that model the influence of atoms around a central atom will most likely decay towards infinity, in
fact, we would use functions that are 0 for $r\geq r_1$ to limit the number of atoms we have to process
per central atom. Similarly,  need a better resolution for small $r$ than for large $r$ will be needed.
So while with polynomials in $r^2$ we can in principle approximate any continuous function on
$[r_0, r_1]$, it may be more efficient to use radial functions that are tailored to the problem
at hand. Since $G$ does not mix points of different radii, using these functions instead of 
polynomials of $r^2$ as the radial part does not change the theory significantly.
\smallspace
{\bf Fundamental features:}\\
In case ii, we write $\calR$ for the vector
space of allowed radial functions, so for 2ii these are the $h:[r_0,r_1]\to \BR$ that are linear
combinations of the $g_i$, and for 3ii these are the $h:\BR_{\geq 0}\to \BR$ that are the linear
combinations of the $g_i$ (so in particular, we have $h(0)=0$ for all $h\in \calR$).
\smallspace
In the case i any PPSD can be given as a polynomial in fundamental features 
\[
    \sum_{\vv r\in S_\gamma} P(\vv r)
\]
for a polynomial $P:\BR^3\to\BR$ (see section \ref{subsec:PPSDsNormalForm}).\\
In the case ii we consider the features
\[
    \sum_{\vv r\in S_\gamma} c+P\Big(\frac{\vv r}{|\vv r|}\Big)\cdot h(|\vv r|)
\]
for $c\in \BR$, $P:\BS^2\to\BR$ a polynomial (considered as a function on $\BS^2$) and $h\in \calR$
as the fundamental features. (The additive constant $c$ is necessary since in case 3ii $h(0)=0$;
this ensures that --- like in the case i --- the $|S_\gamma|$ are also fundamental features.)
We then define the PPSDs to be the polynomials in the fundamental features.
(This is a slight abuse of the acronym PPSD, since as functions on configurations with $k$ points, i.e. 
on $\BR^{3k}$ these will no longer be polynomials in general.)
\smallspace
To keep arguments in cases i and ii similar, we define in case i the vector space $\calR$ 
to be the functions $f(r)$ that can be given by a polynomial $P$ as $f(r) = P(r^2)$ and
satisfy $f(0)=0$.\\
Then the fundamental features that {\em only depend on the radius} are
\[
   \sum_{\vv r \in S_\gamma}  f(|\vv r|)\qquad \hbox{for}\ f\in \BR+\calR.
\]
in both cases i and ii.
\smallspace
In case i the
\[
   \sum_{\vv r \in S_\gamma}  P(\vv r) f(|\vv r|)\qquad \hbox{for}\ f\in \calR,\ P\ \hbox{polynomial}
\]
are also fundamental features, and for case ii
\[
   \sum_{\vv r \in S_\gamma} P\Big(\frac{\vv r}{|\vv r|}\Big) f(|\vv r|)\qquad \hbox{for}\ f\in \calR,
   \ P\ \hbox{polynomial}
\]
are also fundamental features.

%--------------------------------------------
\subsection{Separation properties of radial functions}
\label{subsec:RadialSeparation}
%--------------------------------------------
\noindent
We here point out a property of the radial functions $\calR$
that will be used in different proofs and follows from Condition I alone.\\
Its proof is based on the following well known property of polynomials:
\begin{lemma}\label{lemma:LagrangeInterpolation}
Let $x_0, x_1, ..., x_n$ be $n+1$ different real numbers, then there is a polynomial
$P(x)$ of degree $n$ with $P(x_0)=1$ and $P(x_i)=0$ for $i=1,2,...,n$.\\
More generally, for any values $v_0,...,v_n\in\BR$ there is a polynomial $Q(x)$ of degree $\leq n$
with $Q(x_i) = v_i$.
\end{lemma}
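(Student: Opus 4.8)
The plan is to write the interpolating polynomials down explicitly; no clever argument is needed. For the first assertion I would set
\[
   P(x) \ :=\ \prod_{i=1}^{n} \frac{x - x_i}{x_0 - x_i}.
\]
Since the $x_i$ are pairwise distinct, each denominator $x_0 - x_i$ is nonzero, so $P$ is a well-defined polynomial, and being a product of $n$ linear factors it has degree exactly $n$. Substituting $x = x_0$ makes every factor equal to $1$, so $P(x_0) = 1$; substituting $x = x_j$ for some $j \in \{1, \dots, n\}$ makes the $j$-th factor vanish, so $P(x_j) = 0$. This settles the first part.

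For the more general statement I would first observe that, by the same construction with the roles of the indices permuted, for each $j \in \{0, 1, \dots, n\}$ there is a polynomial
\[
   \ell_j(x) \ :=\ \prod_{\substack{0 \le i \le n \\ i \ne j}} \frac{x - x_i}{x_j - x_i}
\]
of degree $n$ with $\ell_j(x_j) = 1$ and $\ell_j(x_i) = 0$ for $i \ne j$. Then I would put $Q(x) := \sum_{j=0}^{n} v_j\, \ell_j(x)$, which is a linear combination of polynomials of degree $n$ and hence has degree $\le n$, and evaluating at $x = x_i$ annihilates every term except the $i$-th, giving $Q(x_i) = v_i$, as required.

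The only points that call for a moment's care are that the denominators are nonzero (which uses the distinctness hypothesis) and the bookkeeping of degrees: each $\ell_j$ has degree exactly $n$, but $Q$ may drop in degree after cancellation, which is why the second claim only asserts degree $\le n$. There is no genuine obstacle here; the entire "work" is exhibiting the formulas and checking the interpolation conditions term by term. (One could alternatively deduce existence from non-vanishing of the Vandermonde determinant of the $x_i$, but the explicit Lagrange formula above is more direct and also yields the first, normalized, statement at once.)
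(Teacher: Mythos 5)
Your proof is correct and matches the paper's argument exactly: both establish the first part via the explicit Lagrange basis polynomial and the second part by summing $v_j$-weighted copies of the analogous polynomials normalized at each node. Nothing to add.
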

\begin{proof}
For the first part, set 
\[
   P(x) := \frac{(x-x_1)\cdot (x-x_2) \cdot ... \cdot (x-x_n)}
                {(x_0-x_1)\cdot (x_0-x_2) \cdot ... \cdot (x_0-x_n)}
\]
For the general case, build for all $i=0,1,...,n$ in the same way polynomials $P_i(x)$ for which $P_i(x_j)=\delta_{ij}$ for all $j=0,1,...,n$, and then set
\[
   Q(x) := v_0 \cdot P_0(x) + ... + v_n\cdot P_n(x).
\]
\end{proof}
\noindent
For radial functions we get from this:
\begin{lemma}[Separation of radial functions] \label{lemma:RadialSeparation}\phantom{.}\\
Assume only Condition I. Then for any list of radii $0 \leq t_0 < t_1 < ... < t_n$ there exist 
functions $f_i\in \BR+\calR$ such that 
\begin{equation}
  f_i(t_i)=1 \quad \hbox{and}\qquad f_i(t_j)=0\quad \hbox{for}\ j\neq i.
  \label{eq:DeltaFunctions}
\end{equation}
\end{lemma}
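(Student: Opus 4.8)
The plan is to recast the interpolation requirement \eqref{eq:DeltaFunctions} as a surjectivity statement for an evaluation map, and then to upgrade the approximation supplied by Condition~I to an exact statement by a finite-dimensional density argument. First I would introduce the linear map $\mathrm{ev}\colon \BR+\calR \to \BR^{n+1}$, $f\mapsto (f(t_0),\dots,f(t_n))$. Producing functions $f_i\in\BR+\calR$ with $f_i(t_j)=\delta_{ij}$ is exactly the assertion that the standard basis vectors $e_0,\dots,e_n$ lie in the image $V:=\mathrm{ev}(\BR+\calR)$, i.e.\ that $\mathrm{ev}$ is surjective. Since $V$ is a linear subspace of the finite-dimensional space $\BR^{n+1}$, it suffices to show $V$ is \emph{dense}, because a dense linear subspace of a finite-dimensional Euclidean space is everything.

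For the density I would fix a target vector $\vv v=(v_0,\dots,v_n)$ and pick any compact interval $[a,b]$ with $0<a<b$ that contains all the strictly positive radii among $t_0,\dots,t_n$ (possible, as there are finitely many and each is positive). If $t_0>0$, all $t_i$ lie in $[a,b]$; by Lemma~\ref{lemma:LagrangeInterpolation} there is a polynomial (hence continuous function) $\varphi$ on $[a,b]$ with $\varphi(t_i)=v_i$ for all $i$, and by Condition~I there is $h\in\calR$ with $\|h-\varphi\|_{\infty,[a,b]}<\eps$, so $\mathrm{ev}(h)$ lies within $\eps\sqrt{n+1}$ of $\vv v$. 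If instead $t_0=0$, I would write $f=v_0+h$ with $h\in\calR$: then $f(t_0)=v_0$ holds \emph{exactly} since $h(0)=0$, and it remains only to approximate $h$ (via Condition~I on $[a,b]$) to an interpolant taking the values $v_i-v_0$ at $t_i$ for $i\ge 1$. Either way $\vv v$ is approximated arbitrarily well by points of $V$, so $V$ is dense, hence $V=\BR^{n+1}$, and choosing preimages of $e_0,\dots,e_n$ yields the desired $f_i$.

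The only genuine obstacle is the passage from the uniform approximation that Condition~I provides (on intervals bounded away from $0$) to the exact interpolation demanded by \eqref{eq:DeltaFunctions}; this is precisely what the finite-dimensional density observation resolves. The appearance of the constant term in $\BR+\calR$ rather than just $\calR$ is forced by the case $t_0=0$: there every $h\in\calR$ vanishes at the origin and Condition~I gives no control at $0$, so the constant supplies the one missing degree of freedom. A minor point to state carefully is the degenerate subcase $n=0$ (take $f_0\equiv 1$) and the choice of $[a,b]$ when only one positive radius is present (any interval with $0<a<b$ containing it works).
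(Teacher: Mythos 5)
Your proposal follows essentially the same route as the paper's proof in case~ii: introduce the evaluation map into $\BR^{n+1}$, observe that its image is a finite-dimensional linear subspace and hence closed, and use Condition~I to show density. One small improvement on your side: you explicitly separate the subcase $t_0=0$, noting that Condition~I only controls approximation on $[a,b]$ with $0<a<b$ and that the constant summand in $\BR+\calR$ supplies the missing degree of freedom at the origin; the paper's case~ii argument leaves this implicit, relying on $g_i(0)=0$ without spelling it out. The paper additionally treats case~i by direct Lagrange interpolation in $r^2$, which your density argument subsumes.
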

\begin{proof}
  {\bf Case i: Polynomials}\\
    With Lemma \ref{lemma:LagrangeInterpolation} we can find a polynomial $P_i(t)$ such that
    $P_i(t_j^2)=\delta_{ij}$ for $i=0,1,2,...,n$ and this gives the $f_i\in \BR+\calR$  of the
    form $f_i(r):= P_i(r^2)$ with \eqref{eq:DeltaFunctions}.
  \smallspace
  {\bf Case ii: General radial basis functions}\\
  Let $f:\BR+\calR\arrow \BR^{n+1}$ be the linear map that evaluates
  a function $g\in \BR+\calR$ at the points $t_0,t_1,...,t_n$. 
  We claim that this map is surjective:
  Let $v := (v_0,...,v_n)$ be a given $(n+1)$--tuple, then
  construct a polynomial $p$ with $p(r_i) = v_i$.
  By assumption on the basis functions $g_i$ there must be
  a sequence of functions in $\BR+\calR$ that approximates $p$, and hence 
  $\vv v$ must lie in the closure of $f(\BR+\calR)$. 
  Since this image $f(\BR+\calR)$ is a vector subspace of $\BR^{n+1}$,
  it is closed, and hence $\vv v$ must already lie in $f(\BR+\calR)$.
  Now this surjectivity shows that also in the case ii we have functions 
  $f_0,...,f_n\in \BR+\calR$ with \eqref{eq:DeltaFunctions} as in the case i.
\end{proof}

%------------------------------------------------------
\subsection{Capabilities to distinguish configurations}
\label{subsec:VariantsDistinguishingCapabilities}
%------------------------------------------------------
\noindent
As an illustration how the different cases of function spaces and point domains are related
to each other, we here prove that they have the ``same capabilities to distinguish between
configurations'' --- this will be used in the first part of the topological completeness theorem:
Any two different configurations of colored points can be distinguished by fundamental features.
\smallspace
Obviously, the claim for 2i, 2ii, 3i, 3ii implies the claim for case 1, so we only have
to prove the other direction.\\
Assume we know this is true in case 1, i.e. on $X=\BS^2$, and we have a 
pair of configurations in $\BR^3$ with colors in $\calC$ that we want to distinguish by
fundamental features.
Let $t_0=0$ and  $0 < t_1 < ... < t_n$ be the positive radii appearing among any points in one of
the two configurations.\\
We first note that by Lemma \ref{lemma:RadialSeparation} radial functions can pick out any
particular radius. We will use the functions of Lemma \ref{lemma:RadialSeparation} to
show that encoding configurations on shells of $n$ radii $r_1,...,r_n$ is equivalent to encoding 
configurations on $\BS^2$ with $n$ times more colors:
\smallspace
First, we can discard a potential point at the origin:
From the value of the fundamental feature $\sum_{\vv r\in S_\gamma} f_0(|\vv r|^2)$ we already
see whether the origin is in a configuration and which color it has. So it is enough to consider
the points outside of the origin. 
\smallspace
We can now ``reduce'' the two configurations in $\BR^3$ colored by $\calC$ to two configurations
on $\BS^2$
colored by $\calC' := \calC \times \{1,2,...,n\}$: Map any point $\vv r\in\BR^3$ with color 
$c\in \calC$ and radius $|\vv r| = t_i$ to $\vv r / |\vv r|\in \BS^2$ with color $(c, i)\in \calC'$.
On the other hand, given one of the derived configurations on $\BS^2\times\calC'$, we can 
reconstruct the original configuration on $\BR^3\times \calC$, and the original configurations
are $G$--equivalent if and only if the reduced configurations are $G$--equivalent.
\smallspace
Since we assume we know the statement for $X=\BS^2$, there is a fundamental feature
$\sum_{\vv r'\in S'_{(\gamma, j)}} P(\vv r')$ with $(\gamma,j)\in \calC'$ that is able to distinguish them.
By decomposing $P$ into its homogeneous components, we can assume $P$ is a homogeneous polynomial
of degree $d$. But then we can re-write the fundamental feature on $X=\BS^2$ as a fundamental feature
in the original setup using the $f_i\in \BR+\calR$ from Lemma \ref{lemma:RadialSeparation}:\\
\[
  \sum_{\vv r'\in S'_{(\gamma, j)}} P(\vv r') 
  = \sum_{\vv r \in S_\gamma} f_j(|\vv r|) P\Big(\frac{\vv r}{|\vv r|}\Big)
  = \sum_{\vv r \in S_\gamma} f_j(|\vv r|) \cdot t_j^{-d} \cdot P(\vv r)
\]
(we use the second expression in case ii and the third in case i).
So then this feature in the original setup of cases 2 and 3 must also be able to distinguish the
two configurations, q.e.d.

%--------------------------------------------
\section{Weaker conditions on radial functions}
\label{sec:FiniteConditions}
%--------------------------------------------
\noindent
In case ii we required Conditions I, II from the radial basis functions $g_i$.
These conditions are enough to prove all theorems we are going to prove, and are partly selected
for their simple formulation. However, for most theorems they are more restrictive and idealized
than they need to be; in particular Condition I can only be satisfied for an infinite collection
of $g_i$.
\smallspace
Here we formulate two consequences of Conditions I, II that are used in some proofs, and that are 
enough on their own for some statements.  These are weaker conditions, and they are parameterized
by a natural number $n$ and can also be applied to finite sets of $g_i$ and give results for point
sets up to a given size.
\smallspace
\begin{itemize}
    \item Condition I$_w(n)$:\\
       For any $n$ different real numbers $t_i>0$ there is a linear combination $f$
       of the $g_i$ such that $f(t_1)=1$ and $f(t_i)=0$ for $i>1$.
    \item Condition II$_w(n)$:
       For any $R>0$ there is a {\em finite} subset $J$ of indices such that
       for any $n$ different numbers $0< t_i<R$ there is a linear combination $f$ of
       the $g_j$ with $j\in J$ such that $f(t_1)=1$ and $f(t_i)=0$ for $i=1,2,...,n$.
\end{itemize}
As an example, the $g_i(t):=t^i$ for $i=1,2,...n$ satisfy Conditions I$_w(n)$ and II$_w(n)$.
\smallspace
Obviously, Condition II$_w(n)$ is stronger than Condition I$_w(n)$. 
Lemma \ref{lemma:RadialSeparation} above showed that $I \Rightarrow I_w(n)$ for all $n$.
The main result of this section will be that Condition II$_w(n)$ follows from 
Condition I$_w(n)$ + Condition II (Proposition \ref{prop:AnalyticImpliesIIw} below).
\smallspace
We can also include $t=0$ in these conditions if we allow also the constant 1 in the linear 
combinations for $f$:
\begin{itemize}
    \item Condition $I'_w(n)$:\\
       For any $n$ different real numbers $t_i\geq 0$ there is a linear combination $f$
       of 1 and the $g_i$ such that $f(t_1)=1$ and $f(t_i)=0$ for $i>1$.
    \item Condition $II'_w(n)$:
       For any $R>0$ there is a {\em finite} subset $J$ of indices such that
       for any $n$ different numbers $0\leq t_i<R$ there is a linear combination $f$ of 1 and
       the $g_j$ with $j\in J$ such that $f(t_1)=1$ and $f(t_i)=0$ for $i=1,2,...,n$.
\end{itemize}
The primed versions are equivalent to the original conditions:
If $t_1>0$, the condition $f(0)=0$ is automatically satisfied since
all $g_i$ for $i\in I$ satisfy $g_i(0)=0$; if on the other hand $t_1=0$,
then set $f:= 1 + \sum_{j=1}^m \alpha_j g_{i_j}$ and solve for 
$\sum_{j=1}^m \alpha_j g_{i_j}(t_k) = -1$ for $k=2,3,...,n$ (we can
find such a linear combination with the same argument as we did in the
second part of Lemma \ref{lemma:LagrangeInterpolation}.)
\smallspace
Note that the formulation of Condition II$_w(n)$ involved an upper bound $R$, and the
reason is that without this finite upper bound $R$ it does {\em not} follow from
Conditions I and II:
\smallspace
%----------------------------------------------------------
{\bf Counterexample} for II$_w(n)$ without finite $R$:\\
%----------------------------------------------------------
Assume the radial basis functions are (the constant 1 and) $\sin\big(\frac{2a-1}{2^b}\cdot r\big)$,
and $\cos\big(\frac{2a-1}{2^b}\cdot r\big)-1$ for natural numbers $a,b=1,2,...$.
Linear combinations of these functions are enough to approximate
any continuous function in any compact interval of $r$, but for finitely many of them 
there will be an upper bound $b<B$ and then these functions cannot distinguish distance $r$ from
$r + 2^B\pi$, so even Condition II$_w(2)$ would not hold without finite $R$.
\smallspace
We also cannot relax the Condition II from analytic to smooth:
\smallspace
%----------------------------------------------------------
{\bf Counterexample} for II$_w(n)$ with smooth functions:\\
%----------------------------------------------------------
Let $\{g_i\,|\,i\in I\}$ be all smooth functions that are of the form
$f(t)=c + d \cdot (t-1)^2$ in an arbitrarily small neighborhood of $t=1$.
Let $K$ be a compact interval $[a,b]$ with $a\leq 1 \leq b$.
Then any continuous function on $K$ can be uniformly approximated
by using a smooth partition of one and interpolating between the original function on
$K \setminus [1-\eps, 1+\eps]$, and a function of the form $c + d\cdot (r-1)^2$ on
$[1-\eps, 1+\eps]$ and letting $\eps$ go to 0. So these functions would
satisfy the strong condition $I$.\\
However, given only finitely many basis
functions $g_i$, there would be a neighborhood of 1 in which every linear combination 
of these $g_i$ is symmetric
around 1, and these functions could not distinguish between $r=1-\eps$ and $r=1+\eps$
for $\eps$ small enough, so these function would not satisfy Condition II$_w(n)$ for $n=2$.
\smallspace
This counterexample shows the importance of the diagonal 
$\Delta := \{(t,t)\in K\times K\,|\, t\in K\}$: While for each compact subset of 
$K\times K \setminus \Delta$ we can find finitely many functions that distinguish between 
$r\neq r'$ when $(r,r')$ is in the compact subset, we cannot achieve this for all of
$K\times K \setminus \Delta$ because this is not a compact set. So the proof in the 
analytic case will use local conditions that also include the diagonal.
\smallspace
%---------------------------------------------------
\begin{proposition}\phantom{.}\\
\label{prop:AnalyticImpliesIIw}
Conditions I$_w(n)$ + II imply Condition II$_w(n)$.
\end{proposition}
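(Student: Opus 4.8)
The plan is to reduce the statement to: \emph{for some finite index set $J$, finitely many of the $g_j$ already solve the Lagrange interpolation problem at every admissible configuration}, and then prove this by a compactness argument on a compact cube whose only delicate points are the coincidence/vanishing points; these are handled by Noetherianity of the local ring of real-analytic germs, which is the only place Condition II is used. Concretely, fix $R>0$; we may take the $g_i$ to be real-analytic on an open neighbourhood of $[0,R]$, so that for any finite $J$ (with $|J|\geq n$) the entries of the $n\times|J|$ matrix $G_J(\vv t):=[g_j(t_i)]_{1\leq i\leq n,\ j\in J}$, hence also its $n\times n$ minors, are real-analytic in $\vv t\in[0,R]^n$. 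Put $B_J:=\{\vv t\in[0,R]^n:\operatorname{rank}G_J(\vv t)<n\}$ and let $Z\subseteq[0,R]^n$ be the ``obvious bad locus'' $\{t_i=t_j\text{ for some }i\neq j\}\cup\{t_i=0\text{ for some }i\}$. Since $g_j(0)=0$ and every $n\times n$ minor of $G_J$ involves all $n$ of its rows, $Z\subseteq B_J$ for each $J$; also $B_{J'}\subseteq B_J$ whenever $J\subseteq J'$. It therefore suffices to produce one finite $J$ with $B_J\subseteq Z$: then for any $n$ distinct $t_i\in(0,R)$ we have $\vv t\notin Z$, hence $\operatorname{rank}G_J(\vv t)=n$, so $f\mapsto(f(t_1),\dots,f(t_n))$ maps $\operatorname{span}\{g_j:j\in J\}$ onto $\BR^n$, and in particular some such $f$ has $f(t_1)=1$ and $f(t_i)=0$ for $i>1$.

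To get such a $J$, first note that Condition I$_w(n)$, applied to the $n$ given points in each ordering, says precisely that the evaluation map is onto $\BR^n$ for some finite $J$ at every $\vv t\in[0,R]^n\setminus Z$; hence $\bigcap_J B_J=Z$ (intersection over all finite $J$). Next I would prove the local claim: \emph{for every $\vv t^{\ast}\in[0,R]^n$ there is a finite $J_0$ and a neighbourhood $U$ of $\vv t^{\ast}$ with $B_{J_0}\cap U\subseteq Z$.} If $\vv t^{\ast}\notin Z$ this is immediate: by I$_w(n)$ some $n\times n$ minor of $G_{J_0}$ is nonzero at $\vv t^{\ast}$, hence on a neighbourhood. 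If $\vv t^{\ast}\in Z$, work in the local ring $\mathcal O$ of real-analytic germs at $\vv t^{\ast}$, which is a ring of convergent power series and therefore Noetherian. The ideals $I_J\subseteq\mathcal O$ generated by the germs of the $n\times n$ minors of $G_J$ increase with $J$ and form a directed family, so their union is a finitely generated ideal and hence equals $I_{J_0}$ for a single finite $J_0$. Then the germ of $B_{J_0}$ at $\vv t^{\ast}$ is $V(I_{J_0})=V\!\left(\bigcup_J I_J\right)=\bigcap_J V(I_J)$; since $\vv t^{\ast}\in B_J$, each $V(I_J)$ is the germ of $B_J$ at $\vv t^{\ast}$, so this equals the germ of $\bigcap_J B_J=Z$, i.e.\ $B_{J_0}$ coincides with $Z$ on a neighbourhood of $\vv t^{\ast}$. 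Finally, cover the compact cube $[0,R]^n$ by finitely many such neighbourhoods $U_1,\dots,U_M$ with associated finite sets $J^{(1)},\dots,J^{(M)}$, and set $J:=J^{(1)}\cup\dots\cup J^{(M)}$; since enlarging the index set only shrinks the bad set, $B_J\cap U_m\subseteq B_{J^{(m)}}\cap U_m\subseteq Z$ for every $m$, so $B_J\subseteq Z$, which completes the proof.

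I expect the genuine obstacle to be exactly the local claim at the coincidence/vanishing points $\vv t^{\ast}\in Z$. A direct attack there — solving the confluent ``Hermite'' form of the interpolation problem by hand — forces control of jets of the $g_i$ of unbounded order and does not obviously terminate; indeed the analogous statement genuinely fails for merely smooth $g_i$ (consistent with the counterexample preceding this proposition), so no such elementary argument can work unchanged. Noetherianity of the ring of convergent power series is precisely what guarantees that a finite $J_0$ suffices, and it is the sole use of Condition II in the argument.
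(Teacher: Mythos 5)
Your proof is correct and follows essentially the same route as the paper's: express the Lagrange-interpolation solvability at $\vv t$ as the non-vanishing of some $n\times n$ determinant of evaluations of the $g_i$, use Condition I$_w(n)$ to show these determinants have common zero set exactly the ``bad locus'', invoke Noetherianity of the ring of real-analytic germs (which is the one place Condition II is used) to extract a finite generating subfamily at each point, and close with compactness of $[0,R]^n$. The only cosmetic differences from the paper's proof are that you keep $\{t_i=0\}$ inside the bad set $Z$ and phrase things via rank conditions on a rectangular matrix $G_J$, whereas the paper adjoins a constant basis function $g_*$ (switching to the primed condition $I'_w(n)$) and works directly with the $n\times n$ determinants $f_{i_1,\dots,i_n}$; these choices are equivalent.
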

%---------------------------------------------------
\begin{proof}
For given $R>0$ let $K$ be the compact interval $[0,R]$.\\
Since we need to include 0 in $K$ to use compactness, we
will use the version $I'_w(n)$.
We will continue to use $\calR$ for the vectors space
generated by the $g_i$, and consider them as functions on $[0,R]$.
We add a new symbol $*$ to $I$ and set $g_*(t) := 1$, then 
the $g_i$ with $i\in I^* := I \cup \{*\}$ generate $\BR + \calR$.
\smallspace
We define $\Delta_n$ as the subset of $(t_1, t_2,...,t_n)\in K^n$ 
for which two (or more) of the numbers $t_1, t_2,...,t_n$ are equal,
and for $i_1,...,i_n\in I^*$ we define the functions
\[
   f_{i_1,...,i_n}(t_1,t_2,...,t_n) :=
    \left| \begin{matrix}
    g_{i_1}(t_1) & g_{i_1}(t_2) & ... & g_{i_1}(t_n) \\
    g_{i_2}(t_1) & g_{i_2}(t_2) & ... & g_{i_2}(t_n) \\
    \vdots & \vdots & \ddots & \vdots \\
    g_{i_n}(t_1) & g_{i_n}(t_2) & ... & g_{i_n}(t_n)
    \end{matrix}\right| 
\]
and subsets
\[
  Z_{i_1,...,i_n} := \{(t_1,...,t_n)\in K^n\,\big|\, f_{i_1,...,i_n}(t_1,t_2,...,t_n)=0\}.
\]
If $(t_1,t_2,...,t_n)\in \Delta_n$, then two (or more) columns in the determinant above are the same, so 
$f_{i_1,...,i_n}(t_1,t_2,...,t_n)=0$. On the other hand, if $t_1,t_2,...,t_n$ are all different,
then Condition $I'_w(n)$ implies we can find functions $h_1,...,h_n\in \BR + \calR$ such that
$h_j(t_k) = \delta_{j,k}$; this means the $n$ column vectors $g_i(t_1),...,g_i(t_n)$ for such $i\in I^*$ that occur in 
one of the $f_j$ cannot be linearly independent, which means there must be $n$ indices $i_1,...,i_n$ such that
the $n\times n$--matrix of the $g_{i_l}(t_k)$ is non-singular. Taken together, this means that
\[
   \Delta_n = \bigcap_{(i_1,...,i_n)\in (I^*)^n}  Z_{i_1,...,i_n}. 
\]
Furthermore, for $(t_1,t_2,...,t_n)\notin\Delta_n$, we have $i_1,...,i_n$ such
that $f_{i_1,...,i_n}(t_1,t_2,...,t_n)\neq 0$ and since all functions are continuous,
this is still true in some open neighborhood $U_{t_1,t_2,...,t_n}$ of $(t_1,t_2,...,t_n)$.
This means we have an open neighborhood of $(t_1,t_2,...,t_n)$ in which
\[
   \Delta_n \cap U_{t_1,t_2,...,t_n} = \emptyset = Z_{i_1,...,i_n} \cap U_{t_1,t_2,...,t_n}. 
\]
Now we need to find something similar for $(t_1,t_2,...,t_n)\in\Delta_n$ (this is the part
that would not be possible if we only require smoothness of the $g_i$). If we can find 
an open neighborhood of $(t_1,t_2,...,t_n)\in\Delta_n$ such that in this neighborhood
$\Delta_n$ is given as intersection of finitely many $Z_{i_1,...,i_n}$, this will allow us
to use a compactness argument and get the required finite set $J$. To do this, we will
look at a sort of ``infinitesimal neighborhood''; the ring of functions in this
``infinitesimal neighborhood'' is given by germs of analytic functions, i.e. 
functions that can be expressed locally as a power series in some neighborhood of 
$(t_1,t_2,...,t_n)$ with some convergence radius $\eps>0$. This ring of convergent
power series is the same (up to isomorphism) at every point $(t_1,t_2,...,t_n)$, in 
particular we can identify it with the ring $\BR\llangle x_1,...,x_n \rrangle$ of 
convergent power series in $n$ variables, which is a Noetherian ring, see e.g. Definition 2A
and Theorem 102 in \cite{Matsumura1980}. (The ring of germs of smooth functions is not Noetherian.)
Let $\calI_{t_1,t_2,...,t_n}$ be the ideal in the ring of convergent power series at $(t_1,t_2,...,t_n)$ 
generated by all $f_{i_1,...,i_n}$ for $(i_1,...,i_n)\in (I^*)^n$. Then, since the ring is Noetherian,
this ideal can already be generated by a subset $J_{t_1,t_2,...,t_n}$ of finitely many tuples
$(j_1,...,j_n)\in (I^*)^n$. If $\eps>0$ is the smallest convergence radius for the  $f_{j_1,...,j_n}$ with
$(j_1,...,j_n)\in J_{t_1,t_2,...,t_n}$, let $U_{t_1,t_2,...,t_n}$ be the open $\eps$--neighborhood of $(t_1,t_2,...,t_n)$.
Then this means that
\[
   \Delta_n \cap U_{t_1,t_2,...,t_n} = \bigcap_{(j_1,...,j_n)\in J_{t_1,t_2,...,t_n}}  Z_{j_1,...,j_n} \cap U_{t_1,t_2,...,t_n}. 
\]
Now we can apply compactness: Since the open subsets $U_{t_1,t_2,...,t_n}$ cover the compact set $K^n$,
already finitely many of them do, let $J_n$ be the union of the corresponding finitely many $J_{t_1,t_2,...,t_n}$.
That means we have
\[
   \Delta_n = \bigcap_{(j_1,...,j_n)\in J_n}  Z_{j_1,...,j_n}. 
\]
So for any $n$ different numbers $0\leq t_i<R$ there must be a tuple $(j_1,...,j_n)\in J_n$ with
$f_{j_1,...,j_n}\neq 0$, so we can find a linear combination $f$ of those $g_j$ with $j\in J$
such that $f(t_1)=1$ and $f(t_i)=0$ for $i=1,2,...,n$, so II$_w(n)$ is satisfied.
\end{proof}

% ----------------------------------------------------------------------------------
\section{General theorems}
\label{sec:CompleteTheorems}
% ----------------------------------------------------------------------------------
\noindent
In this section we formulate the general, abstract theorems described informally in the main text.
\smallspace

\begin{theorem}[Topological Completeness]
  Let $G$ be either $O(3)$ or $SO(3)$.
  \begin{enumerate}
  \item  Any two different configurations of points in $X\subseteq\BR^3$ colored by $\calC$
      can be distinguished by their fundamental features.
  \item Any two configurations of points in $X\subseteq\BR^3$ colored by $\calC$ that are not \emph{equivalent under $G$}
  can be distinguished by \emph{$G$--invariant} PPSDs.
  \item \emph{$G$--covariant} PPSDs are uniformly dense in \emph{$G$--covariant} continuous functions on compact subsets of configurations colored by $\calC$.
  \end{enumerate}
  \label{theorem:TopologicalCompleteness}
\end{theorem}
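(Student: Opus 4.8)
The plan is to prove the three parts in order, since part 1 feeds part 2 and parts 1--2 feed part 3. For \textbf{part 1}, the key observation is that two distinct colored configurations can always be separated by a fundamental feature. In case i (polynomials on all of $\BR^3$) this is essentially Lemma \ref{lemma:FundFeatDetConfig} (the fundamental features determine the configuration), so the real content is reducing cases 2 and 3 with radial basis functions to the polynomial case, or to the case $X=\BS^2$. This reduction is exactly what Section \ref{subsec:VariantsDistinguishingCapabilities} establishes: using Lemma \ref{lemma:RadialSeparation} one picks out individual radii, thereby ``splitting'' a shell configuration over $n$ radii into an $\BS^2$-configuration over $|\calC|\cdot n$ colors, for which the polynomial separation statement is known; any separating homogeneous polynomial on $\BS^2$ can then be rewritten as an honest fundamental feature in the radial-function setting. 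So part 1 is basically a citation of that section together with Lemma \ref{lemma:FundFeatDetConfig}.

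For \textbf{part 2}, the strategy is the standard one for separating orbits of a compact group by invariant polynomials, but adapted so that it works in the PPSD setting where the ambient space is infinite-dimensional. First I would use part 1 to embed the (infinite-dimensional) configuration space into a finite-dimensional $G$-representation $V$ via finitely many fundamental features that already separate the two given configurations $S, S'$; because $G=O(3)$ or $SO(3)$ acts on the fundamental features through finite-dimensional representations, this map can be taken to be $G$-equivariant and to land in some $V=\irred{l_1}\oplus\cdots\oplus\irred{l_N}$. Now $S$ and $S'$ are $G$-inequivalent, so their images $v,v'\in V$ lie in distinct $G$-orbits; since $G$ is compact these orbits are compact and disjoint, hence separated by a continuous $G$-invariant function on $V$, and by averaging polynomials over the Haar measure on $G$ (the Stone--Weierstrass / Weyl unitarian-trick argument) one obtains a $G$-invariant \emph{polynomial} $p$ on $V$ with $p(v)\neq p(v')$. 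Pulling $p$ back along the equivariant embedding gives a $G$-invariant PPSD (a polynomial in fundamental features) distinguishing $S$ from $S'$. The only subtlety in case ii is that the ``polynomial in fundamental features'' is no longer a polynomial in the raw coordinates, but that is precisely why PPSDs were defined the way they were, so the argument goes through verbatim.

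For \textbf{part 3}, the goal is uniform density of $G$-covariant PPSDs in $G$-covariant continuous functions on compact subsets of configuration space. Fix a compact set $\mathcal{K}$ of configurations and a target representation $W$; a covariant function is a continuous $G$-equivariant map $\mathcal{K}\to W$. Using part 1 again, embed $\mathcal{K}$ equivariantly into a finite-dimensional $G$-representation $V$ (its image is compact); then the problem reduces to approximating a continuous $G$-equivariant map on a compact subset of $V$ by polynomial equivariant maps. This is a classical consequence of Stone--Weierstrass combined with Haar-averaging: approximate each component by ordinary polynomials (Weierstrass on a compact subset of $V$), assemble into a polynomial map $V\to W$ that need not be equivariant, then symmetrize by integrating $g^{-1}\cdot(\text{map})(g\cdot v)$ over $G$; the average is equivariant, polynomial, and still close to the (already equivariant) target. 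Pulling back along the embedding turns this polynomial covariant map into a covariant PPSD. The main obstacle across all three parts is the passage from the infinite-dimensional, variable-cardinality configuration space to a finite-dimensional equivariant model: one must check that finitely many fundamental features can be chosen to both separate the given data \emph{and} carry a genuine $G$-action making the embedding equivariant, and — in case ii — that the resulting approximants are legitimately PPSDs rather than merely polynomials in coordinates. Everything else is Haar-averaging plus Stone--Weierstrass, and compactness of $G$ and of $\mathcal{K}$ is what makes those steps available.
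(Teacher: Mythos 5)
Your overall route is the same as the paper's: reduce case ii to $\BS^2$ via the radial–separation lemma, embed a piece of configuration space $G$-equivariantly into a finite-dimensional representation, then apply Haar averaging together with Stone--Weierstrass. Parts 1 and 3 match the paper's proof in structure, and your recognition that the core difficulty is passing from the infinite-dimensional, variable-cardinality configuration space to a finite-dimensional equivariant model is exactly right.

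There is, however, a genuine gap in your part 2. You choose finitely many fundamental features ``that already separate the two given configurations $S,S'$'', map equivariantly into $V$, and then assert that since $S,S'$ are $G$-inequivalent ``their images $v,v'\in V$ lie in distinct $G$-orbits.'' That implication does not hold. If $\iota:\calX\to V$ is equivariant and $\iota(S)\neq\iota(S')$, one can still have $\iota(S)=g\cdot\iota(S')=\iota(gS')$ for some $g\neq e$, because $\iota$ was only arranged to distinguish $S$ from $S'$, not from every $gS'$. In a finite-dimensional truncation, two $G$-inequivalent configurations can collapse to the same $G$-orbit. The paper closes this hole with a compactness argument over $G$: for each $g\in G$, part 1 gives a fundamental feature $f_g$ with $f_g(S)\neq f_g(gS')$; by continuity this persists on an open neighborhood $U_g\ni g$; since $G$ is compact, finitely many $U_g$ cover $G$, and taking the corresponding finitely many $f_g$ yields a finite feature set $\calY$ with $\iota(S)\notin G\cdot\iota(S')$ in $\calY^*$. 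Only after that can you invoke Proposition \ref{prop:G_invariant_functions} (compact orbits, Urysohn, Stone--Weierstrass, Haar averaging). You need to add this step explicitly.

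A smaller but still real issue is in part 3: you say one must ``check that finitely many fundamental features can be chosen to both separate the given data and carry a genuine $G$-action,'' but for density you need the embedding to be \emph{injective} on the whole compact set $\mathcal{K}$, not merely to separate finitely many given points. In case i this is immediate from Lemma \ref{lemma:FundFeatDetConfig}; in case ii it requires the stronger radial condition (Condition II, or II$_w(2k)$, via Proposition \ref{prop:AnalyticImpliesIIw}), which the paper supplies in Lemma \ref{lemma:InjMapToVS}. You flag the obstacle but do not resolve it, and in case ii the resolution is not free — one needs analyticity of the radial basis functions, which is precisely where the paper invokes its extra hypotheses. With these two points filled in, the proposal coincides with the paper's argument.
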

\noindent
This theorem applies to all cases 1,2i,2ii,3i,3ii. 
For a proof (and an exact description of the third statement) see Appendix \ref{sec:ProofTopologicalCompleteness}.
In case ii, it only uses Condition I for the first two parts, 
and additionally Condition II$_w$ (or the stronger Condition II) for the last part.
\smallspace
These are infinitely many features, and as explained in the main text, without bound on the number of points
a finite subset cannot give unique descriptors.
For a given bound on the number of points one can show:
\begin{theorem}[Finiteness] \label{theorem:FiniteFeatures}
Given a set of $G$--invariants that distinguishes all $G$--equivalence classes of colored
point sets with at most $k_1,...,k_c$ points of colors $1,...,c$, and $k:=k_1+...+k_c$,
there are $2\dim(X)\cdot k - 5$ linear combinations of them that already distinguish all
equivalence classes.
\end{theorem}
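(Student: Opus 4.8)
The plan is to recognise this as a generic-projection (Whitney-type) theorem for semialgebraic sets, resp.\ compact subanalytic sets, preceded by a Noetherian reduction from the given --- possibly infinite --- separating family to a finite one.

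First I would set up the relevant moduli space. Let $M$ be the space of $G$-equivalence classes of colored configurations with at most $k_i$ points of color $i$. Its configuration space has dimension $\dim(X)\cdot k$, and for $k\ge 2$ a generic colored configuration is not contained in a line through the origin, hence has finite $G$-stabiliser, so the generic orbit is $3$-dimensional and $\dim M = \dim(X)\cdot k - 3 =: d$ (the cases $k\le 1$ are trivial, $M$ being a point or an interval). Because $G$ acts algebraically, $M$ is semialgebraic in case i --- compact in cases 1, 2 --- and, when embedded via analytic feature maps, a compact subanalytic set in cases 1ii, 2ii; here the compactness of the point domain is exactly what is needed, which is also why the statement fails in case 3ii. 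A $G$-invariant of configurations descends to a function on $M$, and the hypothesis says the given family $\calF$ of descended functions separates the points of $M$. I would also fix once and for all a finite separating family of polynomial (case i), resp.\ analytic (case ii), invariants $q_1,\dots,q_m$ --- these exist by finite generation of the ring of $G$-invariant polynomials (Hilbert--Weyl), resp.\ because finitely many spherical-harmonic-times-radial features of bounded degree already separate configurations of at most $k$ points --- so that $Q=(q_1,\dots,q_m)\colon M\hookrightarrow\BR^m$ is an embedding onto a semialgebraic/subanalytic set of dimension $d$, and every $f\in\calF$ is a polynomial, resp.\ analytic function, of $q_1,\dots,q_m$.

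Next I would reduce to finitely many invariants. On $M\times M$, embedded via $Q\times Q$ in $\BR^{2m}$, consider $G_f(x,y):=f(x)-f(y)$ for $f\in\calF$; by injectivity of $\calF$ their common zero set in $M\times M$ is exactly the diagonal $\Delta$. In case i each $G_f$ is a polynomial on $\BR^{2m}$, so by the Hilbert basis theorem the ideal they generate is already generated by finitely many $G_{f_1},\dots,G_{f_p}$, whose common zero set is then the same; hence $h:=(f_1,\dots,f_p)\colon M\to\BR^p$ is injective. In case ii one must argue locally, exactly as in Proposition~\ref{prop:AnalyticImpliesIIw}: the rings of convergent power series are Noetherian, so near each point of the compact set $M\times M$ the ideal generated by all $G_f$ is finitely generated, and a covering/compactness argument extracts a finite subfamily whose common zero set is $\Delta$, again giving an injective $h$. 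Then I would apply the generic-projection argument: $h(M)\subseteq\BR^p$ is semialgebraic/subanalytic of dimension $d$, its secant set $\mathrm{Sec}(h(M)):=\overline{\{[h(x)-h(y)]\in\BP^{p-1}\,:\,x\ne y\}}$ is semialgebraic/subanalytic of dimension $\le 2d$ (near $\Delta$ the closure contributes only tangent directions, which do not raise the dimension), and therefore whenever $q-1>2d$ a generic linear surjection $\BR^q\to\BR^{q-1}$ restricts to an injection on the image; iterating down to $\BR^{2d+1}$ gives a linear map $\pi$ with $\pi|_{h(M)}$ injective. Writing $\pi\circ h=(g_1,\dots,g_{2d+1})$ with $g_j=\sum_{i\le p}\pi_{ji}f_i$ yields $2d+1=2\dim(X)\cdot k-5$ linear combinations of elements of $\calF$ that separate all $G$-equivalence classes. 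The quantitative refinement mentioned in the text --- a lower bound for the distance to which inequivalent configurations are separated --- would follow by putting a metric on $M$ and applying a {\L}ojasiewicz inequality to the semialgebraic/subanalytic non-injectivity locus of $\pi$.

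The routine core, the generic projection, is classical once the category (semialgebraic, or compact subanalytic) is fixed; the real obstacles will be (a) verifying that $M$, and then $h(M)$, are genuinely semialgebraic, resp.\ compact subanalytic, of dimension exactly $\dim(X)k-3$ --- this needs invariant theory for the compact group $G$ together with the stabiliser computation --- and (b) the Noetherian reduction in case ii, which cannot be performed globally (the ring of analytic functions is not Noetherian) and so must be localised and re-globalised by compactness, precisely as in Proposition~\ref{prop:AnalyticImpliesIIw}; the breakdown of this step in the absence of compactness is exactly what produces the counterexamples in case 3ii.
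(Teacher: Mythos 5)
Your proof follows essentially the same Whitney-type generic-projection strategy as the paper's Appendix J: realise the quotient $\calX$ as a semialgebraic (case i) resp.\ compact subanalytic (cases 1ii, 2ii) set of dimension $d=k\dim(X)-3$, bound the dimension of the closure of the secant set in $\BP^{N-1}$ by $2d$, and iterate generic linear projections down to $\BR^{2d+1}$; the positive distance from the chosen projection direction to the closed secant set gives the quantitative refinement (the paper phrases this as a positive angle $\alpha$ yielding a $\sin\alpha$ factor, you invoke {\L}ojasiewicz, and either works). The genuine addition in your write-up is the explicit Noetherian reduction from the given family $\calF$ to a finite subfamily $f_1,\dots,f_p\in\calF$. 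The paper's outline step 2 asserts that ``according to part a there is a finite tuple $f_1,\dots,f_N\in\calF$'' separating points, but part a only produces \emph{some} finite separating family of invariants, not necessarily one drawn from $\calF$; your Hilbert-basis argument (case i) and local-Noetherian-plus-compactness argument (case ii) close that gap, so this is a useful refinement. One caveat in case ii: you assert that every $f\in\calF$ is an analytic function of the fixed embedding coordinates $q_1,\dots,q_m$, but a PPSD in $\calF$ may involve radial basis functions not encoded in $Q$, so the difference $G_f$ need not be analytic on $(Q\times Q)(M\times M)\subseteq\BR^{2m}$. The Noetherian reduction should instead be run upstairs on $X^{2k}\subseteq\BR^{6k}$, where $G_f(x,y)=f(x)-f(y)$ is honestly analytic in the ambient coordinates and where the local ring at each point is the Noetherian ring of convergent power series, exactly as in Proposition~\ref{prop:AnalyticImpliesIIw}; with that relocation your argument is sound, and indeed this also explains why the intermediate embedding $Q$ is dispensable.
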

\noindent
With $dim(X)=dim(\BR^3)=3$ in case 3 this gives the bound $6k-5$ stated in the main part.
This theorem is true in the cases $1,2i,2ii,3i$, but not in case $3ii$.
For the proof we use dimension arguments, depending on the 
function classes used, this uses dimensions of semi--algebraic sets or of subanalytic sets. 
In case 2ii it uses Conditions I and II. The weaker statement that finitely many features
suffice for configurations of $k$ points can be proven from only Conditions I and II$_w(2k)$.
\smallspace
The proof also shows that 
not only such linear combinations exist, but that $6k-5$ linear combinations ``picked at
random'' will have this property with probability 1, and that reducing a set of features
to this size will reduce distances between configurations by at most a constant factor.
See Appendix \ref{sec:ProofFiniteFeatures} for further discussion and proofs.
\smallspace
To describe the algebraic completeness theorem in more detail, we use the schematic depiction of 
PPSDs in Table \ref{tab:BodyOrderAndL}:
\newpage
\begin{table}[h]
    \centering
    \includegraphics[width=0.7\textwidth]{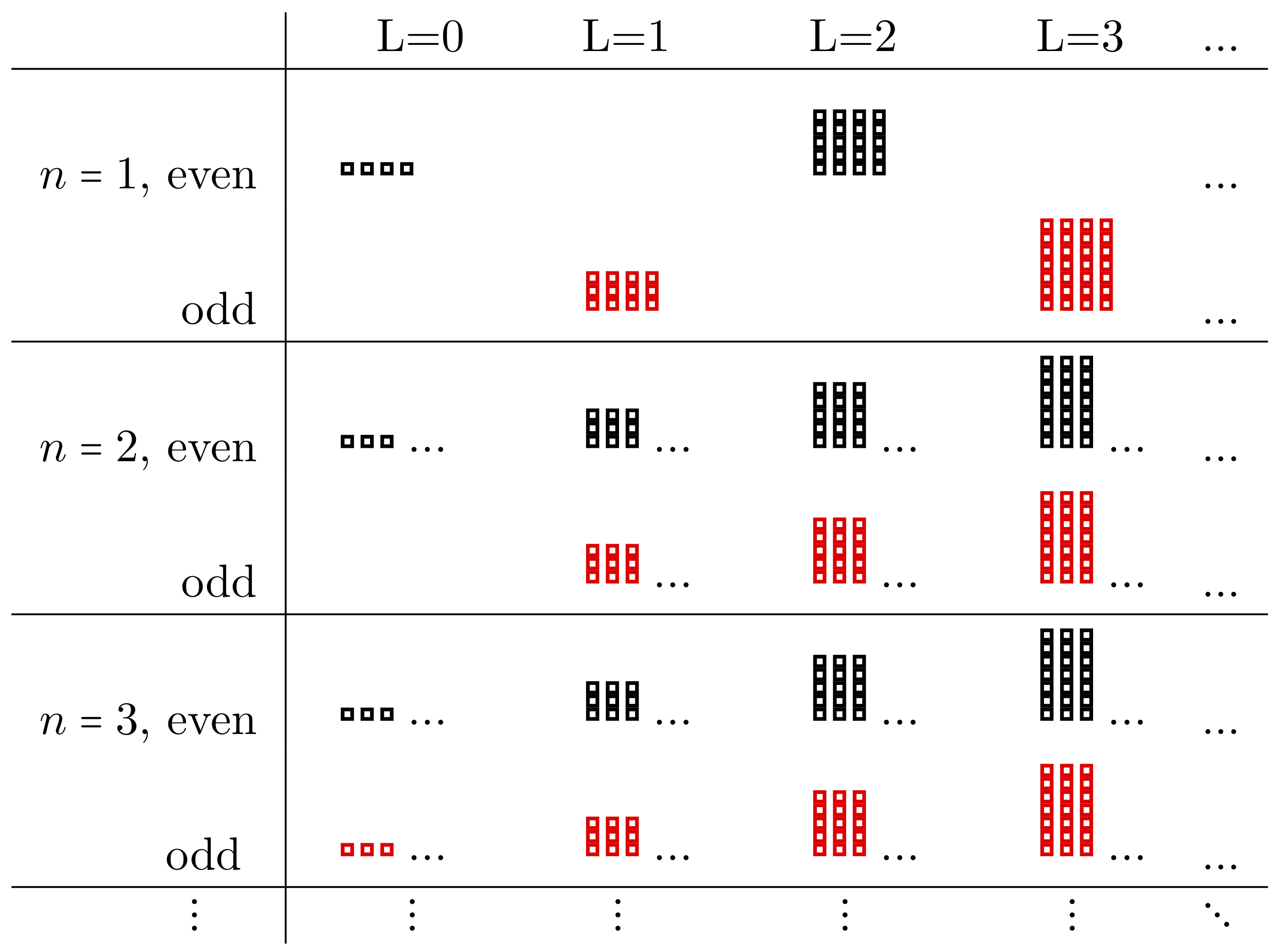}
    \caption{
    \textbf{Schematic depiction of PPSDs by body order and isotypical component:}
    In the first row, for each $l$ and color a $2l+1$--dimensional vector, sum of the spherical
    harmonics $Y_l(\vv r)$ over all points $\vv r$ of that color.
    The $n$--th row for $n>1$ consists of the results of all Clebsch--Gordan operations of rows $n-1$ and $1$.
    }
    \label{tab:BodyOrderAndL}
\end{table}

\smallspace
Given a point set in $\BR^3$ colored by $c$ colors, the first row contains fundamental
features. At $L=l$ these are in case...
\begin{itemize}
  \item[1]: $c$--tuple of vectors of dimension $2l+1$. These are the sum of the spherical
harmonics $Y_l(\vv r)$ summed over all points $\vv r$ of color $i=1,2,...,c$.
\end{itemize}
and in cases 2,3 depending on the function class in case...
\begin{itemize}
  \item[i]: Infinitely many vectors of dimension $2l+1$, sums of 
    $|\vv r|^{2k}\cdot Y_l(\vv r)$ of points of one color.
  \item[ii]: Infinitely many vectors of dimension $2l+1$, sums of 
    $g_i(|\vv r|)\cdot Y_l\big(\vv r/|\vv r|\big)$ of points of one color.
\end{itemize}
The $n$--th row consists of the results of all Clebsch--Gordan operations of
rows $n-1$ and $1$. In each row $n>1$ at column $L=l$ there are infinitely many vectors of dimension $2l+1$.
\\
We can now prove:
\begin{theorem}[Algebraic completeness when using Spherical Harmonics and Clebsch--Gordon operations]\label{theorem:AlgCompleteness} \hfill
    \begin{enumerate}
      \item All scalar PPSDs are some linear combination of fields in this schema.
      \item Any \emph{$SO(3)$--covariant} PPSD with values in $\irred{l}$ is a linear combination of vectors in the $l$--th column.
      \item Any \emph{$O(3)$--covariant} PPSD with values in $\irred{l}$ is a linear combination of vectors in the $l$--th column of the appropriate parity.
    \end{enumerate}
\end{theorem}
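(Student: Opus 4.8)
The plan is to identify the space of all PPSDs with a symmetric-algebra construction on which $G$ acts, and then show that the schema in Table~\ref{tab:BodyOrderAndL} is nothing but the image of a spanning set under the projections onto isotypical components; the three statements then follow from the three cases $l=0$, $G=SO(3)$, and $G=O(3)$ of one uniform argument. First I would make precise the claim, already announced in the main text and proved in Appendix~\refSubsecPPSDsNormalForm, that every PPSD is a polynomial in the fundamental features $F_{\gamma}(P) := \sum_{\vv r\in S_\gamma} P(\vv r)$ (in case i; the analogous radial statement in case ii). Thus the algebra of all scalar PPSDs is a quotient of the polynomial algebra $\BR[F_\gamma(P)]$, and, grouping the $P$ by degree and decomposing the homogeneous polynomials $\calP^{(n)} = \irred{n}\oplus r^2\calP^{(n-2)}$ into irreducibles, one sees that the fundamental features of a fixed color and fixed degree assemble into copies of the $SO(3)$-modules $\irred{l}$. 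Concretely, the order-$1$ row of the schema is exactly a $G$-module isomorphic to a (countable) direct sum $\bigoplus_{\gamma,l,k}\irred{l}$ (with the parity bookkeeping in the $O(3)$ case), and $F_\gamma\bigl(|\vv r|^{2k}Y_l(\vv r)\bigr)$ is a $G$-covariant PPSD valued in $\irred{l}$.

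Next I would handle the passage from order $1$ to higher orders. A scalar PPSD of order $d$ is, by the normal form, a linear combination of products $F_{\gamma_1}(P_1)\cdots F_{\gamma_d}(P_d)$; such a product lives in the $d$-fold tensor (equivalently symmetric) power of the order-$1$ module, and the map sending the tuple to the product is $G$-equivariant. Now the $d$-th row of the schema is built by iterated Clebsch--Gordan operations from row $d-1$ and row $1$; by the (iterated) Clebsch--Gordan decomposition \eqref{eq:Clebsch_Gordan}, the span of all vectors in row $d$ is precisely the image of the whole tensor power $V^{\otimes d}$ (where $V$ is the order-$1$ module) under the direct sum of all $G$-equivariant projections onto its irreducible constituents — i.e. row $d$ realizes the full isotypical decomposition of $V^{\otimes d}$ as a $G$-module. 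The essential input here is that Clebsch--Gordan operations give \emph{all} the irreducible projections, not just some, which is a consequence of Schur's Lemma (Lemma~\ref{lemma:Schur}, proved in the excerpt for odd-dimensional real representations) together with multiplicity-freeness of $\irred{l_1}\otimes\irred{l_2}$ for $SO(3)$.

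With these two facts in hand the three parts follow at once. For part~1: a scalar PPSD of order $d$ is a $G$-invariant (hence in particular just a) vector in $V^{\otimes d}$; its components under the isotypical decomposition land in the various $\irred{l}$-fields of row $d$, and the scalar itself, being the $\irred{0}$-component plus the fact that it is $G$-invariant is irrelevant for part 1 — we only need that it is \emph{some} linear combination of schema entries, which it is because $V^{\otimes d}$ is spanned by the schema entries in rows $\le d$. For part~2: a $SO(3)$-covariant PPSD with values in $\irred{l}$ is, by equivariance, an element of $\Hom_{SO(3)}(\irred{l}, \text{PPSDs})$; composing with the isomorphism $\text{PPSDs}\cong\bigoplus_d V^{\otimes d}$ and projecting, it is a linear combination of $SO(3)$-equivariant embeddings $\irred{l}\hookrightarrow V^{\otimes d}$, and each such embedding, by the isotypical statement above, has image spanned by the $l$-th-column entries of row $d$. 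Part~3 is identical with $O(3)$ in place of $SO(3)$, the only extra point being that on $V$ the element $-\mathrm{Id}$ acts on the $(\gamma,l,k)$-summand by $(-1)^l$ (since $Y_l$ has parity $(-1)^l$), so tensoring tracks parity additively and only the columns of matching parity appear — exactly the ``appropriate parity'' qualifier.

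The main obstacle I anticipate is not any single step but the careful bookkeeping that the Clebsch--Gordan construction is \emph{exhaustive}: one must check that iterating ``tensor with row~1, then project'' never misses an irreducible constituent of $V^{\otimes d}$ and never introduces spurious relations that would shrink the span. This reduces to two representation-theoretic lemmas — multiplicity-freeness of $\irred{a}\otimes\irred{b}$ over $SO(3)$, and the surjectivity of the natural map $\mathrm{Sym}^d V \to \bigoplus (\text{isotypical pieces})$ realized by CG operations — and the honest work is verifying these in the real, odd-dimensional setting where one cannot simply cite the complex theory; this is precisely where Lemma~\ref{lemma:Schur} and the decomposition $\calP^{(n)}=\irred{n}\oplus r^2\calP^{(n-2)}$ do the heavy lifting.
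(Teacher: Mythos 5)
Your proposal is correct and follows essentially the same route as the paper's proof in Appendix~\ref{sec:ProofAlgCompleteness}: reduce to the normal form (PPSDs are polynomials in fundamental features), decompose $\calP^{(n)}=\irred{n}\oplus r^2\calP^{(n-2)}$ to get the first row, observe that the iterated Clebsch--Gordan schema realizes an isotypical decomposition of the space of order-$d$ PPSDs, and invoke Schur's lemma in its real odd-dimensional form (Lemma~\ref{lemma:Schur}) to conclude that every covariant PPSD valued in $\irred{l}$ is a linear combination of the schema's $l$-th-column vectors. The one point you flag as an ``obstacle'' --- that the iterated CG decomposition is exhaustive --- is handled in the paper exactly as you sketch it (each irreducible component of $V^{\otimes d}$ lies in some $\irred{l'}\otimes\irred{l''}$ with $\irred{l'}$ an irreducible summand of $V^{\otimes(d-1)}$ and $\irred{l''}$ a summand of $V$), so there is no genuine gap.
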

\noindent
This theorem is valid for all cases 1,2i,2ii,3i,3ii.
For a proof (and details for part 3) see Appendix \ref{sec:ProofAlgCompleteness}.
The first two parts of the topological completeness theorem can be expressed in this schema
using the values of the functions on configurations:
\begin{itemize}
  \item Two colored point sets have the same first row if and only if they are equal.
  \item Two colored point sets have the same first column if and only if they are $SO(3)$--equivalent.
\end{itemize}
\smallspace
In this schema, the $(n+1)$--body information appears in row $n$. So the results of \cite{pozdnyakov2020incompleteness} show that the first 3 rows are not enough to 
distinguish all $SO(3)$--equivalence classes of colored point sets (even these are infinitely
many invariants). However, Theorems 1 and 3 together show that without the
restriction of rows we can distinguish all equivalence classes.
\smallspace
There is in fact a different route to describe $SO(3)$--invariant PPSDs that does not use the
decomposition into irreducible components. This starts with the moment tensors $\sum_{\vv r\in S_\gamma} \vv r^{\otimes k}$
as fundamental features, and then applies tensor products and contractions. For $G=SO(3)$, also the
3-tensor giving the orientation, i.e. the Levi-Civita symbol $\eps_{ijk}$ is allowed in the tensor products.
If any such sequence of tensor products and contractions arrive at a scalar, this must be an $SO(3)$--invariant feature of the colored point set, and one can show:
\begin{theorem}[Algebraic completeness for tensor contraction features]\label{theorem:InvariantTheory} \hfill\\
Any  $SO(3)$--invariant PPSD is a linear combination of these scalars from contractions of tensor products.
\end{theorem}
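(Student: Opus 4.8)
The plan is to use the Normal Form for PPSDs to rephrase the claim as a statement about $SO(3)$--invariant \emph{polynomials on a finite--dimensional space of tensors}, and then to quote the classical First Fundamental Theorem (FFT) of invariant theory. First I would invoke the Normal Form (Appendix \ref{sec:PPSDs}): the given $SO(3)$--invariant PPSD $F$ is a polynomial in finitely many fundamental features $\sum_{\vv r\in S_\gamma}P(\vv r)$. Splitting each $P:\BR^3\to\BR$ into homogeneous components and writing a homogeneous polynomial of degree $k$ in $x,y,z$ as $\langle c,\vv r^{\otimes k}\rangle$ for a fixed tensor $c$, every fundamental feature becomes a linear combination of the entries of the moment tensors $M^{(k)}_\gamma:=\sum_{\vv r\in S_\gamma}\vv r^{\otimes k}\in\mathrm{Sym}^{k}(\BR^3)$. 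Hence $F=G\circ\Psi$, where $\Psi$ sends a colored point set $S$ to the tuple of those moment tensors that actually occur, living in a finite--dimensional space $V:=\bigoplus_{j=1}^{m}\mathrm{Sym}^{k_j}(\BR^3)$, and $G$ is an ordinary polynomial on $V$. The structural point is that $\Psi$ is $SO(3)$--equivariant, where $SO(3)$ acts on $V$ by the tensor--power action on each summand.

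Next I would replace $G$ by an $SO(3)$--invariant polynomial without changing the pullback. Using the Haar measure on the compact group $SO(3)$, I set $\tilde G(v):=\int_{SO(3)}G(g\cdot v)\,dg$, which is again a polynomial on $V$, now $SO(3)$--invariant. Since $\Psi$ is equivariant and $F$ is $SO(3)$--invariant, for every configuration $S$ one has $\tilde G(\Psi(S))=\int_{SO(3)}G(\Psi(gS))\,dg=\int_{SO(3)}F(gS)\,dg=F(S)$, so $\tilde G\circ\Psi=F$; this is a pointwise identity and uses no density of $\mathrm{im}(\Psi)$. (Alternatively, the uniqueness of the Normal Form says $G$ is the only polynomial with $G\circ\Psi=F$, and then each $g\cdot G$ satisfies the same equation, forcing $g\cdot G=G$, so $G$ is already invariant.) It therefore remains only to express the $SO(3)$--invariant polynomial $\tilde G$ on $V$ through contractions.

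Finally I would apply the FFT. Decomposing $\tilde G$ into its multi--homogeneous components in the tensor variables $M^{(k_j)}_{\gamma_j}$ (each still $SO(3)$--invariant) and polarizing, each component is the restriction to the diagonal of a multilinear $SO(3)$--invariant of tensors in $\BR^3$. By the tensorial form of the First Fundamental Theorem for $SO(3)$ (Weyl, \emph{The Classical Groups}), every such multilinear invariant is a linear combination of complete contractions of the tensor product of the given tensors with copies of the metric $\delta_{ij}$ and of the Levi--Civita symbol $\eps_{ijk}$ (by $\eps\otimes\eps=\det(\delta)$ a single factor $\eps$ always suffices). Restituting and pulling everything back through $\Psi$, each complete contraction becomes a scalar built from a tensor product of the moment tensors $M^{(k)}_\gamma$ (with repetitions allowed) and possibly $\eps_{ijk}$, followed by contractions --- exactly the scalars in the statement --- and summing over the components writes $F$ as a linear combination of them. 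For $G=O(3)$ the identical argument with $\eps$ dropped gives the corresponding statement.

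The main obstacle is not a new idea but the correct invocation of the FFT in its \emph{tensorial} version (rather than the more familiar form for systems of vectors), together with the polarization/restitution bookkeeping and with keeping track of when the pseudo--tensor $\eps_{ijk}$ is legitimate, i.e.\ the $SO(3)$ versus $O(3)$ distinction. The role of the Normal Form is precisely to turn the a priori infinite--dimensional problem about colored point sets of arbitrary size into this finite--dimensional classical invariant--theory question, where Weyl's theorem applies directly.
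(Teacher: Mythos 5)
Your proof is correct, but it takes a genuinely different technical route through the First Fundamental Theorem (FFT) than the paper does. The paper exploits the Normal Form more fully: after decomposing the PPSD into the components $\calE_{m_1,\dots,m_c;\,e_1,\dots,e_c}$, it uses the \emph{uniqueness} of the Normal Form to conclude that each symmetric polynomial $P_{m_1,\dots,m_c;\,e_1,\dots,e_c}$ in the $m=m_1+\dots+m_c$ vector variables $\vv r^{(1)},\dots,\vv r^{(m)}\in\BR^3$ is itself $SO(3)$--invariant, and then applies the FFT in its classical \emph{vector} form (invariants of systems of vectors are polynomials in $\langle\vv r^{(i)},\vv r^{(j)}\rangle$ and $\det(\vv r^{(i)},\vv r^{(j)},\vv r^{(k)})$; Procesi \S 11.2.1), followed by an index--pushing step that re-expresses the sums over points as contractions of moment tensors. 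You instead stay one level up: you package the fundamental features into the moment-tensor map $\Psi$ with values in $V=\bigoplus_j\mathrm{Sym}^{k_j}(\BR^3)$, make the polynomial $G$ on $V$ invariant by Haar averaging (a clean alternative to the uniqueness argument; note the paper does not explicitly establish that $G$ with $G\circ\Psi=F$ is unique, so the averaging step is the safer route), and then polarize and invoke the \emph{tensorial} form of the FFT to write each multilinear piece as complete contractions with $\delta$'s and $\eps$'s, before restituting. Both are valid; the paper's route is more economical in machinery since the Normal Form hands you the vector variables directly and avoids polarization/restitution and the tensorial FFT (which itself reduces to the vector FFT by the same polarization), while your route avoids having to track the $SO(3)$-action through the Normal Form components and shifts the symmetry bookkeeping into the averaging integral. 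The $SO(3)$/$O(3)$ distinction via $\eps_{ijk}$ is handled identically in both.
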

\noindent
We formulated this theorem for case i only (polynomial functions).
This statement follows from the uniqueness of a normal form of PPSDs and the First 
Fundamental Theorem of Invariant Theory for the group $SO(3)$, see Appendix \ref{sec:ProofInvariantTheory}. An analogue for polynomial functions in a fixed number of points and $G=O(3)$ and $|\calC|=1$  was proved in \cite{shapeev2016moment}.
\smallspace
This approach gives a much smaller list of invariants for small $|\calC|$, but that is mainly an advantage 
for $|\calC|=1$ which is less relevant for applications in chemistry. In the straightforward
implementation they are expensive to calculate, and we will not make use of them in the rest of the paper.
%
%---------------------------------------------------------------------------
\section{Pseudo code for algorithm}
\label{sec:Algorithm}
%---------------------------------------------------------------------------
When we want to learn an invariant property $f(\{(\vv r_i, \gamma_i)\})$ of
colored point sets, we can use different Machine Learning algorithms to match it 
to parameterized functions $g(\{(\vv r_i, \gamma_i)\}, \theta)$ for a collection of
parameters $\theta$. For example, for Stochastic Gradient Descent we would start 
with random initial parameters $\theta_0$, and then in each step improve the $\theta$
to minimize a loss function of $\theta$ that depends on how close the 
$g(\{(\vv r_i, \gamma_i)\}, \theta)$ match the $f(\{(\vv r_i, \gamma_i)\})$ for the 
configurations in a training set. The description in the main part gives invariant / covariant
features of configurations $\{(\vv r_i, \gamma_i)\}$ that can be used to 
construct such parameterized functions $g(\{(\vv r_i, \gamma_i)\}, \theta)$.
In the following, we give pseudo code for an implementation of the simplest such parameterized function, 
which just computes a linear combination of invariant features.
\smallspace
Unless we have some special information that the true function $f$ must be homogeneous of a certain degree
in the fundamental features, we will in general use $n_{mat}>1$ matrix products with different lengths $b_i$,
to be able to mix invariants of different body order.\\
Since we want invariants, the covariants of form \eqrefMatrixInvariants\ should actually
be scalars, i.e. the output is in $\irred{a_m}$ with $a_m=0$, so the products look like
\begin{equation*}
   M_{a_{m-1}, 0, l_m}(\gamma_m)\cdot ... 
   \cdot M_{a_1, a_2, l_2}(\gamma_2) \cdot M_{0,a_1, l_1}(\gamma_m)
\end{equation*}
\hskip 30mm\includegraphics[width=0.55\textwidth]{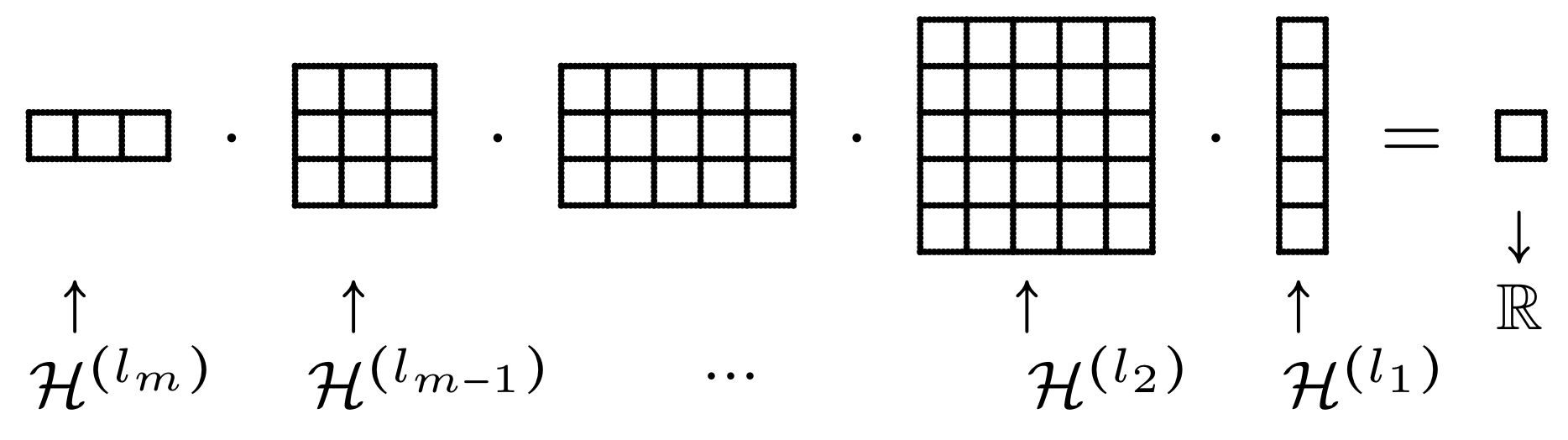}
\\
\noindent
In a concrete implementation, we can choose to re--use some of the computations for different invariants.
Here we will do this in a simple way -- we re--use the matrix product in the middle for $n_{vec}$ different
pairs of vectors at both ends.\\
Bundling $n_{vec}$ vectors of size $2l_1+1$ on the right end can be expressed by removing the first vector and
using a $(2l_1+1) \times n_{vec}$ matrix $V$ on the right:
\smallspace
\phantom{.}\hskip 30mm\ 
\includegraphics[width=0.6\textwidth]{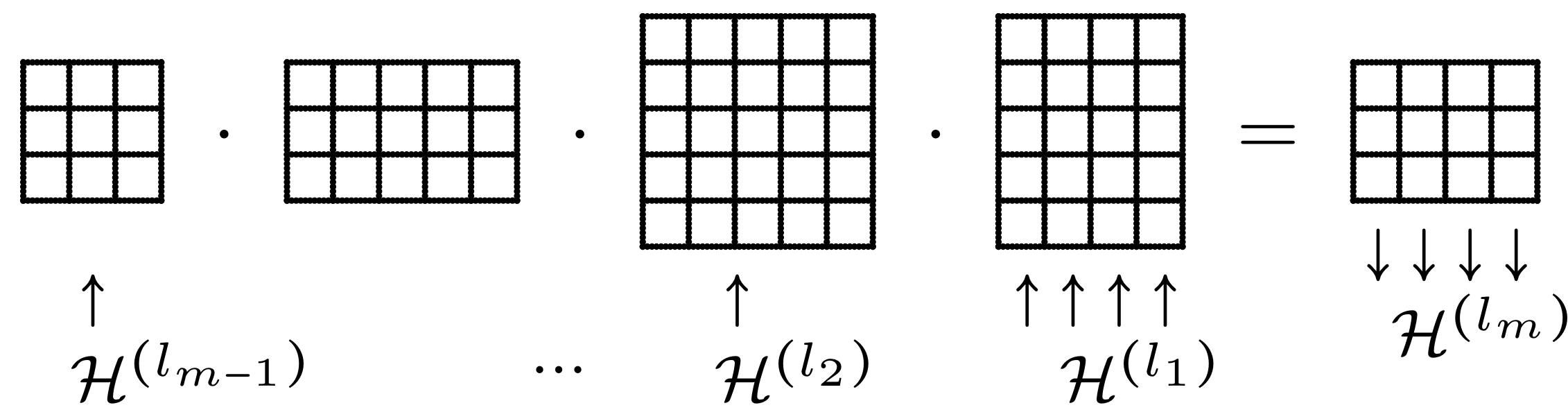}
\\
Each of the resulting $n_{vec}$ vectors in $\irred{l_m}$ can then be paired with each of another $n_{vec}$ vectors
in $\irred{l_m}$ to give $n_{vec}^2$ scalar products.

To apply the ``matrix of matrices'' approach, we pick lengths $2l_i+1$ for $i=1,2,...,r$ which then leads
to vectors of length $K$ and $K\times K$ matrices with 
\[
    K  := (2 l_1 + 1) + (2 l_2 + 1) + ... (2 l_r + 1).
\]
In the Algorithm 1 outlined below these $K$--dimensional vectors are computed in step 2 as linear combinations
(with learnable coefficients) of the fundamental features computed in step 1.
The $K\times K$--matrices are computed in step 3.\\
In step 4 we compute the matrix products 
of $b_i$ $K\times K$--matrices with $n_{vec}$ $K$--dimensional vectors collected in 
a $K\times n_{vec}$ matrix $V$. The resulting $n_{vec}$ $K$--dimensional vectors are then
decomposed into $n_{mat}\cdot n_{vec}$ vectors of dimensions $2 l_i+1$ for $i=1,2,...,r$,
taking the scalar products with $n_{vec}$ vectors of dimensions $2 l_i+1$ for $i=1,2,...,r$ then
gives $n_{mat}\cdot r \cdot n_{vec}^2$ invariants.
\\
In the final step 5, we use some more parameters to output a linear combination of 
the invariants. The completeness theorems say that such linear combinations are already capable of approximating
any invariant feature, and we will use this also in a concrete application.
\smallspace
\begin{algorithm}
\caption{Parameterized $SO(3)$--invariant}\label{alg:WithoutRadialFunctions}
  \DontPrintSemicolon
  {\bf Input:} Points $\vv r_i\in \BR^3$ of color $\gamma_i\in \calC$ for $i=1,...,n$\;
  {\bf Hyperparameters:}\;
  \ \ Number of vectors for scalar products: $n_{vec}$\;
  \ \ Number matrix products $n_{mat}$\;
  \ \ Integers $0\leq l_1 \leq ... \leq l_r$ corresponding to matrix\;
  \ \ \ \ \ sides $2l_i+1$ of submatrices.\;
  \ \ For $i=1,...,n_{mat}$: Integers $b_i\geq 0$ (corresponding to\;
  \ \ \ \ \ body orders $b_i+2$).\;
  {\bf Parameters:} Coefficients for linear combinations in\;
  \ \ vectors, matrices, and combined invariants.\;
  {\bf Compute:}\;
  \ \ 1. Spherical harmonics:\;
  \ \  \ \ For $l = 0,...,2 l_r$, $\gamma\in \calC$ set
       $Y_l(\gamma) := \sum_{\vv r\in S_\gamma}Y_l(\vv r)$   \;
  \ \ 2. Vectors:\;
  \ \ \ \ For $i=1,...,r$:\;
  \ \ \ \ \ \ Compute $2\cdot n_{mat} \cdot n_{vec}$ linear comb. of the
              $Y_{l_i}(\gamma)$ \;
  \ \ 3. Matrices:\;
  \ \ \ \ For $(a,b)\in \{l_1,...,l_r\}^2$:\;
  \ \ \ \ \ \ Compute $b_1+...+b_{n_{mat}}$ matrices of shape $a\times b$\;
  \ \ \ \ \ \ by linear combinations of $\iota_{a,b,l}(Y_l(\gamma))$ for\;
  \ \ \ \ \ \ $l=|a-b|,...,a+b$.\;
  \ \ \ \ Assemble them to $b_1+...+b_{n_{mat}}$ square matrices\;
  \ \ \ \ \ \ of shape $l_1+...+l_r$.\;
  \ \ 4. Products:\;
  \ \ \ \ For $i=1,...,n_{mat}$:\;
  \ \ \ \ \ \ Assemble $n_{vec}$ column vectors from 2. into $V$,\;
  \ \ \ \ \ \ use matrices from 3. to compute products \;
  \ \ \ \ \ \ $W := M_1 \cdot ... \cdot M_{b_i} \cdot V$.\;
  \ \ \ \ \ \ Take all scalar products of irreducible parts of\;
  \ \ \ \ \ \ columns of $W$ with vectors from 2.\;
  \ \ \ \ This gives $n_{mat} \cdot r \cdot n_{vec}^2$ invariants.\;
  \ \ 5. {\bf Output:}\;
  \ \ \ \ Linear combination of these invariants.
\end{algorithm}
\smallspace
In the main part we also mentioned another way to extract invariant scalars from a covariant matrix of matrices:
Take the traces of the square sub--matrices. This is implemented in \texttt{E3x}, together with methods to 
build up a matrix of matrices from vectors in irreducible representations. This variant of 
Algorithm~\ref{alg:WithoutRadialFunctions} (for simplicity, for one matrix product) 
would then be formulated with \texttt{E3x} as:
\smallspace
{\scriptsize
\begin{lstlisting}
from e3x.matrix import matmat
from e3x.so3 import irreps
from jax import numpy as jnp

def f(params, conf, max_degree, ls, mult, n_factors, shift_by_id):
  """Function approximation by matrix products.

  Args:
    params: List of parameters.
    conf:   Float[n_colors, n_points, 3]  Configuration of points
    l_max:  Maximal degree (L) of irreducibles in the matrices.
    ls:     List of L's for the submatrices.
    mult:   Multiplicity of each L.
    n_factors: Number of factors in matrix product.
    shift_by_id: Shift matrix multiplication by identity matrix.

  Returns:
    Estimated function.
  """
  sh = irreps.spherical_harmonics(conf, max_degree=max_degree)
  sum_sh = jnp.sum(sh, axis=1)           # [n_colors, (L+1)**2]
  sh_features = jnp.transpose(sum_sh)    # [(L+1)**2, n_colors]
  primary_features = matmat.combine_irreps(sh_features, params[0], 'high')
  product = matmat.make_square_matrix(primary_features, ls, mult, max_degree,
                                      shift_by_id, 'high')
  for i in range(1, n_factors):
    primary_features = matmat.combine_irreps(sh_features, params[i], 'high')
    matrix_features = matmat.make_square_matrix(primary_features, ls, mult,
                                                max_degree, shift_by_id, 'high')
    product = jnp.matmul(product, matrix_features, precision='high')
  prod_traces = matmat.get_traces(product, ls, mult, shift_by_id)
  result = jnp.dot(prod_traces, params[-1], precision='high')
  return result
\end{lstlisting}
}
and the parameters would be initialized by
\smallspace
{\scriptsize
\begin{lstlisting}
def init_params(
    key, ls, mult, max_degree, n_colors, n_factors,
    factor_mat, factor_final
):
  keys = jax.random.split(key, n_factors + 1)
  dict_irreds = matmat.make_dict_irreps_mult(ls, max_degree = max_degree)
  init = matmat.init_matrix_irreps_weights
  params = [
      init(keys[i], n_colors, dict_irreds, mult, factor_mat)
      for i in range(n_factors)
  ]
  params.append(
      jax.random.normal(keys[-1], (len(ls) * mult**2,)) * factor_final
  )
  return params
\end{lstlisting}
}
\smallspace
where \texttt{factor\_mat} would need to be adjusted such that the product of matrices does not go to infinity
or zero. It is easier and more efficient to use a product of matrices ``shifted by the identity matrix''
\[
   (Id + A_1) \cdot (Id + A_2) \cdot ... \cdot (Id + A_n) - Id
\]
where the $A_i$ are initially small. Here the identity matrices $Id$ are similar to skip connections in ResNets \cite{He2015DeepRL} and allow
a smooth learning of all factors, even for large $n$. In the code using \texttt{E3x} both methods are supported: The original matrix product corresponds to setting the {\texttt bool} parameter 
\texttt{shift\_by\_id} to \texttt{False}, for the second method it would be set to \texttt{True}.
\smallspace
We have pointed out that both the Clebsch--Gordan operation and matrix multiplication define maps
\[
   \left(\irred{0}\oplus...\oplus\irred{l}\right) \otimes
   \left(\irred{0}\oplus...\oplus\irred{l}\right) \ \arrow \ 
   \irred{0}\oplus...\oplus\irred{l}
\]
which are used to build up different linear combinations of covariants of higher body
order, and that while the Clebsch--Gordon maps are of complexity $O(l^6)$, the matrix multiplication 
is of complexity $O(l^3)$. However, when we use the matrix multiplication approach, we also need to convert
spherical harmonics of degrees $0,...,l$ into a $(l+1)\times(l+1)$--matrix (and similar for rectangular matrices).
While this can be efficiently encoded as a tensor contraction ({\tt einsum} in JAX), it is theoretically of order $O(l^4)$.
In theory, this can be avoided using complex spherical harmonics, which makes this also an $O(l^3)$ operation. However, for
moderate $l$ it seems the simple and efficient tensor operation is preferable to the more complicated operation using complex
numbers, so in {\tt E3x} we use the simple implementation.
\smallspace
While these linear combinations of step 5 of the outlined Algorithm 1 are enough in theory, it is very easy and
may be beneficial in practical applications to add e.g.\ a neural network computation on top of the invariants
to further increase the flexibility of the resulting functions.
\\
Further easy flexibility increases could be obtained (if necessary) by parameterized operations on the partial
matrix products $M_1\cdot...\cdot M_j$, e.g. by using linear combinations of different $a\times b$ submatrices
and the transpose of $b\times a$ submatrices or by multiplying $a\times b$ submatrices with learnable non-linear
scalar functions of the traces of $c\times c$ submatrices.\\
Other possible operations include fast implementations of transcendental matrix functions like $exp$ on $a\times a$ submatrices, and linear combinations of the individual $\irred{l}$ components of different submatrices, as implemented
by the \texttt{FusedTensor} layers in \texttt{E3x} for $r=1$.
% ----------------------------------------------------------

%---------------------------------------------------------------------------
\section{Incompleteness of 3 body functions}
\label{sec:Incompleteness2point}
%---------------------------------------------------------------------------
An $SO(3)$--invariant PPSD of order $m$ (i.e. a $(m+1)$ body function) gives the same 
value on point sets $S, S'$ if their (multi--)set of ``subsets of cardinality $m$ modulo
$G$'' is the same. 
\smallspace
To illustrate this in the case $m=2$ note that 2--element subsets of $\BS^2$
modulo $SO(3)$ are given by the angle between them (or, equivalently, their scalar product). \\
So $SO(3)$--invariant 3 body functions on $\BS^2$ only depend on the 
(multi--)\emph{set} of angles between the points.
\smallspace
We now give examples of point configurations on the circle $\BS^1\subset\BS^2$ that are substantially different, 
but still have the same set of angles.\\
For the first example, we divide the unit circle into 30 equal angles and position points
at a subset of these 30 angles, so the points are given by a residue class modulo 30. \\
The points at
\[
    0, 1, 8, 11, 13 \qquad \hbox{or} \qquad 0, 10, 11, 13, 18
\]
give the same \emph{set} of differences mod 30 (each difference occurs only once):
\[
    \{ 1, 2, 3, 5, 7, 8, 10, 11, 12, 13\}
\]
\smallspace
\par\noindent
\includegraphics[width=0.7\textwidth]{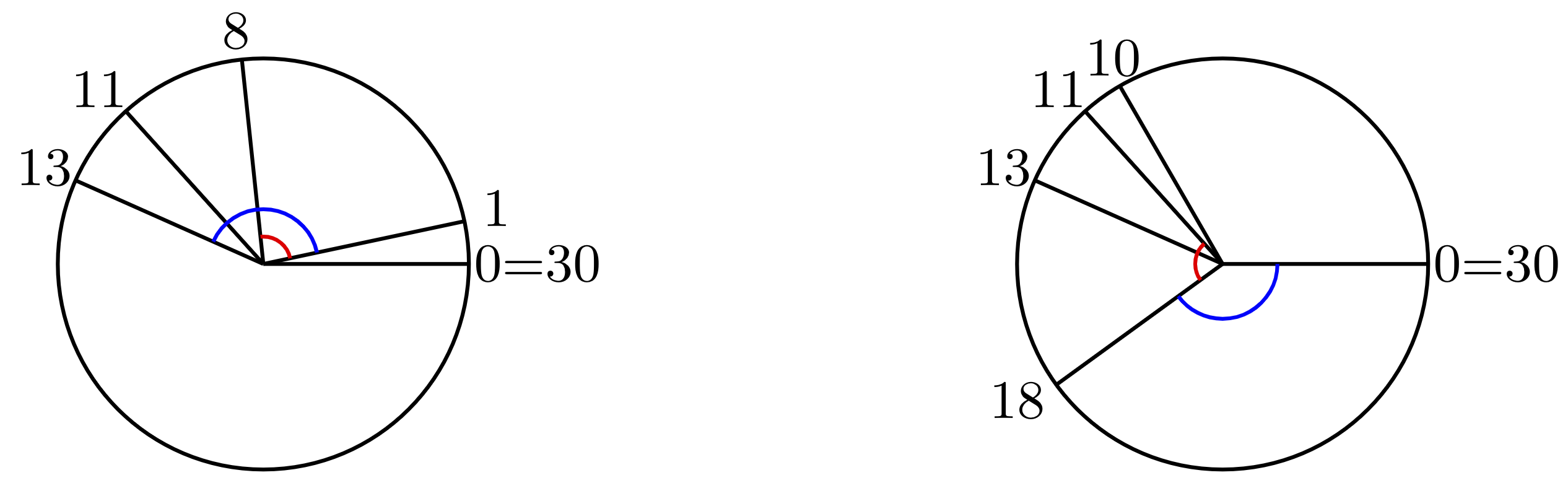}
% \psset{unit =0.8cm, linewidth=0.8\pslinewidth}
% \begin{pspicture}(1.5,0)(14,4)
%   \pscircle(3, 2){1.5}
%   \SpecialCoor
%   \psline[origin={3,2}](1.5;   0)
%   \psline[origin={3,2}](1.5;  12)
%   \psline[origin={3,2}](1.5;  96)
%   \psline[origin={3,2}](1.5; 132)
%   \psline[origin={3,2}](1.5; 156)
%   \psarc[linecolor=red](3,2){0.2}{12}{96}
%   \psarc[linecolor=blue](3,2){0.4}{12}{156}
%   \rput(5.0, 2.0){\small{0=30}}
%   \rput(4.7, 2.4){\small{1}}
%   \rput(2.8, 3.7){\small{8}}
%   \rput(1.8, 3.27){\small{11}}
%   \rput(1.4, 2.7){\small{13}}
%   \pscircle(10, 2){1.5}
%   \psline[origin={10,2}](1.5;   0)
%   \psline[origin={10,2}](1.5; 120)
%   \psline[origin={10,2}](1.5; 132)
%   \psline[origin={10,2}](1.5; 156)
%   \psline[origin={10,2}](1.5; 216)
%   \psarc[linecolor=red](10,2){0.2}{132}{216}
%   \psarc[linecolor=blue](10,2){0.4}{216}{0}
%   \rput(12.0, 2.0){\small{0=30}}
%   \rput(9.2, 3.5){\small{10}}
%   \rput(8.8, 3.27){\small{11}}
%   \rput(8.4, 2.7){\small{13}}
%   \rput(8.5, 1.0){\small{18}}
% \end{pspicture}
\smallspace
While this may look like some coincidence, there are actually whole submanifolds of pairs of
configurations that cannot be distinguished, here is the simplest example:
\smallspace
The four points at angles $0,\alpha,\alpha + \frac \pi 2,\pi$ or $ 0,\alpha,\pi,\alpha - \frac \pi 2$
give the same \emph{set} of angles between them:
$\left\{ \alpha,\frac \pi 2 + \alpha,\frac \pi 2 - \alpha,\pi-\alpha,\frac \pi 2, \pi \right\}$
\smallspace
\par\noindent
\includegraphics[width=0.7\textwidth]{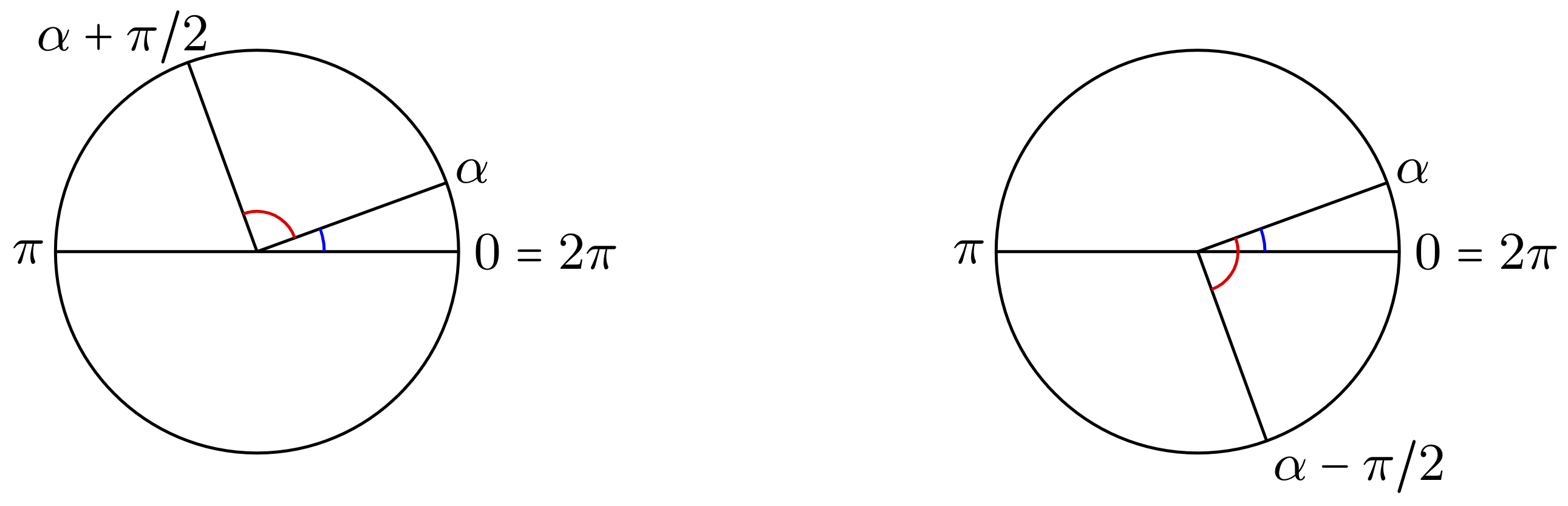}
% \psset{unit =0.8cm, linewidth=0.8\pslinewidth}
% \begin{pspicture}(1.5,0)(14,4)
%   \pscircle(3, 2){1.5}
%   \psnode(3,2){A} 
%   \SpecialCoor
%   \psline[origin={3,2}](1.5;0)
%   \psline[origin={3,2}](1.5;20)
%   \psline[origin={3,2}](1.5;110)
%   \psline[origin={3,2}](1.5;180)
%   \psarc[linecolor=red](3,2){0.3}{20}{110}
%   \psarc[linecolor=blue](3,2){0.5}{0}{20}
%   \rput(5.15, 2.0){\small{$0=2\pi$}}
%   \rput(4.6,2.6){\small{$\alpha$}}
%   \rput(2.0,3.6){\small{$\alpha + \pi/2$}}
%   \rput(1.3,2){\small{$\pi$}}
%   \pscircle(10, 2){1.5}
%   \psline[origin={10,2}](1.5;0)
%   \psline[origin={10,2}](1.5;20)
%   \psline[origin={10,2}](1.5;290)
%   \psline[origin={10,2}](1.5;180)
%   \rput(12.15, 2.0){\small{$0=2\pi$}}
%   \rput(11.6,2.6){\small{$\alpha$}}
%   \rput(8.3,2){\small{$\pi$}}
%   \rput(11.2,0.4){\small{$\alpha - \pi/2$}}
%   \psarc[linecolor=red](10,2){0.3}{-70}{20}
%   \psarc[linecolor=blue](10,2){0.5}{0}{20}
% \end{pspicture}
\smallspace
For more examples and a closer analysis, see \cite{Pozdnyakov2020}.
%
%--------------------------------------------
\section{Some background from representation theory}
\label{sec:RepresentationTheory}
%--------------------------------------------
We collect here some notations and remarks about representation theory.
Most of the time we focus on representations on real vector spaces, but for 
the comparison between matrix products and Clebsch--Gordan operations we 
will also make use of complex representations.
\smallspace
An elementary introduction to representations over the real numbers, tailored 
to our purposes, can be found in \cite{Unke2024e3x},
we only repeat here some notations and point out some special details.
%--------------------------------------------
\subsection{The irreducible representations \texorpdfstring{$\irred{L}$ of $SO(3)$}{of SO(3)}}
%--------------------------------------------
\noindent
There is one irreducible representation of $SO(3)$ of degrees $2L+1$ for
$L=0,1,2,...$ given by the harmonic homogeneous polynomials of total 
degree $L$ in three variables $x,y,z$; we denote this real vector space
by $\irred{L}$.
Every irreducible representation over $\BR$ is isomorphic to exactly one of these $\irred{L}$, and likewise every irreducible representation over $\BC$ is isomorphic
to exactly one of the corresponding complex representations 
$\irred{L}\otimes_\BR \BC$.

One particular basis of the vector space $\irred{L}$ is given by the real--valued
spherical harmonics $Y_l^m$ with $m=-l,..., l-1, l$.

%--------------------------------------------
\subsection{Equivariant maps \texorpdfstring{$\BS^2\arrow\irred{L}$}{from the sphere to an 
irreducible representation}}
\label{subsec:EquivariantMaps}
%--------------------------------------------
\noindent
Let $(\rho_L,\irred{L})$ be an irreducible representation of $SO(3)$. An $SO(3)$--equivariant
function $f:\BS^2\arrow\irred{L}$ is already determined by its value at one point, e.g. at $(0,0,1)^T$, since for every other point $\vv u\in \BS^2$ there is a rotation $g\in SO(3)$ that moves $(0,0,1)^T$ to $\vv u$. Furthermore, the point $(0, 0, 1)^T\in \BS^2$ is the fixed point of
the subgroup $T\subset SO(3)$ of rotations around the $z$--axis. So it must be mapped by the equivariant $f$ to a fixed point of the group $\rho_L(T)$ in $\irred{L}$. However, 
this fixed point set in $\irred{L}$ is only a 1-dimensional vector space: Over $\BC$
a basis of the $2l+1$--dimensional $\irred{L}$ is given by the $2l+1$ complex spherical harmonics of weight $-l,...,l-1,l$  (see \ref{subsec:ComplexSphericalHarmonics}), only the
subspace of weight 0 is invariant under 
$T$, over $\BR$ it is given by the real spherical harmonics $Y_l^0$.\\
Therefore, $(0,0,1)^T$ must be mapped to a point on that line, so all equivariant maps  $f:\BS^2\arrow\irred{L}$
are the same up to a multiplicative factor.

%--------------------------------------------
\subsection{Schur's lemma}
\label{subsec:SchursLemma}
%--------------------------------------------
\noindent
Given representations with decompositions into irreducibles, Schur's lemma describes
what the maps between those representations can be. It is almost
trivial to prove, but has powerful consequences. We give a proof here since usually it is only formulated and proven for complex vector spaces, but we want to apply it
also to real representations of $SO(3)$.
\begin{lemma}[Schur's lemma]\label{lemma:Schur}
Let $V,W$ be $K$--vector spaces for $K=\BR$ or $K=\BC$, and let $\rho_1:G\arrow GL(V)$, $\rho_2:G\arrow GL(W)$ be {\bf irreducible} representations of $G$, and $f:V\arrow W$ a covariant linear map.
\begin{enumerate}
    \item $f$ is either 0 or bijective.
    \item If these representations are not isomorphic, $f=0$.
    \item If $V$ is a finite dimensional $\BC$--vector space, the only covariant linear maps $f:V\arrow V$ are the $f=\lambda\cdot Id$ for some $\lambda\in\BC$.
    \item If $V$ is an {\bf odd dimensional} $\BR$--vector space, the only covariant linear maps $f:V\arrow V$ are the $f=\lambda\cdot Id$ for some $\lambda\in\BR$.
\end{enumerate}
\end{lemma}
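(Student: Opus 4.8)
The plan is to prove the four parts of Schur's lemma in sequence, with parts 1--2 being standard and purely algebraic, and parts 3--4 requiring an eigenvalue argument that is sensitive to the ground field.

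\textbf{Parts 1 and 2.} First I would observe that the kernel $\ker f \subseteq V$ and the image $\im f \subseteq W$ are $G$-invariant subspaces: if $v \in \ker f$ then $f(\rho_1(g)v) = \rho_2(g) f(v) = 0$, and similarly $\rho_2(g)(\im f) \subseteq \im f$. By irreducibility of $\rho_1$, either $\ker f = V$ (so $f = 0$) or $\ker f = 0$ (so $f$ is injective); by irreducibility of $\rho_2$, either $\im f = 0$ (so $f = 0$) or $\im f = W$ (so $f$ is surjective). Combining these, $f$ is either $0$ or a bijection, which is part 1. For part 2, if $f \neq 0$ then by part 1 it is a bijection and hence an isomorphism of representations (its inverse is automatically covariant), contradicting the assumption that $\rho_1$ and $\rho_2$ are non-isomorphic; so $f = 0$.

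\textbf{Part 3.} Here I would use that $\BC$ is algebraically closed. Given a covariant linear map $f : V \arrow V$ with $V$ a finite-dimensional $\BC$-vector space, the characteristic polynomial of $f$ has a root $\lambda \in \BC$, so $f - \lambda \cdot \mathrm{Id}$ is a covariant linear map with nontrivial kernel. By part 1 applied to $f - \lambda \cdot \mathrm{Id}$ (which is a covariant endomorphism of the irreducible $V$), it must be $0$, i.e.\ $f = \lambda \cdot \mathrm{Id}$.

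\textbf{Part 4.} This is the one part where the argument genuinely differs, and it is the step I expect to be the main obstacle, since over $\BR$ the characteristic polynomial need not have a real root. The trick is to use \emph{odd} dimension: a real polynomial of odd degree always has a real root, so the characteristic polynomial of $f : V \arrow V$ with $\dim_\BR V$ odd has some real eigenvalue $\lambda \in \BR$. Then $f - \lambda \cdot \mathrm{Id}$ is again a covariant endomorphism of the irreducible real representation $V$ with nonzero kernel, so by part 1 it vanishes, giving $f = \lambda \cdot \mathrm{Id}$. (One should note explicitly why the fallback of complexifying does not immediately work: the complexification of a real irreducible representation can fail to be irreducible, so part 3 cannot simply be invoked; it is precisely the oddness that supplies a real eigenvalue and rescues the argument. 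All the $\irred{L}$ of $SO(3)$ have odd dimension $2L+1$, so this is exactly the case the paper needs.)
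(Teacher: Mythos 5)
Your proof is correct and follows essentially the same route as the paper's: parts 1--2 by the standard kernel/image invariance argument, and parts 3--4 by finding an eigenvalue of $f$ and applying part 1 to $f - \lambda\cdot\mathrm{Id}$. The only cosmetic difference is that in part 4 you invoke the intermediate value theorem (odd-degree real polynomial has a real root) while the paper phrases the same fact via conjugation-invariance of the eigenvalue multiset; these are interchangeable.
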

\begin{proof}
\begin{enumerate}
  \item Kernel and image must be mapped by $G$ into themselves, so they must be $\{0\}$ or the full space.
     If $ker(f)=V$ or $im(f)=\{0\}$, we have $f=0$. Otherwise, we must have $ker(f)=\{0\}$ and $im(f)=W$, i.e.
     $f$ is bijective.
  \item Follows from 1: If $f$ is bijective, there is an inverse $f^{-1}$ and these define an isomorphism.
  \item Over $\BC$, $f$ must have some eigenvalue $\lambda$ with eigenvector $\vv v\neq 0$. Then $f-\lambda Id$
    is again a covariant linear map; since $(f-\lambda\cdot Id)\vv v = 0$, it is not bijective, 
    so it must be 0, i.e. $f=\lambda \cdot Id$.
  \item Over $\BR$, the (multi-) set of eigenvalues of $f$ (with multiplicities) must be invariant under complex
  conjugation, since its cardinality $dim(V)$ is odd, there must be at least one real eigenvalue $\lambda$. Then we can apply the
  same argument as in 3.
\end{enumerate}
\end{proof}
\noindent
The last part is no longer true for even dimensions:\\
The two-dimensional representation $\rho: SO(2)\arrow GL(2), g\mapsto g$ is
irreducible, but any rotation is also $G$--covariant.
\\
However, since all irreducible representations of $SO(3)$ are
odd--dimensional, we can apply 4. as a replacement of 3.
\smallspace
\begin{corollary} \label{corollary:HomMatrix}
Let $\BK$ be $\BR$ or $\BC$, and let $\rho:G\arrow GL(V)$, $\rho':G\arrow GL(V')$ be representations that can be decomposed into irreducibles:
\begin{equation} \label{eq:isotypic}
   V \simeq V_1^{\oplus m_1} \oplus V_2^{\oplus m_2} \oplus ... \oplus V_n^{\oplus m_n}
\end{equation}
and
\begin{equation} \label{eq:isotypic2}
   V' \simeq V_1^{\oplus m'_1} \oplus V_2^{\oplus m'_2} \oplus ... \oplus V_n^{\oplus m'_n}
\end{equation}
with $V_1,..,V_n$ different (i.e. non-isomorphic) irreducible representations and 
\[ 
   V_i^{\oplus m_i} := V_i \oplus ... \oplus V_i \qquad \hbox{sum of}\ m_i\  \hbox{copies of}\ V_i.
\]
(We allow exponents to be 0 to be able to use the same $V_i$, with $V_i^{\oplus 0} := \{0\}$.)
If $\BK=\BR$, also assume that the $\dim(V_i)$ are odd.
Then the vector space of covariant maps $V\arrow V'$ can be described by a vector space isomorphism
\begin{equation} \label{eq:isotypic3}
    Hom[(\rho,V), (\rho',V')] \simeq \bigoplus_{\genfrac{}{}{0pt}{}{i=1}{m_i>0, m'_i>0}}^n M_{m_i,m'_i}(\BK)
\end{equation}
\end{corollary}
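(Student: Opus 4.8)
The plan is to use the given decompositions \eqref{eq:isotypic}, \eqref{eq:isotypic2} to present a covariant map as a block matrix indexed by the irreducible summands, and then to pin down each block by Schur's lemma (Lemma \ref{lemma:Schur}).

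First I would fix $G$-isomorphisms $V \cong \bigoplus_{i=1}^n V_i^{\oplus m_i}$ and $V' \cong \bigoplus_{i=1}^n V_i^{\oplus m'_i}$ once and for all, and transport $f \in \Hom_G(V,V')$ along them, so we may assume $V$ and $V'$ are literally these direct sums. Write $\iota_{i,s}: V_i \hookrightarrow V$ for the inclusion of the $s$-th copy of $V_i$ ($1 \le s \le m_i$) and $\pi'_{j,t}: V' \twoheadrightarrow V_j$ for the projection onto the $t$-th copy of $V_j$ ($1 \le t \le m'_j$). Then $f$ is completely determined by the $G$-covariant composites $f_{(j,t),(i,s)} := \pi'_{j,t} \circ f \circ \iota_{i,s} : V_i \to V_j$, and conversely any choice of $G$-covariant maps in these slots reassembles into a $G$-covariant $f$; moreover $f \mapsto \bigl(f_{(j,t),(i,s)}\bigr)$ is $\BK$-linear. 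Hence $\Hom_G(V,V') \cong \bigoplus_{i,j}\bigoplus_{s,t} \Hom_G(V_i,V_j)$ as $\BK$-vector spaces.

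Next, by Lemma \ref{lemma:Schur}(2) the summands with $i \neq j$ vanish, since the $V_i$ are pairwise non-isomorphic; so only the diagonal composites $f_{(i,t),(i,s)} : V_i \to V_i$ survive. For these, Lemma \ref{lemma:Schur}(3) when $\BK = \BC$, or Lemma \ref{lemma:Schur}(4) when $\BK = \BR$ (here using the hypothesis that $\dim V_i$ is odd), shows every $G$-covariant endomorphism of $V_i$ equals $\lambda \cdot \mathrm{Id}_{V_i}$ for a unique $\lambda \in \BK$, i.e. $\Hom_G(V_i,V_i) \cong \BK$. Therefore, for each $i$ with $m_i > 0$ and $m'_i > 0$, the family $\bigl(f_{(i,t),(i,s)}\bigr)_{1 \le s \le m_i,\, 1 \le t \le m'_i}$ is recorded by a single $\BK$-matrix of that size (the $(t,s)$ entry being the scalar attached to $f_{(i,t),(i,s)}$), while indices $i$ with $m_i = 0$ or $m'_i = 0$ contribute nothing. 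Assembling the blocks over $i$ yields exactly \eqref{eq:isotypic3}, where $M_{m_i,m'_i}(\BK)$ and $M_{m'_i,m_i}(\BK)$ denote the same $\BK$-vector space, so the order of the indices is immaterial.

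The argument is essentially Schur's lemma plus bookkeeping, and the only genuine subtlety — the reason the hypothesis ``$\dim V_i$ odd'' appears — is the structure of $\Hom_G(V_i,V_i)$ over $\BR$: an irreducible real representation can have endomorphism algebra $\BR$, $\BC$, or $\BH$ (real, complex, or quaternionic type), and this space is one-dimensional over $\BR$ only in the first case, which is precisely what Lemma \ref{lemma:Schur}(4) guarantees when the dimension is odd. I would also note explicitly that the isomorphism \eqref{eq:isotypic3} depends on the chosen identifications \eqref{eq:isotypic}, \eqref{eq:isotypic2} (a different choice rescales each block by invertible matrices), which is why the statement only asserts the existence of a vector space isomorphism rather than a canonical one.
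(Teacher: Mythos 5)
Your proof is correct and is precisely the standard block-decomposition argument: transport along the chosen identifications, read off the matrix of $G$-covariant blocks $\pi'_{j,t}\circ f\circ\iota_{i,s}$, kill the off-diagonal ones by Lemma \ref{lemma:Schur}(2), and reduce the diagonal ones to scalars by Lemma \ref{lemma:Schur}(3) or (4). The paper actually states Corollary \ref{corollary:HomMatrix} with no proof at all, treating it as an immediate consequence of Schur's lemma, so your argument supplies exactly the justification being taken for granted; your closing remarks on the $\BR/\BC/\BH$ trichotomy (explaining why oddness of $\dim V_i$ is the right hypothesis over $\BR$) and on the non-canonicity of the isomorphism are both accurate and worth noting.
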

\smallspace
Among the many applications, we mention:
\begin{lemma}\label{lemma:unique_inv_scprod}
Let $V$ be an odd dimensional
vector space over $\BR$ and $\rho:G\arrow GL(V)$ an irreducible representation.
Then the invariant scalar product on $V$ is unique up to a factor.
\end{lemma}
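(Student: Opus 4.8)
\section*{Proof proposal}

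The plan is to rephrase an invariant scalar product as an equivariant linear map into the dual representation and then apply Schur's lemma for odd--dimensional real representations (Lemma~\ref{lemma:Schur}, part~4). A bilinear form $B:V\times V\arrow\BR$ is the same datum as a linear map $\phi_B:V\arrow V^*$, via $\phi_B(v)(w)=B(v,w)$; and $G$--invariance of $B$, i.e.\ $B(\rho(g)v,\rho(g)w)=B(v,w)$ for all $g\in G$, translates exactly into $\phi_B$ being $G$--equivariant when $V^*$ carries the contragredient representation $g\mapsto \rho(g^{-1})^T$. If $B$ is a scalar product it is in particular nondegenerate, so $\phi_B$ is then a ($G$--equivariant) isomorphism.

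Next I would invoke Schur. The dual $V^*$ of an irreducible representation is again irreducible (an invariant subspace of $V^*$ is the annihilator of an invariant subspace of $V$), and $\dim V^* = \dim V$ is odd. Given two invariant scalar products $B_1,B_2$, the composite $f := \phi_{B_1}^{-1}\circ\phi_{B_2}\colon V\arrow V$ is a $G$--equivariant endomorphism of the odd--dimensional irreducible real representation $V$, hence $f=\lambda\cdot Id$ for some $\lambda\in\BR$ by Lemma~\ref{lemma:Schur}, part~4. Therefore $\phi_{B_2}=\lambda\,\phi_{B_1}$, i.e.\ $B_2=\lambda B_1$. Equivalently, one can argue directly with Corollary~\ref{corollary:HomMatrix}: the space $\Hom_G(V,V^*)$ of all invariant bilinear forms on $V$ is at most one--dimensional over $\BR$, and it is nonzero because averaging any inner product over the compact group $G$ produces an invariant scalar product; so every invariant bilinear form, in particular every invariant scalar product, lies on a single line through the origin.

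The argument is routine once set up this way; the only genuine subtlety — and the reason the hypothesis ``$V$ odd--dimensional'' is present — is that over $\BR$ Schur's lemma forces equivariant self--maps to be scalars only in odd dimensions. In even dimensions an irreducible real representation can carry a larger algebra of equivariant endomorphisms (e.g.\ $SO(2)\arrow GL(2)$, where every rotation is equivariant), so this shortcut is unavailable there. Since all irreducible representations of $SO(3)$ and of $O(3)$ are odd--dimensional, the lemma applies in exactly the settings we need.
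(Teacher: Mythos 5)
Your proof is correct and takes essentially the same approach as the paper: form the composite $\phi_{B_1}^{-1}\circ\phi_{B_2}$ (the paper's $A_1^{-1}A_2$ in matrix form), observe it is a $G$--equivariant endomorphism of $V$, and conclude by Lemma~\ref{lemma:Schur}, part~4. Your phrasing via $V^*$ and the contragredient representation is just a coordinate--free restatement of the paper's matrix identity $\rho(g)^T A\,\rho(g)=A$.
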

\begin{proof}
Assume the invariant scalar products for $j=1,2$ are given by matrices $A_j$ as
$ \langle \vv v, \vv w \rangle_j = \vv v^T A_j \vv w$.
\\
Then being invariant under $G$ means for the matrices that
$  \rho(g)^T A_j \rho(g) = A_j$.
Then we get for the matrix $A_1^{-1} \cdot A_2$ the invariance
\begin{eqnarray*}
    A_1^{-1} \cdot A_2 
    &=& \rho(g)^{-1} A_1^{-1} \rho(g)^{-T} \cdot \rho(g)^T A_2 \rho(g) \\
    &=& \rho(g)^{-1} \left( A_1^{-1} A_2 \right) \rho(g) 
\end{eqnarray*}
which means $A_1^{-1}A_2$ is a covariant map.
So by part 4 of Schur's lemma above $A_1^{-1}\cdot A_2$ is a scalar multiple of the identity.
\end{proof}

%--------------------------------------------
\subsection{Isotypic decomposition}
\label{subsec:IsotypicDecomposition}
%--------------------------------------------
\begin{definition}[Isotypic component]\ \\
Let $\BK$ be either $\BR$ or $\BC$, and let $W$ be a finite dimensional $\BK$--vector
space, and $\rho:G\arrow GL(W)$ a representation of a group $G$.\\
For each irreducible representation $\lambda:G\arrow GL(V_\lambda)$ write $W_\lambda$ 
for the subspace of $W$ generated by the $\im(f)$ for covariant maps $f:V_\lambda\arrow W$,
it is called the isotypic component of $\lambda$ in $W$.
\end{definition} 
\noindent
Assume now $G$ is compact, then there exists a decomposition of $W$ into irreducible
components
\begin{equation}\label{eq:some_decomposition2}
  (W, \rho) \simeq (W_1,\rho_1) \oplus ... \oplus (W_m,\rho_m)
\end{equation}
Then all $(W_i, \rho_i) \simeq (V_\lambda,\lambda)$ are contained in $W_\lambda$ since
we have the covariant inclusion maps
\[
  f:V_\lambda\simeq W_i\hookrightarrow W 
  \qquad \hbox{with} \quad  \im(f)=W_i.
\]
So the sum of the $(W_i, \rho_i) \simeq (V_\lambda)$ are contained in $W_\lambda$.
\smallspace
Using Corollary \ref{corollary:HomMatrix} we can see that $W_\lambda$ cannot be larger:
\begin{corollary}\label{corollary:Isotypic}
Let $G$ be a compact group, $\BK=\BR$ or\ \,$\BC$, and $\rho:G\arrow GL(W)$ a finite dimensional representation over $\BK$. If $\BK=\BR$, also assume all irreducible representations are odd
dimensional.
Then for any decomposition \eqref{eq:some_decomposition2} the isotypic components
$W_\lambda$ are the sum of the  $(W_i, \rho_i) \simeq (V,\lambda)$.
\end{corollary}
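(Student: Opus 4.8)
The plan is to prove the two inclusions separately. One direction is already in hand: the text preceding the statement exhibits, for each summand $(W_i,\rho_i)\simeq(V_\lambda,\lambda)$, the covariant inclusion $f\colon V_\lambda\simeq W_i\hookrightarrow W$ with $\im(f)=W_i$, so each such $W_i$, and hence their sum, lies in $W_\lambda$. The real content is therefore the reverse inclusion $W_\lambda\subseteq\sum_{i\,:\,(W_i,\rho_i)\simeq(V_\lambda,\lambda)}W_i$, i.e.\ that $W_\lambda$ ``cannot be larger''.

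To get this, I would take an arbitrary covariant map $f\colon V_\lambda\arrow W$ and compose it with the projections $\pi_i\colon W\arrow W_i$ attached to the given decomposition \eqref{eq:some_decomposition2}. These $\pi_i$ are $G$--equivariant because \eqref{eq:some_decomposition2} is a decomposition of representations, so each composite $\pi_i\circ f\colon V_\lambda\arrow W_i$ is a covariant map between irreducible representations. By part~1 of Schur's lemma (Lemma~\ref{lemma:Schur}) each $\pi_i\circ f$ is either $0$ or an isomorphism, and in the latter case $(W_i,\rho_i)\simeq(V_\lambda,\lambda)$. Hence $\pi_i\circ f=0$ whenever $(W_i,\rho_i)\not\simeq(V_\lambda,\lambda)$, so $\im(f)$ is contained in the sum of exactly those summands isomorphic to $\lambda$. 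Since $W_\lambda$ is by definition generated by the images $\im(f)$ over all covariant $f\colon V_\lambda\arrow W$, the claimed inclusion follows, completing the proof.

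I do not expect a genuine obstacle here; the only point needing attention is \emph{which} part of Schur's lemma is invoked and why the hypotheses suffice. The argument above uses only the dichotomy ``zero or bijective'' (part~1 of Lemma~\ref{lemma:Schur}), which holds over any field, so the odd--dimensionality assumption is not actually consumed by this route; it is needed only if one instead derives the statement from Corollary~\ref{corollary:HomMatrix}. As an alternative line one could plug $V=V_\lambda$ into \eqref{eq:isotypic3}: the right--hand side collapses to a single block $M_{1,m}(\BK)\cong\BK^{m}$ with $m$ the multiplicity of $\lambda$ in $W$, and one then tracks that this identification is realized precisely by the $m$ inclusions of $V_\lambda$ into $W$, forcing $\im(f)$ into the $\lambda$--summands; this is the same conclusion with a bit more bookkeeping, so I would present the direct Schur argument as the main proof.
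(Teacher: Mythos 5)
Your argument is correct and matches the paper's proof essentially verbatim: both decompose a covariant $f\colon V_\lambda\arrow W$ through the projections $\pi_i$ onto the summands of \eqref{eq:some_decomposition2} and invoke the ``zero or bijective'' dichotomy of Schur's lemma to kill the components landing in non-$\lambda$ summands. Your side remark that this route uses only parts 1--2 of Lemma~\ref{lemma:Schur} (so the odd-dimensionality hypothesis is not actually consumed here) is a correct observation, though the paper keeps the hypothesis for uniformity with the surrounding results.
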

\begin{proof}
For any covariant map $f:V\arrow W$ we have maps
\[
\begin{tikzcd}
  \vv v\arrow[mapsto]{d}
     & V \arrow[swap]{d}{f} \arrow{r}{f_i}
     & W_i
     & \vv w_i\\
  f(\vv v) 
     & W \arrow{r}{\simeq}
     & W_1 \oplus ... \oplus W_m \arrow[swap]{u}{\pi_i}
     & \vv w_1+...+\vv w_m\ \  \arrow[mapsto]{u}
\end{tikzcd}
\]
and can decompose the map $f$ as
\[
    f(\vv v) = f_1(\vv v) + f_2(\vv v) + ... + f_m(\vv v), \qquad
    f_i(\vv v) \in W_i \subseteq W
\]
If $\rho_i$ is not isomorphic to $\lambda$, then by
Schur's lemma $f_j=0$, so $V_\lambda$ can also not be
larger than the sum of all $W_i$ with $\rho_j\simeq \lambda$.\\
So $V_\lambda$ is exactly the sum of all $W_i$
with $\rho_i\simeq \lambda$.
\end{proof}
\noindent
This means that we get for representations of a compact group an isotypic decomposition:
\begin{corollary}
Let $G,W,\rho$ as before, and let $\{\lambda_1,...,\lambda_k\}$ be the 
set of (isomorphism classes of) irreducible representations of $G$ for which
$W_{\lambda_i}\neq 0$.\\
Then 
\begin{equation}\label{eq:isotypic_decomposition}
    W = W_{\lambda_1} \oplus ... \oplus W_{\lambda_k}
\end{equation}
\end{corollary}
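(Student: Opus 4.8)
The plan is to combine the existence of an irreducible decomposition of $W$ with the identification of the isotypic components already established in Corollary \ref{corollary:Isotypic}. Since $G$ is compact, fix a decomposition $(W,\rho) \simeq (W_1,\rho_1)\oplus ... \oplus(W_m,\rho_m)$ into irreducible subrepresentations as in \eqref{eq:some_decomposition2} (obtained by averaging an arbitrary inner product over the Haar measure to make it $G$--invariant, and then splitting off irreducible summands by induction on $\dim W$). I would then group the indices $\{1,...,m\}$ according to the isomorphism type of $\rho_i$: for each irreducible representation $\lambda$ set $S_\lambda := \{\, i \mid \rho_i \simeq \lambda \,\}$, so that $\{1,...,m\}$ is the disjoint union of the nonempty $S_\lambda$.

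Next I would invoke Corollary \ref{corollary:Isotypic}, which (using that $\BK=\BC$, or $\BK=\BR$ with all irreducibles odd dimensional) identifies the isotypic component $W_\lambda$ with the sum $\sum_{i\in S_\lambda} W_i$. Since this sum ranges over a subset of the index set of the \emph{direct} sum $W = W_1\oplus ... \oplus W_m$, it is itself direct: $W_\lambda = \bigoplus_{i\in S_\lambda} W_i$. In particular $W_\lambda \neq 0$ exactly when $S_\lambda \neq \emptyset$, i.e.\ exactly when $\lambda$ occurs among the $\rho_i$; let $\lambda_1,...,\lambda_k$ enumerate these isomorphism classes.

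Finally, since the $S_{\lambda_j}$ partition $\{1,...,m\}$, reassembling the summands by isomorphism type gives
\[
   W \;=\; \bigoplus_{i=1}^m W_i \;=\; \bigoplus_{j=1}^k \Bigl(\bigoplus_{i\in S_{\lambda_j}} W_i\Bigr) \;=\; W_{\lambda_1}\oplus ... \oplus W_{\lambda_k},
\]
which is \eqref{eq:isotypic_decomposition}. There is essentially no obstacle beyond bookkeeping here: the two substantive ingredients --- existence of an irreducible decomposition (Haar averaging) and the intrinsic description of $W_\lambda$ via Schur's lemma (Corollary \ref{corollary:Isotypic}) --- are already in place, so all that remains is the observation that a direct sum decomposition may be regrouped by the isomorphism class of its pieces. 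The one point worth flagging is that each $W_{\lambda_j}$ is defined intrinsically, as the span of the images of covariant maps $V_{\lambda_j}\arrow W$, so the resulting decomposition does not depend on the auxiliary choice of splitting \eqref{eq:some_decomposition2}; this independence is precisely what Corollary \ref{corollary:Isotypic} provides.
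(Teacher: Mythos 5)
Your proof is correct and follows essentially the same route the paper takes implicitly: the paper states this corollary without a separate proof because it is an immediate regrouping consequence of Corollary \ref{corollary:Isotypic}, which is exactly the step you make explicit. Your additional remark that the intrinsic definition of $W_\lambda$ (via images of covariant maps) guarantees independence from the auxiliary decomposition is a nice clarification and matches the paper's intent.
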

Note that this decomposition is unique (up to the order in which we write the 
direct sum, of course), whereas \eqref{eq:some_decomposition2} was not.
%---------------------------------------------------------
\subsection{Dual, tensor product, and Hom}
\label{subsec:DualTensorHom}
%---------------------------------------------------------
\noindent
Given a representation $(\rho, V)$ over $\BK = \BR\ \hbox{or}\ \BC$, the dual space $V^*$
is the vector space of linear maps $L:V\arrow \BK$, on it the dual representation $(\rho^*, V^*)$
is defined by
\[
    \rho^*(g) L := \Big( v \mapsto L(g^{-1} x) \Big).
\]
For $v\in V, L\in V^*$ we also write $\langle L, v\rangle$ for $L(v)$.
\smallspace
The tensor product of representations $(\rho^V, V), (\rho^W, W)$ is a representation on 
the vector space $V\otimes W$. This vector space is spanned by expressions $v\otimes w$,
if $v$ and $w$ go over bases of $V,W$, the $v\otimes w$ go through a basis of $V\otimes W$.
The group operation on $V\otimes W$ has the property
\[
   \rho(g)(v\otimes w) = \rho^V(g)(v) \otimes \rho^W(g)(w).
\]
For slightly more details and examples see e.g. \cite{Unke2024e3x}.
\\
For two representations $(\rho^V, V), (\rho^W, W)$ we can form the vector space 
$Lin(V,W)$ of linear maps $V\arrow W$, this is a representation under
\[
    \rho^{VW}(g) f :=  \Big( v \mapsto \rho^W(g) f(\rho^V(g^{-1}) v) \Big)
\]
With these definitions of representations, the matrix multiplication
\[
    Lin(U,V) \times Lin(V,W) \arrow Lin(U,W)
\]
for representations $(\rho^U, U), (\rho^V, V), (\rho^W, W)$ 
and linear maps $f:U\arrow V, h:V \arrow W$ is a covariant map:
\begin{eqnarray*}
  \lefteqn{\rho^{UW}(h\circ f)(u)} \\
  &=& \rho^W(g)\  hf \Big(\rho^U(g^{-1}) u\Big)\\
  &=& \rho^W(g)\ h\Big(\rho^V(g^{-1})\ \rho^V(g)\ \ f\big(\rho^U(g^{-1}) u\big)\Big)\\
  &=& \Big(\rho^{VW}(g)(h)\Big)\ \Big(\rho^{UV}(g)(f)\Big)\ (u)
\end{eqnarray*}
\smallspace
If $V$ or $W$ are finite dimensional, the notions of $Lin$, $\otimes$ and dual representations
are related by the isomorphism of representations
\[
     V^* \otimes W \simeq Lin(V,W)
\]
which maps $v^* \otimes w$ to the linear map $v \mapsto \langle v^*, v \rangle w$.\\
(We are usually only interested in finite dimensional representations, but this
makes also sense for $W$ the infinite dimensional space of all PPSDs and $V$ 
a finite dimensional representation. If both $V$ and $W$ are infinite dimensional, this
algebraic isomorphism is no longer true: For $V=W$ the identity in $Lin(V,V)$ 
cannot be given by a finite linear combination of maps of rank 1.)
\smallspace
The homomorphisms in the category of $G$--representations are the covariant maps, 
they are just the $G$--invariant elements of the representation $Lin(V,W)$:
\[
    Hom_G(V,W) = Lin(V,W)^G = (V^* \otimes W)^G
\]

%---------------------------------------------------------
\subsection{Function spaces}
\label{subsec:FunctionSpaceReps}
%---------------------------------------------------------
\noindent
Let $X$ be a set on which $G$ acts, then $G$ also acts on the set $Map(X, \BR)$ of functions $f:X\arrow \BR$
by
\[
    \rho(g) f := \Big( x \mapsto f(g^{-1} x) \Big).
\]
The set of $G$--fixed points $Map(X,\BR)^G$ is just the set of invariant functions $X\arrow \BR$.
\smallspace
More generally, for a representation $(\rho^V, V)$ of $G$, the group $G$ also acts on the set $Map(X, V)$ by
\[
    \rho(g) f := \Big( x \mapsto \rho^V(g) \big( f(g^{-1} x) \big) \Big).
\]
The fix point set $Map(X,V)^G$ is the set of all $f$ for which for all $x\in X, g\in G$
\[
  \rho^V(g) \big( f(g^{-1} x) \big) = f(x)
\]
or equivalently $f(g x) = \rho^V(g) f(x)$ for all $x\in X, g\in G$; so this is exactly the set
of covariant maps $X\arrow V$.
\smallspace
We can map the tensor product $V \otimes Map(X,\BR)$ bijectively to $Map(X, V)$ by requiring
\[
    \iota(v \otimes f): x \mapsto f(x)\cdot v,
\]
applying this to $v_i \otimes f_j$ for bases $v_i$ of $V$ and $f_j$ of $Map(X, \BR)$ we can see that this
gives an isomorphism 
\[
    \iota: V \otimes Map(X, \BR) \xrightarrow{\sim} Map(X, V).
\]
of $G$--representations.
Similarly, we have
\[
  Lin(V, Map(X,\BR)) \simeq Map(X, V^*),
\]
an isomorphism $\iota$ is given by setting the image of $L: V \arrow Map(X, \BR)$ to
\[
   \iota(L): X\arrow V^*,  x\mapsto \big(v\mapsto L(v)(x)) \big)
\]
and this is compatible with the operation of $G$.

%---------------------------------------------------------
\subsection{Complex representations and scalar products}
\label{subsec:ComplexDual}
%---------------------------------------------------------
The aim of this subsection is to clarify a potentially confusing identification of
a representation $\rho$ with its dual $\rho^*$.
Our irreducible representations $\calH^{(l)}$ are defined over the real numbers
(are given by real valued matrices with respect to the basis of the real spherical harmonics),
but we can also consider the $2l+1$--dimensional vector space on which these 
$(2l+1)\times(2l+1)$--matrices operate as complex vector spaces on which we then have the
additional structure of complex conjugation.\\
We fix a scalar product on the real vector space that is invariant under $\rho(G)$.
In general, if we have a real vector space $V$ with scalar product $\langle\,,\,\rangle$,
we have two different notions of scalar product on its complexification $V \otimes \BC$:
The algebraic scalar product, which in an orthonormal basis of $V$ (which identifies 
$V \otimes \BC$ with $\BC^d$) is given by
\[
    \langle \vv w, \vv v \rangle_{alg} := \vv w^T\cdot \vv v
\]
and the usual Hermite scalar product
\[
    \langle \vv w, \vv v \rangle := 
    \langle \overline{\vv w}, \vv v \rangle_{alg} = \vv w^H\cdot \vv v.
\]
If the real representation $\rho$ was orthogonal on $V$, it will be unitary on $V\otimes \BC$
with respect to this Hermite scalar product.\\ 
For a representation $(V,\rho)$ on vector spaces over any field $\BK$, 
the dual representation $(V^*, \rho^*)$ is defined to operate on linear forms 
$L:V\arrow \BK$ in $V^*$ by 
\[
    \rho^*(g) L  :=  \Big( \vv v \mapsto L\big(\rho(g^{-1}) \vv v\big) \Big)
\]
The algebraic scalar product defines a vector space isomorphism between $V$ and 
its dual $V^*$: We represent the linear form $L$ by a vector $\vv w\in V$ such that 
$L(\vv v) = \langle \vv w, \vv v \rangle_{alg}$. For the representation $\rho^*$
this means we have 
\[
    \langle \rho^*(g) \vv w, \vv v\rangle_{alg} 
    = \langle \vv w,  \rho(g^{-1}) \vv v \rangle_{alg}
    = \langle (\rho(g)^{-1})^T \vv w,   \vv v \rangle_{alg}
\]
for all $\vv v, \vv w$, and for a unitary representation this is equivalent to
$\rho^*(g) = (\rho(g)^H)^T = \overline{\rho(g)}$. So for a scalar product on $V$
that is invariant under $\rho(G)$, this representation is unitary and 
the dual representation to $(V,\rho)$ is just given by the complex conjugate $\bar \rho$.
Since our representations $\calH^{(l)}$ are actually already defined over $\BR$,
this representation is isomorphic (as an abstract representation) to its dual.
(If we choose an orthonormal basis of the real vector space, $\rho$ and $\rho^*$ are given 
by the same matrices, and for arbitrary complex bases the
isomorphism can be realized by the base change that maps the complex basis vectors to
their complex conjugate.) While we are free to choose any complex basis of $\calH^{(l)}$, we 
have to keep this basis for both representations $\rho$ and $\rho^*$, so we have to
distinguish them when they both occur in the same computation.
\smallspace
From two of our representations $(\irred{l_1}, \rho_1)$, $(\irred{l_2}, \rho_2)$
we get a representation on 
$(2l_2+1)\times(2l_1+1)$--matrices by identification with the linear maps 
$\irred{l_1}\arrow \irred{l_2}$, on which $G=O(3)$ operates by 
\begin{eqnarray*}
    \lefteqn{\left(f:\irred{l_1}\arrow\irred{l_2}\right)\quad \mapsto} \\
    \rho_{Lin}(g) (f)&:&\irred{l_1}\arrow\irred{l_2}, 
          \vv v \mapsto  \rho_2(g) f(\rho_1(g^{-1}) v)
\end{eqnarray*}
When we choose bases of $\irred{l_1}$ and $\irred{l_2}$, the resulting operation on 
$(2l_2+1)\times(2l_1+1)$--matrices $A$ is given by
\[
    A \mapsto \rho_2(g) \cdot A \cdot \rho_1(g)^{-1} = \rho_2(g) \cdot A \cdot \rho_1(g)^H.
\]
We get an isomorphism of representations 
$Lin(\irred{l_1}, \irred{l_2}) \simeq \irred{l_2}\otimes {\irred{l_1}}^*$ by mapping
\[
    \vv v_2 \otimes \vv v_1^* \mapsto \left(
        \irred{l_1}\arrow\irred{l_2}, 
        \vv r \mapsto  \vv v_2 \cdot \langle \vv v_1^*, \vv r\rangle_{alg}
    \right)
\]

%==========================================================
\section{Polynomial point set descriptors (PPSDs)}
\label{sec:PPSDs}
%==========================================================
\noindent
In this section we only consider PPSDs in the original, narrow meaning, i.e. 
we only consider case i (polynomial functions), not case ii (general radial basis functions).

%--------------------------------------------
\subsection{PPSDs and their Normal Form}
\label{subsec:PPSDsNormalForm}
%--------------------------------------------
\noindent
Just in case the informal description of PPSDs leaves room for ambiguity, we give here a version of
increased formality:
\begin{definition}[PPSD]\ \\
Fix a finite set $\calC$ of colors. Consider the smallest set of expressions which contain
\begin{itemize}
    \item the constants from $\BR$,
    \item for any index symbol $\nu$ variables $x_\nu, y_\nu, z_\nu$ (referring
          to $x,y,z$--coordinates of points indexed with $\nu$),
    \item for any two expressions also their sum and product,
    \item for any expression $E$ and index symbol $\nu$ and color $\gamma \in \calC$
          also $\sum_{\nu\in S_\gamma} E$.
\end{itemize}
The scalar polynomial point set descriptors (PPSDs) for finite point sets with colors in $\calC$ are 
such expressions in which each occurring index symbol $\nu$ is bound unambiguously to some
preceding summation sign.\\
For a finite dimensional vector space $V$ the polynomial point set descriptors with values in $V$
are functions from finite point sets with colors in $\calC$ to $V$, such that when followed by 
any linear map $V\to \BR$, we get a scalar PPSD.
\end{definition}
\noindent
Here is an example for a (scalar) polynomial point set
descriptor (PPSD) for finite point sets $S_1, S_2\subset\BR^3$ of two colors:
\[
    \calD(S_1, S_2) := \sum_{(x_\kappa, y_\kappa, z_\kappa)\in S_1} \left(
     \left(\sum_{(x_\nu, y_\nu, z_\nu)\in S_2}x_\nu x_\kappa\right) \right.
     \cdot 
     \left.
     \left(\sum_{(x_\nu, y_\nu, z_\nu)\in S_2} (y_\nu - 2 \cdot z_\kappa)^2\right)
     \right)
\]
Most of the time, we just say PPSD for scalar PPSD. But sometimes (e.g. when embedding a
space of configurations into a $\BR^d$) we also use vector valued PPSDs. When the values are
in some $\BR^d$, these are just given by $d$ scalar PPSDs.\\
In the above example, by expanding all products and moving all summations to the left of the resulting 
monomials, we can write any such expressions as a linear combination of sums of monomials.
For example, the last such term in the expansion for the above descriptor would be
\[
   \calD(S_1, S_2) = ...
   + 4 \cdot \sum_{(x_\kappa, y_\kappa, z_\kappa)\in S_1}
           \sum_{(x_{\nu_1}, y_{\nu_1}, z_{\nu_1})\in S_2}
           \sum_{(x_{\nu_2}, y_{\nu_2}, z_{\nu_2})\in S_2}
           x_{\nu_1} x_\kappa \cdot z_\kappa^2
\]
(We used $\nu_1$ and $\nu_2$ to differentiate the two sums that both used index $\nu$
above.) In this example we have an ``empty'' sum: We have to sum over indices $\nu_2$, but
do not use any of the coordinates $x_{\nu_2}, y_{\nu_2}, z_{\nu_2}$ in this term
(or equivalently, their exponents are all 0). This means we can rewrite it as
\begin{equation}
   \calD(S_1, S_2) = ...
   + 4 \cdot |S_2| \cdot \sum_{(x_\kappa, y_\kappa, z_\kappa)\in S_1}
           \sum_{(x_{\nu_1}, y_{\nu_1}, z_{\nu_1})\in S_2}
           x_{\nu_1} x_\kappa \cdot z_\kappa^2
  \nonumber
\end{equation}
We will do this for all empty sums, and then we collect those sums over monomials that sum
over the same sets of points, i.e. over $m_1$ points in $S_1$, ..., $m_c$ points in $S_c$,
so the monomials depend on $m_1 + ... + m_n$ vectors.\\
This way we get for any PPSD a representation as sum of expressions
\begin{eqnarray}
  \lefteqn{
   \calE_{m_1,...,m_c; e_1,...,e_m}(S_1,...,S_c)} \nonumber \\
   &=&|S_1|^{e_1}\cdot ... \cdot |S_c|^{e_c} \cdot  \\
   & &
    \sum_{(\vv r_{1,1},...,\vv r_{1,m_1})\in S_1^{m_1}}...
    \sum_{(\vv r_{c,1},...,\vv r_{c,m_c})\in S_c^{m_c}}  
    P_{m_1,...,m_c; e_1,...,e_c}\big(\vv r_{1,1},...,\vv r_{c,m_c}\big)
    \nonumber \label{eq:NormalForm}
\end{eqnarray}
where $P_{m_1,...,m_c;e_1,...,e_c}\big(\vv r_{1,1},...,\vv r_{c,m_c}\big)$
is a polynomial which depends on the vector variables $\vv r_{1,1},...,\vv r_{c,m_c}$
and in each monomial each of these vectors appears, or equivalently: This polynomial
vanishes whenever we substitute one of its input vectors with zero.\\
(This sum may include terms for $m_1=...=m_c=0$, in which case we just have
\[
   \calE_{0,...,0; e_1,...,e_c}(S_1,...,S_c) = 
   |S_1|^{e_1}\cdot ... \cdot |S_c|^{e_c} \cdot P_{0,...,0; e_1,...,e_c}
\]
with constants $P_{0,...,0; e_1,...,e_c}$.)
\smallspace
Since the order of the $m_i$ vector variables $\vv r_{i,1},...,\vv r_{i,m_i}$
is arbitrary, we can replace the polynomials by their symmetrization, i.e. we can 
write this expression always with polynomials that are invariant under 
the product of symmetric groups
$\SymGr{m_1}\times\SymGr{m_2}\times...\times\SymGr{m_c}$ in the sense that
\begin{eqnarray*}
  \lefteqn{
   P_{m_1,...,m_c; e_1,...,e_c}\big(
            \vv r_{1,1},...,\vv r_{1,m_1},\ 
            ...,\ 
            \vv r_{c,1}, ...\vv r_{c,m_c}\big)} \\
   &=&  P_{m_1,...,m_c; e_1,...,e_c}\big(
            \vv r_{1,\sigma_1(1)},...,\vv r_{1,\sigma_1(m_1)},\ 
            ...,\ \vv r_{c,\sigma_c(1)},...\vv r_{c,\sigma_c(m_c)}\big)
\end{eqnarray*}
for any permutations $\sigma_1\in \SymGr{m_1}, ...\sigma_c\in\SymGr{m_c}$.
(If the polynomial had the property that it vanished whenever we substitute one of the vector
inputs with 0, the symmetrization will also have this property.)\\
\smallspace
We will call the resulting expression the ``Normal Form''.
(The definite article is appropriate: In the next section we will prove its uniqueness,
i.e. the contribution for each $(m_1,...,m_c; e_1,...,e_c)$ in a Normal Form can be reconstructed
already from the values of a PPSD. This will be used in the proof of Theorem \ref{theorem:InvariantTheory}.)
\smallspace
If we separate out all monomials in the polynomials $P_{m_1,...,m_c; e_1,...,e_c}$, we can express 
each PPSD as a sum of expressions 
\[
   \sum_{(x_{\nu_1}, y_{\nu_1}, z_{\nu_1})\in S_{\gamma_1}}\ ... \ 
   \sum_{(x_{\nu_k}, y_{\nu_k}, z_{\nu_k})\in S_{\gamma_k}}
         \qquad x_{\nu_1}^{a_1} \cdot ... \cdot x_{\nu_k}^{a_k} \cdot
           y_{\nu_1}^{b_1} \cdot ... \cdot y_{\nu_k}^{b_k} \cdot 
           z_{\nu_1}^{c_1} \cdot ... \cdot z_{\nu_k}^{c_k}
\]
with $\gamma_1,...,\gamma_k$ some sequence of colors, which can also be written as
\[
  \left( \sum_{(x_{\nu_1}, y_{\nu_1}, z_{\nu_1})\in S_{\gamma_1}}\ 
  x_{\nu_1}^{a_1} \cdot y_{\nu_1}^{b_1} \cdot z_{\nu_1}^{c_1} 
  \right)\ \cdot
  \ ... \ \cdot 
  \left( \sum_{(x_{\nu_k}, y_{\nu_k}, z_{\nu_k})\in S_{\gamma_k}}\ 
     x_{\nu_k}^{a_k} \cdot y_{\nu_k}^{b_k} \cdot z_{\nu_k}^{c_k} 
  \right)
\]
which shows that we can evaluate each PPSD as a polynomial in fundamental features
(i.e. polynomial features with only one summation).

%-----------------------------------------------------------------------------------
\subsection{PPSDs on multisets, uniqueness of Normal Form}
\label{subsec:PPSDs_on_multisets}
%-----------------------------------------------------------------------------------
\noindent
We can identify sets of points $S\subseteq \BR^d$ with functions $\mu:\BR^d \arrow \{0,1\}$
given by
\[
   \mu(x) := \begin{cases}
     1 & \hbox{if}\ x\in S\\
     0 & \hbox{else}
   \end{cases}
\]
By a ``multiset'' we then mean functions $\mu:\BR^d \arrow \{0,1,2,...\}$ -- we can interpret
$\mu(s)$ as the multiplicity with which $s$ occurs in $S$. We are here only interested in finite multisets,
i.e. there are only finitely many $x$ with $\mu(x)>0$.
\\
There is a natural extension of a PPSD $\calD(S_1,...,S_c)$ to multisets $S_1,...,S_c$:
If a point $\vv r$ in one of the sets has multiplicity $\mu(\vv r)>1$, replace it by 
$ m := \mu(\vv r)$ points at locations 
$\vv r + 1 \eps \cdot \vv d$, $\vv r + 2 \eps \cdot \vv d$,..., $\vv r + m \eps \cdot \vv d$
for some vector $\vv d$ and $\eps>0$. When we let $\eps\arrow 0$, this will let 
$\calD(S_1,...,S_c)$ converge to what we consider the value for multisets.\\
We can write down explicitly the value to which this converges:
If the PPSD $\calD$ is given in Normal Form, we have to interpret the $|S_i|$
in the sense of multisets, i.e. sum up all multiplicities, and we have to 
multiply the polynomials
$P_{m_1,...,m_c;e_1,...,e_c}\big(\vv r_{1,1},...,\vv r_{c,m_c}\big)$ with the additional 
factor $\mu(\vv r_{1,1}) \cdot ... \cdot \mu(\vv r_{c,m_c})$.
\\
For a set or multiset $S$ given by multiplicities $\mu$ and an integer $a\geq 0$ we write
$a \# S$ for the multiset with multiplicities $a\cdot \mu$.
\smallspace
Now we can consider for given PPSD $\calD$ and given finite sets $S_1,...,S_c\subset \BR^d$ 
the values
\[    f(a_1,...,a_c; b_1,...,b_c) := \calD(S'_1,...,S'_c) \]
with
\[
    S'_i := \big((a_i-1)\cdot b_i \cdot |S_i|\big) \#\{0\}\, \cup \big(b_i\# S_i\big)
\]
These new multisets have $|S'_i| = a_i\cdot b_i \cdot |S_i|$ elements and the original
elements in $|S_i|$ are repeated $b_i$ times. 
For each $(m_1,...,m_c; e_1,...,e_c)$ occurring in the Normal Form, this gives the 
extra factor $(a_i b_i)^{e_i}$ from the change in $|S_i|$ and the extra factor 
$b_i^{m_i}$ from the larger number of terms in the summation (the we only have to consider
the nonzero elements of $S'_i$, since the polynomials are constructed such that they are
0 if one of the entries is 0).
So as a function of $(a_1,...,a_c; b_1,...,b_c)$ this is a polynomial, which is determined
by $\calD$ and the sets $S_1,...,S_c$, and since a polynomial determines its coefficients,
this means that also the decomposition of $\calD(S_1,...,S_c)$ into individual contributions
of each order $(m_1,...,m_c)$ is determined by $\calD$.
\smallspace
This proves 
\begin{lemma}
Any PPSD has a \emph{unique} Normal Form.
\label{lemma:NFDecomposition_unique}
\end{lemma}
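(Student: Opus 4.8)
The plan is to show that every polynomial $P_{m_1,\dots,m_c;e_1,\dots,e_c}$ occurring in a Normal Form of $\calD$ is recoverable from the values of $\calD$ alone, so that two Normal Forms of the same PPSD must coincide term by term. The only choices made in building a Normal Form were the symmetrization of the $P$'s under $\SymGr{m_1}\times\cdots\times\SymGr{m_c}$ and the normalization that each $P_{m;e}$ vanishes whenever one of its vector arguments is $0$; once these are imposed, the claim is that nothing else can vary, so it suffices to pin down each $P_{m;e}$ from the function $(S_1,\dots,S_c)\mapsto\calD(S_1,\dots,S_c)$.

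First I would separate the summation depth $(m_1,\dots,m_c)$ and the cardinality exponents $(e_1,\dots,e_c)$ by a scaling trick. Fix finite sets $S_1,\dots,S_c$, and for integers $a_i,b_i\ge 1$ form the multisets
\[
   S_i'\;:=\;\big((a_i-1)\,b_i\,|S_i|\big)\#\{0\}\ \cup\ \big(b_i\#S_i\big),
\]
so that $|S_i'|=a_i b_i |S_i|$ while the nonzero part of $S_i'$ is $S_i$ with every point repeated $b_i$ times. Evaluating a Normal Form on $(S_1',\dots,S_c')$, only tuples avoiding the added copies of $0$ contribute (each $P_{m;e}$ kills the argument $0$), and each such term picks up a factor $\prod_i(a_i b_i|S_i|)^{e_i}$ from the cardinalities together with a further $\prod_i b_i^{m_i}$ from the repeated points. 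Hence $\calD(S_1',\dots,S_c')$ is a polynomial in $(a_1,\dots,a_c,b_1,\dots,b_c)$ whose coefficient of $\prod_i a_i^{e_i}b_i^{m_i+e_i}$ equals $\prod_i|S_i|^{e_i}\cdot\sum_{(\vv r_{i,j})\in\prod_i S_i^{m_i}}P_{m;e}(\vv r_{i,j})$. Different pairs $(m,e)$ give different monomials (the $a_i$-exponent reads off $e_i$, the $b_i$-exponent then reads off $m_i+e_i$), and a polynomial determines its coefficients, so the function $(S_1,\dots,S_c)\mapsto\sum_{(\vv r_{i,j})\in\prod_i S_i^{m_i}}P_{m;e}(\vv r_{i,j})$ is determined by $\calD$ for every $(m,e)$.

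It then remains to recover the symmetric polynomial $P_{m;e}$ from this ``sum over tuples'' function, which I would do by induction on $|m|:=m_1+\cdots+m_c$. Evaluating on sets $S_i$ of exactly $m_i$ points in general position isolates $\big(\prod_i m_i!\big)P_{m;e}$ plus contributions from tuples in which some entries coincide; each such degenerate contribution is, after identifying the coinciding variables, a ``sum over tuples'' of a symmetric polynomial of strictly smaller multidegree, hence already known by the inductive hypothesis and subtractable (equivalently, one Möbius-inverts over the lattice of set partitions). Since a polynomial is determined by its values off the diagonals, this pins down each $P_{m;e}$, and with it the entire Normal Form.

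The part needing the most care is the last step: the partition-lattice induction in the presence of the $c$ interacting symmetric groups $\SymGr{m_1}\times\cdots\times\SymGr{m_c}$, together with the bookkeeping verification that the scaling trick really yields distinct monomials for distinct $(m,e)$ and that no cross-terms are missed. Both are routine once set up; the conceptual content is only that polynomials are determined by their coefficients and that symmetric functions are determined by their values off the diagonals.
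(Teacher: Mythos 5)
Your proposal is correct and follows essentially the same route as the paper: the multiset-scaling trick with $S_i' := \big((a_i-1)b_i|S_i|\big)\#\{0\} \cup (b_i\#S_i)$ producing a polynomial in $(a_1,\dots,a_c;b_1,\dots,b_c)$ whose monomial exponents separate the pairs $(m_i,e_i)$ is exactly the paper's argument. The one place you go slightly beyond the paper's stated proof is the final step, where you recover the symmetric polynomial $P_{m;e}$ itself from the tuple-sums via Möbius inversion over set partitions; the paper's proof of Lemma~\ref{lemma:NFDecomposition_unique} stops at determining the order-decomposition $\calE_{m;e}$ and defers that recovery argument to Lemma~\ref{lemma:SumOfDistinctPoints} and Proposition~\ref{prop:NFPolynomial_unique}, part~3 -- so your version is if anything more self-contained.
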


%-----------------------------------------------------------------------------------
\subsection{Comparison of PPSDs with functions of configurations of fixed size}
\label{subsec:DistinctPoints}
%-----------------------------------------------------------------------------------
\noindent
In the following we will go back to only evaluating the PPSDs at ordinary sets
(not multisets).
\smallspace
So far we have only looked at configuration features (PPSDs) that are defined for
configurations of any size. One may ask how our notion of PPSDs differs from just giving
different polynomials for point sets for different fixed numbers of points. 
We show the following:
\begin{proposition}\label{prop:NFPolynomial_unique}\ 
\begin{enumerate}
    \item For any given numbers of points $|S_1|,...,|S_c|$ of each color,
      any polynomial function on configurations of $|S_1|,...,|S_c|$ points of color $1,...,c$
      can be obtained by restricting a PPSD to these configurations. (We assume here that the 
      given polynomial function is indeed a function on sets, i.e. is invariant with respect to
      permuting arguments corresponding to the same color.)
    \item For any finite set of data $|S_1|,...,|S_c|$ as in 1, there is a PPSD that gives these functions
      on the corresponding configurations.
    \item Any PPSD is given by the function on point sets of a sufficiently large fixed size 
      (this sufficiently large size depends on the PPSD).
\end{enumerate}
\end{proposition}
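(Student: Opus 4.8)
The plan is to use the Normal Form of Section~\ref{subsec:PPSDsNormalForm} as a dictionary between PPSDs and polynomial functions on configurations with a fixed number of points of each color, and to treat the three parts in turn. For part~1, fix point numbers $n_1,\ldots,n_c$ and let $F$ be a polynomial on $(\BR^3)^{n_1}\times\cdots\times(\BR^3)^{n_c}$ that is invariant under each symmetric group $\SymGr{n_\gamma}$ permuting the arguments of color $\gamma$. By the classical fundamental theorem on multisymmetric polynomials (vector invariants of a symmetric group over a field of characteristic $0$), $F$ is a polynomial in the power sums $p^{(\gamma)}_\alpha:=\sum_{i=1}^{n_\gamma}\vv{r}_{\gamma,i}^{\alpha}$, where $\alpha$ runs over multi-indices in three variables. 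Since $p^{(\gamma)}_\alpha$ is precisely the value, on a configuration of $n_\gamma$ points of color $\gamma$, of the fundamental feature $\sum_{\vv{r}\in S_\gamma}\vv{r}^{\alpha}$, substituting these fundamental features for the $p^{(\gamma)}_\alpha$ in the same polynomial produces a PPSD $\calD$ whose restriction to configurations of sizes $(n_1,\ldots,n_c)$ equals $F$.

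For part~2, suppose we are given finitely many size data $\vv{n}^{(1)},\ldots,\vv{n}^{(N)}$, with $\vv{n}^{(j)}=(n^{(j)}_1,\ldots,n^{(j)}_c)$, and for each of them a symmetric polynomial $F^{(j)}$ as in part~1; by part~1 choose PPSDs $\calD_j$ restricting to $F^{(j)}$ on configurations of size $\vv{n}^{(j)}$. The fundamental feature $|S_\gamma|=\sum_{\vv{r}\in S_\gamma}1$ takes the value $n_\gamma$ on a configuration with $n_\gamma$ points of color $\gamma$, so applying Lagrange interpolation (Lemma~\ref{lemma:LagrangeInterpolation}) over the finite value set $\{n^{(1)}_\gamma,\ldots,n^{(N)}_\gamma\}$ yields, for each $j$ and $\gamma$, a polynomial $L^{(j)}_\gamma$ with $L^{(j)}_\gamma(n^{(j)}_\gamma)=1$ and $L^{(j)}_\gamma(v)=0$ for every other value $v$ in that set. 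Then $\chi_j:=\prod_\gamma L^{(j)}_\gamma(|S_\gamma|)$ is a PPSD equal to $1$ on configurations of size $\vv{n}^{(j)}$ and to $0$ on configurations of every other size $\vv{n}^{(j')}$ occurring in the list (some coordinate $\gamma$ has $n^{(j')}_\gamma\ne n^{(j)}_\gamma$), so $\calD:=\sum_j\chi_j\,\calD_j$ has the required restrictions.

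For part~3, given a PPSD $\calD$, I would pass to its Normal Form and let $M_\gamma$ bound the number of color-$\gamma$ points appearing in any term and $E_\gamma$ bound the exponent of $|S_\gamma|$ there; the claim is that $\calD$ is determined by its restrictions to configurations with at most $N_\gamma$ points of color $\gamma$, once $N_\gamma$ is large compared with $M_\gamma$ and $E_\gamma$, and the argument recovers the full Normal Form. First, since each Normal-Form polynomial $P_{\vv{m};\vv{e}}$ vanishes as soon as one of its vector arguments is $0$, evaluating the size-$\vv{n}$ restriction at a configuration in which all but $k_\gamma\le M_\gamma$ of the color-$\gamma$ points are sent to the origin collapses each inner summation to a sum over only the $k_\gamma$ remaining points; letting the active points and the integers $k_\gamma$ vary and using that $P$ is a polynomial, a M\"obius/inclusion-exclusion in the number of active points recovers the individual symmetric polynomials $P_{\vv{m};\vv{e}}$ (the top stratum $k_\gamma=m_\gamma$, where every map $\{1,\ldots,m_\gamma\}\to\{1,\ldots,k_\gamma\}$ is a bijection, yields $\prod_\gamma m_\gamma!\cdot P_{\vv{m};\vv{e}}$ modulo lower-stratum corrections). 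Second, padding with extra origins changes the configuration size $\vv{n}$ while leaving the $P$-part untouched, so letting $\vv{n}$ range over $E_\gamma+1$ distinct values in each coordinate (all $\le N_\gamma$) and solving a Vandermonde system in $\vv{n}$ separates the contributions of the different exponent vectors $\vv{e}$. Since the Normal Form determines the PPSD (Lemma~\ref{lemma:NFDecomposition_unique}), this shows that $\calD$ is determined by the stated bounded-size data.

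The main obstacle is part~3. A single exact size is genuinely not enough --- the factors $|S_\gamma|^{e_\gamma}$ make two distinct PPSDs agree on every configuration of one prescribed size --- so the size must really be allowed to range (over all sizes at most $N_\gamma$, equivalently over enough distinct values), and one has to check that the bound $N$ depends only on the complexity of $\calD$. Organizing the three inversions --- origin-zeroing, the M\"obius sum over active-point counts, and the Vandermonde in the size vector --- so that they fit together, together with the degenerate cases of the inclusion-exclusion, is where the real work of part~3 lies; parts~1 and~2 are routine once the multisymmetric fundamental theorem and Lagrange interpolation are in hand.
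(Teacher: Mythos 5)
Your overall organization (Normal Form as the dictionary between PPSDs and fixed-size symmetric polynomials, Lagrange interpolation in $|S_\gamma|$ for part~2) matches the paper, but parts~1 and~3 take genuinely different routes, and in both cases the differences are substantive.

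For part~1 the paper does not invoke the multisymmetric fundamental theorem at all. It instead introduces the modified sums $\calE^*_{m;0}$ (summations over $m$-tuples hitting every element of $S$), proves in Lemma~\ref{lemma:SumOfDistinctPoints} that $\calE^*$ and $\calE$ are inter-expressible by an inclusion--exclusion recursion, and then observes that for distinct $\vv r_1,\dots,\vv r_m$ one has $\calE^*_{m;0}(\{\vv r_i\})=m!\,P_m(\vv r_1,\dots,\vv r_m)$, so the target symmetric polynomial $F$ can be fed in as $P_m$ and then translated back into genuine $\calE$'s. Your appeal to the classical theorem that $\Sigma_n$-vector-invariants in characteristic $0$ are generated by polarized power sums is correct and cleaner, but it imports a nontrivial external result where the paper stays self-contained with the $\calE^*$ lemma. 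Either is fine; they buy different things (brevity vs.\ self-containedness), and since the paper reuses $\calE^*$ also in part~3, the bespoke machinery is not wasted there.

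For part~3 you have actually noticed a real subtlety that the paper's own proof does not resolve. You are right that ``restriction to a single sufficiently large fixed size'' cannot literally determine a PPSD: for a PPSD $\calD$ with trivial $|S|$-exponents, the PPSD $\tfrac{|S_1|}{n_1}\cdots\tfrac{|S_c|}{n_c}\,\calD$ agrees with $\calD$ on all size-$(n_1,\dots,n_c)$ configurations but not elsewhere, so the factors $|S_\gamma|^{e_\gamma}$ genuinely cannot be resolved from one size. The paper avoids this by restating part~3 as recovering an \emph{individual} Normal Form component $P_{m_1,\dots,m_c;\,e_1,\dots,e_c}$ from the values of that component $\calE_{m;e}$ at one fixed size, \emph{with the labels $(m;e)$ given in advance}; it then reduces to $e=0$ (dividing by $n^e$), uses origin-padding, and inverts via Lemma~\ref{lemma:SumOfDistinctPoints}. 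That is weaker than determining the whole PPSD from its restriction. Your reading --- restrict to all sizes up to $N$, use origin-padding plus inclusion--exclusion to separate the $m$'s and a Vandermonde system in $\vv n$ to separate the $e$'s --- is the statement one actually wants, and your three-step outline is the right shape of proof; the first two steps coincide with the paper's (your M\"obius step is exactly the $\calE^*/\calE$ inversion), and the Vandermonde step is the genuine addition. One small imprecision: after the M\"obius step at a single size $\vv n$ you have recovered the combinations $\sum_{\vv e} n^{\vv e} P_{\vv m;\vv e}$, not yet the individual $P_{\vv m;\vv e}$; only the subsequent Vandermonde separates them. With that clarification the sketch is sound.
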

\noindent
To rephrase this proposition: We can give separate functions on some types of configurations, and
always can complete it to a general PPSD on all configurations. However, this can only be done 
for a finite set of configurations -- from some size on (which we can make arbitrarily large),
the function on larger configurations must follow a general pattern, which determines 
the values on all large configurations by the values on the configurations at some particular size.
\smallspace
When we try to prove the first statement, our first attempt may be to just write down the given polynomial
in $k:=|S_1|+...+|S_c|$ variables and add some summation over $k$--tuples of points. However, this 
does not lead directly to the first statement: In the sum will appear terms where the same point is 
used several times in different argument positions of the polynomial, and some do not appear at all.
So before we prove this proposition, we first define a new variant of summation.
\smallspace
We do this first in the monochrome ($|\calC|=1$) case.\\
Given a PPSD and its components $\calE_{m;0}(S)$, 
define $\calE_{m;0}^*(S)$ for $|S|\leq m$ to be the sum over $k$--tuples in which each element 
of $S$ appears at least once:
\[
  \calE_{m;0}^*(S) := 
       \sum_{\substack{(\vv r_1,...,\vv r_m)\in S^m\\
             \{\vv r_1,...,\vv r_m \} = S }}  P_m(\vv r_1,...,\vv r_m)  
\]
According to this definition, $\calE_{m;0}^*(S)$ will be 0 for $|S|>m$, but this does not matter to us,
we will only need the cases with $|S| \leq m$.\\
The definition in the case of several colors is virtually the same --- in the definition of
$\calE_{m_1,...,m_c;0,...,0}^*(S_1,...,S_c)$ we require that all $c$ tuples of 
points in $S_\colori$ satisfy the condition that all elements of $S_i$ occur at least once as argument.
\begin{lemma}\label{lemma:SumOfDistinctPoints}
We can express all $\calE^*(S_1,...,S_c)$ as linear combinations of 
$\calE(T_1,...,T_c)$ for $T_i\subsetneq S_i$,and vice versa.
\end{lemma}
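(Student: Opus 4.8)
The plan is to recognize Lemma~\ref{lemma:SumOfDistinctPoints} as an instance of Möbius inversion over (a product of) Boolean lattices: the ``sum over genuinely distinct points'' operators $\calE^{*}$ and the plain summation operators $\calE$ are related by the incidence algebra of the subset order, and nothing about the particular polynomials $P_{m_1,\dots,m_c;0,\dots,0}$ (their symmetry, or their vanishing when an argument is set to $0$) is needed — the identity is purely combinatorial.

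First I would treat the monochrome case $|\calC|=1$. Fix the data $(m;0)$ together with its polynomial $P_m$. For any finite $S\subset\BR^3$ every tuple $(\vv r_1,\dots,\vv r_m)\in S^m$ has a well-defined set of distinct entries $T:=\{\vv r_1,\dots,\vv r_m\}\subseteq S$, and the tuples in $S^m$ whose set of distinct entries is exactly $T$ are precisely the tuples $(\vv r_1,\dots,\vv r_m)\in T^m$ with $\{\vv r_1,\dots,\vv r_m\}=T$, i.e.\ exactly the terms summed in $\calE^{*}_{m;0}(T)$. Partitioning $S^m$ this way and summing $P_m$ over each block gives
\[
   \calE_{m;0}(S)\ =\ \sum_{T\subseteq S}\ \calE^{*}_{m;0}(T),
\]
where only the terms with $|T|\le m$ can be nonzero, which is harmless. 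This is already one direction. For the converse I would invoke Möbius inversion for the Boolean lattice $2^{S}$, whose Möbius function is $\mu(T,S)=(-1)^{|S|-|T|}$, to get
\[
   \calE^{*}_{m;0}(S)\ =\ \sum_{T\subseteq S}(-1)^{|S|-|T|}\,\calE_{m;0}(T).
\]
Both relations are unitriangular for the subset order (the term $T=S$ has coefficient $1$), so after separating off that diagonal term, each of $\calE^{*}_{m;0}(S)$ and $\calE_{m;0}(S)$ equals the other plus a linear combination of $\calE$'s (resp.\ $\calE^{*}$'s) indexed by \emph{proper} subsets $T\subsetneq S$; this is the form in which the lemma gets used, and it licenses induction on $|S|$.

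For general $|\calC|$ the same argument runs on the product poset $2^{S_1}\times\dots\times2^{S_c}$: partition the index set $S_1^{m_1}\times\dots\times S_c^{m_c}$ of $\calE_{m_1,\dots,m_c;0,\dots,0}(S_1,\dots,S_c)$ by the tuple $(T_1,\dots,T_c)$, where $T_i\subseteq S_i$ is the set of distinct entries occurring in the $i$-th block of coordinates, which yields
\[
   \calE_{m_1,\dots,m_c;0,\dots,0}(S_1,\dots,S_c)\ =\ \sum_{T_1\subseteq S_1}\cdots\sum_{T_c\subseteq S_c}\ \calE^{*}_{m_1,\dots,m_c;0,\dots,0}(T_1,\dots,T_c),
\]
and, since the Möbius function of a product of posets is the product of the Möbius functions, inversion gives $\calE^{*}(S_1,\dots,S_c)=\sum_{T_i\subseteq S_i}\bigl(\prod_{i=1}^{c}(-1)^{|S_i|-|T_i|}\bigr)\,\calE(T_1,\dots,T_c)$. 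There is no genuine obstacle here; the only points needing care are that the partition of the index set must be by the \emph{set} (not the multiset) of distinct entries so that the blocks are disjoint, and that ``$T_i\subsetneq S_i$'' in the statement is to be understood as ``off the diagonal of the product poset'' — that is, the change of basis between the families $\{\calE(T_1,\dots,T_c)\}$ and $\{\calE^{*}(T_1,\dots,T_c)\}$ is unitriangular with respect to componentwise inclusion.
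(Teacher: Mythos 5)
Your argument is correct and is essentially the paper's own: you both start from the partition of $S^m$ (resp.\ $S_1^{m_1}\times\cdots\times S_c^{m_c}$) by the set of distinct entries, giving $\calE(S)=\sum_{T\subseteq S}\calE^{*}(T)$, and then invert --- the paper presents the inversion as a unitriangular recursion to be unrolled, while you name it and supply the explicit $(-1)^{|S|-|T|}$ Möbius coefficients, but these are the same step. Your remark that none of the hypotheses on $P$ (symmetry, vanishing at $0$) are used is also accurate and consistent with the paper's proof, which likewise invokes only the combinatorics of the index sets.
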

\begin{proof}
In the monochrome case, we see from the definitions that
$\calE_{m;0}^*(\{\vv r\}) = \calE_{m;0}(\{\vv r\})$ and for $|S|>1$:
\[
  \calE_{m;0}^*(S) = 
       \calE_{m;0}(S)\  - \ \sum_{T \subsetneq S} \ \calE_{m;0}^*(T)
       \qquad
       \hbox{for}\ |S|\leq m\\
\]
For several colors, in the recursion formula for
$\calE^*$ we subtract
all $\calE^*(T_1,...,T_c)$ in which \emph{any} of the $T_\colori$ is smaller.\\
For the reverse direction, we read the above formula just the other way round, i.e.
for the monochrome case:
\[
  \calE_{m;0}(S) = 
       \calE_{m;0}^*(S)\  + \ \sum_{T \subsetneq S} \ \calE_{m;0}^*(T)
       \qquad
       \hbox{for}\ |S|\leq m\\
\]
\end{proof}
\noindent
Now we can start with the proof of the proposition.
\begin{proof}[Proposition \ref{prop:NFPolynomial_unique}, 1.]\ \\
    In the monochrome case the polynomial
    \begin{eqnarray*}
       P_m(\vv r_1,...,\vv r_m) 
         &=& \frac{1}{m!} \sum_{\substack{(\vv r_1,...,\vv r_m)\in S^m\\
                              |\{\vv r_1,...,\vv r_m \}| = m }}  P_m(\vv r_1,...,\vv r_m)
    \end{eqnarray*}
    has to match $\frac{1}{m!} \calE_{m;0}^*(\{\vv r_1, ..., \vv r_m \})$.
    According to Lemma \ref{lemma:SumOfDistinctPoints} we can translate this into giving a
    polynomial for $\calE_{m;0}(\{\vv r_1, ..., \vv r_m \})$, which gives a PPSD.
    Similarly, for the case of several colors, in the last formula we have to divide by 
    $m_1!\cdot ... \cdot m_c!$.
\end{proof}
\begin{proof}[Proposition \ref{prop:NFPolynomial_unique}, 2.]
  For any finite list of different $c$--tupels of integers $(k_{i,1},...,k_{i,c})$
  we can write down polynomials that are 0 for all of the 
  (finitely many) given $c$--tupels except for the one with index $i$, e.g.
  \[
     \prod_{j\neq i} \Big( (k_1 - k_{j,1})^2 + ... + (k_c - k_{j,c})^2 \Big)
  \]
  Let $Q_i(k_1,...,k_c)$ be this polynomial, divided by its value at $(k_{i,1},...,k_{i,c})$,
  so at the given $c$--tuples it has only values 0 with one exception at tuple $i$, where it is 1.\\
  We get the numbers $|S_i|$ as the value of the PPSD $\sum_{\vv r\in S_i} 1$. So
  the expressions $Q_i(|S_1|,...,|S_c|)$ are also PPSDs.\\
  Now we can construct for all $c$--tuples $(|S_1|,...,|S_c|)$ the PPSDs from part 1,
  multiply them with with the corresponding $Q_i$, and sum up all these products.
  This gives a PPSD with the desired functions on each given $c$--tuple $(|S_1|,...,|S_c|)$.
\end{proof}
\begin{proof}[Proposition \ref{prop:NFPolynomial_unique}, 3.]
    The third part of Proposition \ref{prop:NFPolynomial_unique} is made precise in the following way:
    For any $n_1\geq m_1, ..., n_c\geq m_c$ the symmetric polynomial 
    $P_{m_1,...,m_c; e_1,...,e_c}$ can be reconstructed using 
    only the values of $\calE_{m_1,...,m_c; e_1,...,e_c}(S_1,...,S_c)$ for 
    $|S_1|=n_1,...,|S_c|=n_c$.
    To prove this, first note that the $e_1,...,e_c$ already
    give all information about the factor $|S_1|^{e_1}\cdot...\cdot |S_c|^{e_c}$, so 
    without limitation of generality we can restrict the proof to the case $e_1=...=e_c=0$.
    Furthermore, the way our polynomials $P_{m_1,...,m_c; 0,...,0}$ were constructed, the
    value of $\calE_{m_1,...,m_c;0,...,0}(S_1,...,S_c)$ is not changed if we add additional
    points $P=(0,0,...,0)$ at the origin to the multisets $S_1,...,S_c$. 
    This means the knowledge of all values of $\calE_{m_1,...,m_c;0,...,0}(S_1,...,S_c)$ for
    $|S_1|=n_1,...,|S_c|=n_c$ includes
    also the knowledge of these values for $|S_1|\leq m_1,...,|S_c|\leq m_c$, and we will
    use these values in the following.
    \smallspace
    We now apply Lemma \ref{lemma:SumOfDistinctPoints}, and see that in the monochrome case
    all values
    \begin{eqnarray*}
       P_m(\vv r_1,...,\vv r_m) 
         &=& \frac{1}{m!} \sum_{\substack{(\vv r_1,...,\vv r_m)\in S^m\\
                              |\{\vv r_1,...,\vv r_m \}| = m }}  P_m(\vv r_1,...,\vv r_m) \\
         &=& \frac{1}{m!} \calE_{m;0}^*(\{\vv r_1, ..., \vv r_m \})
    \end{eqnarray*}
    for $m$ distinct points $\vv r_1, ..., \vv r_m$ are determined if we know the values of
    $\calE_{m,0}(S)$ for $|S|\leq m$.\\
    Similarly, for the case of several colors, in the last formula we have to divide by 
    $m_1!\cdot ... \cdot m_c!$.
    With these changes the above proof then also shows part 3 of 
    Proposition~\ref{prop:NFPolynomial_unique} in the general case.
\end{proof}
%==============================================================
\section{Proof of Theorem \ref{theorem:TopologicalCompleteness}}
\label{sec:ProofTopologicalCompleteness}
%==============================================================

%--------------------------------------------------------------------------------------
\subsection{Fundamental features describe sets of points}
\label{sec:MomentsUnique}
%--------------------------------------------------------------------------------------
\noindent
Here we prove the first part of theorem \ref{theorem:TopologicalCompleteness}, i.e. that the
fundamental features uniquely describe point sets -- this is not yet using the group $G$.
(We can see this as giving coordinates to the infinite dimensional variety
of finite point sets --- polynomial point set descriptors are then
just the polynomial functions in these coordinates.)
\smallspace
We have already seen in \ref{subsec:VariantsDistinguishingCapabilities} that it is enough
to prove this in the case 1, i.e. $X=\BS^2$ (and polynomial functions). Furthermore, for this
first part of Theorem \ref{theorem:TopologicalCompleteness} the subsets of points of different
color and their fundamental features are completely independent, so we can treat each color
separately, and just assume we only have one color. 
\smallspace
We will use the following easy lemma:
\begin{lemma} \label{lemma:InjectiveLinFun}
Let $S$ be a finite set of points in $\BR^d$. Then there is a linear function $L:\BR^d\arrow \BR$
such that $L$ is injective on $S$.
\end{lemma}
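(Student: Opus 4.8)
The plan is to produce $L$ by a genericity argument in the dual space $(\BR^d)^*$. First I would observe that a linear functional $L$ separates two points $\vv p,\vv q\in S$, i.e.\ $L(\vv p)\neq L(\vv q)$, exactly when $L(\vv p-\vv q)\neq 0$. Since $\vv p\neq\vv q$ the vector $\vv p-\vv q$ is nonzero, so the ``bad'' set $H_{\vv p,\vv q}:=\{L\in(\BR^d)^*\,:\,L(\vv p-\vv q)=0\}$ is a hyperplane, hence a proper linear subspace of the $d$-dimensional space $(\BR^d)^*$.

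Next I would take the union of the $H_{\vv p,\vv q}$ over the finitely many unordered pairs of distinct points of the finite set $S$. The point is that a vector space over an infinite field — here $\BR$ — cannot be written as a finite union of proper linear subspaces; this follows either from the fact that a finite union of hyperplanes in $\BR^d$ has Lebesgue measure zero, or by an elementary induction on the number of subspaces. Therefore there exists $L\in(\BR^d)^*$ lying outside every $H_{\vv p,\vv q}$, and such an $L$ satisfies $L(\vv p)\neq L(\vv q)$ for all distinct $\vv p,\vv q\in S$, i.e.\ $L$ is injective on $S$.

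Equivalently and more explicitly, one can choose a vector $\vv v\in\BR^d$ not orthogonal to any of the finitely many nonzero difference vectors $\vv p-\vv q$ and set $L(\vv x):=\langle\vv v,\vv x\rangle$; the existence of $\vv v$ is the same statement applied to $\BR^d$ itself. There is essentially no obstacle in the argument; the only thing to keep in mind is that the ground field is infinite, so that finitely many proper subspaces cannot exhaust the dual space.
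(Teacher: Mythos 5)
Your proof is correct, but it takes a different route from the paper's. The paper argues by induction on the dimension $d$: one first chooses a projection $\BR^d\to\BR^{d-1}$ that is injective on $S$ (possible because only finitely many directions collapse some pair of points), and then applies the induction hypothesis to the image of $S$ in $\BR^{d-1}$; composing gives the desired $L$. You instead argue directly in the dual space: the functionals that fail to separate some pair of $S$ form a finite union of hyperplanes $H_{\vv p,\vv q}=\{L:L(\vv p-\vv q)=0\}$, and since $\BR$ is infinite (or by a measure-zero argument) this finite union cannot be all of $(\BR^d)^*$, so a good $L$ exists. Both proofs ultimately exploit the same finiteness of the "bad" set, but your one-shot genericity argument avoids the induction and is shorter and a bit more conceptually transparent; the paper's inductive projection argument is marginally more constructive in the sense that it builds $L$ as a composition of concrete coordinate-reducing projections. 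Either is perfectly acceptable here.
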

\begin{proof}
We prove this by induction on $d$. For $d=1$ there is nothing to prove, so suppose now that 
$d>1$ and we know the statement for $d-1$.\\
If we have finitely many different points in $S \subset \BR^d$, there are only finitely many directions
for projections to a $d-1$ -- dimensional space under which two points are mapped 
to the same point, avoiding them gives a projection $f:\BR^d \arrow \BR^{d-1}$ that is injective
on $S$. Using the induction hypothesis then gives a linear map $L:\BR^{d-1}\arrow \BR$
that is injective on $f(S)$, so together $s \mapsto L(f(s))$ is a linear function $\BR^d\arrow \BR$
which is injective on $S$.
\end{proof}
\noindent
Now we can prove the first part of Theorem \ref{theorem:TopologicalCompleteness} in the polynomial
cases 1, 2i, 3i; and as seen in \ref{subsec:VariantsDistinguishingCapabilities} this implies it also
for the other cases 2ii, 3ii:
\begin{lemma} \label{lemma:FundFeatDetConfig}
Let $S\subset \BR^3$ be a set of at most $n$ elements.
Then the fundamental features $\sum_{\vv r\in S} P(\vv r)$ with polynomials $P:\BR^3\to\BR$
of degree $\leq 2n$ uniquely characterize the set $S$.\\
Equivalently, the first $2n+1$ moments
\[
    \sum_{\vv r\in S} \vv r^{\otimes k}
    \qquad \hbox{for}\ k=0,1,...,2n
\]
determine the set $S$.
\end{lemma}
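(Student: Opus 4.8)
The plan is to reduce the three–dimensional problem to a one–dimensional one by a generic linear projection, recover the projected point set from its power sums via Newton's identities, and then lift back to $\BR^3$ by Lagrange interpolation. Concretely, suppose $S, S' \subset \BR^3$ each have at most $n$ points and agree on every fundamental feature $\sum_{\vv r} P(\vv r)$ with $\deg P \le 2n$; the goal is to conclude $S = S'$, which gives the uniqueness statement. First I would apply Lemma \ref{lemma:InjectiveLinFun} to the finite set $S \cup S'$ (at most $2n$ points) to obtain a linear form $L:\BR^3\arrow\BR$ that is injective on $S\cup S'$, hence injective on each of $S$ and $S'$ separately.

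Next I would recover $L(S)$ and $L(S')$ as subsets of $\BR$. For each $k$ the map $\vv r\mapsto L(\vv r)^k$ is a polynomial of degree $k\le 2n$, so the hypothesis gives $\sum_{\vv r\in S}L(\vv r)^k = \sum_{\vv r\in S'}L(\vv r)^k$ for $k=0,1,\dots,2n$. Writing $L(S)=\{t_1,\dots,t_m\}$ with the $t_i$ distinct (injectivity on $S$) and $m=|S|\le n$, and $L(S')=\{s_1,\dots,s_{m'}\}$ similarly, these equalities say the power sums $p_k=\sum_i t_i^k$ and $p'_k=\sum_j s_j^k$ coincide for $k=0,\dots,2n$. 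From $p_0=m$ and $p'_0=m'$ we get $m=m'$, and $p_1,\dots,p_m$ determine the elementary symmetric functions $e_1,\dots,e_m$ of the $t_i$ through Newton's identities, hence the monic polynomial $\prod_i(X-t_i)$ and thus the set $\{t_1,\dots,t_m\}$ (all multiplicities being $1$). The same applies to the $s_j$, so $L(S)=L(S')$; relabel so that $t_i=s_i$ for all $i$.

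Then I would recover the individual points. Fix $i$ and let $\vv r_i\in S$, $\vv r'_i\in S'$ be the unique points with $L$–value $t_i$. By Lemma \ref{lemma:LagrangeInterpolation} there is a polynomial $Q_i$ of degree $\le m-1\le n-1$ with $Q_i(t_j)=\delta_{ij}$. For each coordinate function $\pi\in\{x,y,z\}$ the polynomial $\vv r\mapsto Q_i(L(\vv r))\,\pi(\vv r)$ has degree $\le (m-1)+1=m\le n\le 2n$, so by hypothesis $\sum_{\vv r\in S}Q_i(L(\vv r))\pi(\vv r)=\sum_{\vv r\in S'}Q_i(L(\vv r))\pi(\vv r)$. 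The left side collapses to $\pi(\vv r_i)$ and the right side to $\pi(\vv r'_i)$, whence $\vv r_i=\vv r'_i$; letting $i$ run over $1,\dots,m$ yields $S=S'$. Finally, the stated equivalence with moments is immediate, since the entries of $\sum_{\vv r\in S}\vv r^{\otimes k}$ are precisely the sums $\sum_{\vv r\in S}x^ay^bz^c$ with $a+b+c=k$, so knowing these for $k=0,\dots,2n$ is, by linearity, the same as knowing $\sum_{\vv r\in S}P(\vv r)$ for all $\deg P\le 2n$.

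The only real point requiring care is the degree bookkeeping: one must check that the power‑sum monomials $L^k$ and the interpolation products $Q_i(L)\cdot\pi$ all stay within degree $2n$, which they do because $m=|S|\le n$. Beyond that there is no analytic or geometric obstacle — the two ingredients, Newton's identities and Lagrange interpolation, are elementary and already available (the latter as Lemma \ref{lemma:LagrangeInterpolation}).
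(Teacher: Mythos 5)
Your proof is correct but routes the reconstruction differently than the paper. Both start by applying Lemma \ref{lemma:InjectiveLinFun} to $S\cup S'$ to get a linear form $L$ reducing the problem to one dimension, but the conclusion is reached differently from there. The paper goes the shortest way: assume $S\neq S'$, pick any $\vv s_0$ lying in exactly one of the two sets, use Lemma \ref{lemma:LagrangeInterpolation} to build a single polynomial $p$ of degree $<2n$ with $p(L(\vv s_0))=1$ and $p=0$ at every other point of $L(S\cup S')$, and observe $\sum_S p\circ L=1\neq 0=\sum_{S'}p\circ L$ --- one indicator feature separates the sets. You instead carry out a full reconstruction: Newton's identities turn the power sums $\sum L(\vv r)^k$ into the set $L(S)=L(S')$ (with $p_0=|S|$ pinning the cardinality first), and then the interpolating products $Q_i(L)\cdot\pi$ recover the individual coordinates of each point. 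Both arguments are elementary and lean on the same two lemmas; yours adds Newton's identities, and in return it exhibits an explicit recovery map and --- as your own degree bookkeeping shows --- actually proves the sharper claim that degree $\le n$ already suffices (you only ever use $L^k$ with $k\le m\le n$ and $Q_i(L)\cdot\pi$ of degree $\le m\le n$), whereas the paper's single-indicator argument naturally lands at the stated $\le 2n$.
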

\begin{proof}
The two versions are equivalent because the entry with index $(e_1,...,e_k)$ with $1 \leq e_i \leq 3$
in the moment tensor of order $k$ is
just the fundamental feature $\sum_{(x,y,z)\in S}\, x^a\,y^b \,z^c$ with $a,b,c$ the number of indices
$1,2,3$ in $(e_1,...,e_k)$.\\
Assume we have two different sets $S \neq S'$ of at most $n$ elements each. 
According to the last lemma, we can find a linear map $L:\BR^d\arrow\BR$ that is injective on
$S\cup S'$.\\
Assume now that we have a point $\vv s_0$ that is only in one of the two sets
$S$ and $S'$.
Given the $|S\cup S'| \leq 2n$ values of $L(\vv s)$ for $\vv s\in S\cup S'$, there is  
a polynomial $p(t)$ of degree $<2n$ in $L$ that is $1$ on $L(\vv s_0)$ and 0 on $L(\vv s)$ for
all other points $\vv s \neq \vv s_0$ of $S$. 
Thus we get a polynomial $P(\vv r) := p(L(\vv r))$ of degree 
$\leq 2n$ with 
\begin{equation*}
    \sum_{\vv r\in S}  P(\vv r) \neq \sum_{\vv r'\in S'} P(\vv r')
\end{equation*}
\end{proof}

%--------------------------------------------------------------------------------------
\subsection{Completeness of invariant polynomial descriptors}
\label{sec:CompletenessInvPol}
%--------------------------------------------------------------------------------------
\noindent
We will now use this and the following proposition to prove the second part of theorem \ref{theorem:TopologicalCompleteness}:
%-----------
\begin{proposition} \label{prop:G_invariant_functions}
Let $V$ be an Euclidean vector space with $G$--representation $\rho: G\arrow O(V)$, 
and $\vv v, \vv w$ be two points of $V$ that are not
related as $\vv w = \rho(g) \vv v$ for any $g\in G$. Then there is a $G$--invariant polynomial
$P:V \arrow \BR$ such that $P(\vv v) \neq P(\vv w)$.
\end{proposition}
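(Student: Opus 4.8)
The plan is to exploit the compactness of $G$ twice: once to separate the two orbits by an arbitrary polynomial, and once more to average that polynomial over $G$ so as to make it $G$--invariant while keeping it polynomial. (Throughout I use that $G$ is compact --- in our applications $G=SO(3)$ or $O(3)$ --- so that orbits are compact and a normalized Haar measure is available.)

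\textbf{Separating the orbits.} Since $\rho(G)\subseteq O(V)$, the orbit $G\vv v:=\{\rho(g)\vv v:g\in G\}$ lies on the sphere of radius $|\vv v|$ and, being the continuous image of the compact group $G$, is compact; the same holds for $G\vv w$. By hypothesis these two compact sets are disjoint, so I would pick a closed ball $B\subset V$ containing both, apply Urysohn's lemma to get a continuous $f:V\to[0,1]$ that vanishes on $G\vv v$ and equals $1$ on $G\vv w$, and then approximate $f$ uniformly on $B$ by a polynomial (Weierstrass), obtaining a polynomial $Q:V\to\BR$ with $Q<\tfrac13$ on $G\vv v$ and $Q>\tfrac23$ on $G\vv w$.

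\textbf{Averaging.} With $dg$ the normalized Haar measure on $G$, set
\[
  P(\vv x):=\int_G Q\big(\rho(g)\vv x\big)\,dg .
\]
Invariance of the Haar measure makes $P$ $G$--invariant. It is still a polynomial: the space $\calP_{\le d}$ of polynomials on $V$ of degree $\le d:=\deg Q$ is finite--dimensional and invariant under the linear $G$--action $Q\mapsto Q\circ\rho(g)$, and the integral above is just the barycentre of the continuous curve $g\mapsto Q\circ\rho(g)$ in $\calP_{\le d}$, hence again an element of $\calP_{\le d}$. Finally, $\rho(g)\vv v\in G\vv v$ for every $g$ gives $P(\vv v)\le\tfrac13$, and symmetrically $P(\vv w)\ge\tfrac23$, so $P(\vv v)\neq P(\vv w)$.

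The argument is essentially routine; the one point that deserves care is the claim that the Haar average of a polynomial is a polynomial, which is why I would phrase it through the finite--dimensional invariant subspace $\calP_{\le d}$ rather than attempting to integrate coefficient by coefficient. Everything else --- compactness of orbits, Urysohn, Weierstrass --- is standard.
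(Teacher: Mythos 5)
Your proof is correct and follows essentially the same route as the paper's: compactness and disjointness of the two orbits, Urysohn to get a separating continuous function, Stone--Weierstrass to replace it by a polynomial on a compact set containing both orbits, and then Haar averaging to enforce $G$--invariance while staying polynomial. The only cosmetic differences are the choice of a ball versus a hypercube, the thresholds $1/3,2/3$ versus $0.1,0.9$, and that you justify the averaged function being a polynomial via the finite--dimensional $G$--invariant subspace $\calP_{\le d}$ rather than the paper's (equivalent) observation that the coefficients of $P$ are integrals over $G$.
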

%-----------
\begin{proof}
We choose an orthonormal basis of $V$ to identify it with some $\BR^d$.
Since $G$ is compact, the orbits $\rho(G) \vv v$ and $\rho(G) \vv w$ are compact, so they are also closed.
By assumption, they are disjoint. Since $V$ is a metric space, it is normal and Urysohn's lemma 
implies that there is a continuous function that is 0 on the one compact set and 1 on the other. So let 
$f: V\arrow \BR$ be such a continuous function with $f(\rho(G) \vv v)=\{0\}$ and $f(\rho(G) \vv w)=\{1\}$.
\smallspace
Since the orbits $\rho(G) \vv v$ and $\rho(G) \vv w$ are compact, they are also bounded, so let $R>0$ be such
that both are contained in the closed hypercube $H := [-R,R]^d$. 
By the Stone--Weierstrass approximation theorem, there is then for any $\epsilon > 0$
also a polynomial function $p:V\arrow \BR$ such that $|f(\vv x) - p(\vv x) < \epsilon|$ for all
$\vv x \in H$. Let $p$ be such a polynomial function for $\eps := 0.1$. Then 
\[
   p(\rho(G) \vv v) \subseteq [-0.1, 0.1] \qquad \hbox{and} \qquad p(\rho(G) \vv w) \subseteq [0.9, 1.1] 
\]
Averaging over $G$ gives now a function
\begin{equation}
   P(\vv x) := \int_{g\in G} p(\rho(g) \vv x) dg
   \label{eq:AveragePolynomial}
\end{equation}
for the Haar measure $dg$ on $G$ with total mass $\int_G 1 = 1$.
Since for each $g\in G$ the function $\vv x \mapsto p(\rho(g) \vv x)$ is a polynomial,
the integral \eqref{eq:AveragePolynomial} can actually be expressed as a polynomial 
in $\vv x$ with 
coefficients that are integrals over $G$, so the function $P:V\arrow \BR$ is actually a polynomial.
By construction, $P$ is $G$--invariant, and still satisfies the inequalities 
\[
   P(\rho(G)\vv v) \subseteq [-0.1, 0.1] \qquad \hbox{and} \qquad p(\rho(G) \vv w) \subseteq [0.9, 1.1]
\]
so in particular we have $P(\vv v) \neq P(\vv w)$.
\end{proof}
%-----------
\noindent
We will now apply this proposition not to our original $3$--dimensional vector space $\BR^3$, but 
to a vector space which is given by fundamental features. 
To prove the second part of Theorem \ref{theorem:TopologicalCompleteness}, assume we have two
configurations $\{(\vv r_i, \gamma_i)\}$ and $\{(\vv r'_i, \gamma'_i)\}$ of points in $X\subseteq \BR^3$ 
with colors in $\calC$, that are not equivalent under $G$, i.e. for every $g\in G$ 
\[
 \{(\vv r_i, \gamma_i)\} \neq \{(g \vv r'_i, \gamma'_i)\}.
\]
We can first try to distinguish them by counting the number of points of each color, using
the fundamental features $\sum_{\vv r\in S_\gamma} 1 = |S_\gamma|$ for $\gamma\in\calC$.
If we do not see a difference, we already know that the numbers
$|S_\gamma|$ are the same for both configurations.\\
Now let $\calX$ be the set of all configurations consisting of $k_1$ points of color 1, ...
$k_c$ points of color $c$, and let $p,q\in \calX$ be the two configurations that are not
equivalent under $G$. We can see 
$\calX = (\BR^{3k_1}/\Sigma_{k_1}) \times ... \times  (\BR^{3k_c}/\Sigma_{k_c})$
(with $\Sigma_k$ as the symmetric group permuting $k$ points) as a topological space,
and each feature is a continuous map $\calX\to\BR$.
\smallspace
From the previous section we know that for every $g\in G$ there must be a fundamental feature 
$f_g: \calX\to\BR$ with $f_g(p) \neq f_g(gq)$. Since the features are continuous functions, there must
also be a open neighborhood $U_g$ of $g$ in $G$ for which $f_g(p) \neq f_g(g' q)$ for all $g'\in U_g$.
Since $G$ is compact, finitely many of these open sets $U_g$ must 
already cover $G$, i.e. there are finitely many fundamental features that, taken together, can distinguish
$p$ from $g'q$ for all $g'\in G$. 
\smallspace
Let $n$ be the maximal degree of a polynomial occurring in some of these finitely many fundamental features,
and in case ii let also $J\subseteq I$ be the finite subset of all $j$ such that the radial basis function
$g_j$ occurs in some of these finitely many fundamental features. Let $\calP^{(n)}$ be the 
finite dimensional vector space of polynomials in $x,y,z$ of degree $\leq n$. In case i each $(\gamma,P)\in \calC\times\calP^{(n)}$ determines a fundamental feature $f_{(\gamma,P)}$ as
\[
    \sum_{\vv r\in S_\gamma} P(\vv r)
\]
and in case ii each $(\gamma, P, j)\in \calC\times\calP^{(n)} \times J$ determines
a fundamental feature $f_{(\gamma, P, j)}$ as
\[
  \sum_{\vv r\in S_\gamma} P\Big(\frac{\vv r}{|\vv r|}\Big) g_j(|\vv r|). 
\]
The linear combinations of these fundamental features form a finite dimensional vector space $\calY$,
it can be described in case i as
\[
    \calY := \BR^\calC \otimes \calP^{(n)},
\]
and in case ii as
\[
    \calY := \BR^\calC \otimes \calP^{(n)} \otimes \BR^J.
\]
(If some $P_m(x,y,z)$ are a basis of $\calP^{(n)}$, then the fundamental features
described by $\gamma \otimes P_m \otimes j$ with $\gamma\in \calC$ and $j\in J$ would be a basis
of $\calY$ in case ii.)\\
We denote the evaluation of a fundamental feature $y\in \calY$ on a configuration $x\in \calX$ 
as $\langle x, y \rangle$, it is a linear function in $y\in\calY$. As a function on $\calX$
this is a vector valued PPSD with values in the finite dimensional vector space $\calY^*$:
When followed by a basis vector $\gamma \otimes P_m \otimes j\in \calY$ (as a map $\calY^*\to\BR$),
we get the fundamental feature $f_{\gamma, P_m, j}$ on $\calX$.\\
Furthermore, we have an action of $G$ on $\calY$ which acts trivially on $\calC$ and $J$, but maps
$P\mapsto gP$ with $gP(\vv r) := P(g^{-1} \vv r)$. This action on $\calY$ satisfies
\[
   \langle gx, gy \rangle = \langle x, y \rangle.
\]
We can use this to map a $x\in \calX$ to the linear function on $\calY$
\[
   \iota: \calX \to \calY^*,  x \mapsto \big(y\mapsto \langle x, y\rangle\big).
\]
On $\calY^*$ the group $G$ operates as $gL := \big(y\mapsto L(g^{-1}y)\big)$, this means
the map $\iota: \calX \to \calY^*$ is covariant:
\[
   \iota(gx) = (y \mapsto \langle gx, y \rangle)
             = (y \mapsto \langle x, g^{-1}y \rangle)
             = g\ \iota(x)
\]
Since $G$ is compact, we can choose a $G$--invariant scalar product on $\calY^*$, with 
respect to this scalar product we have an orthogonal representation $\rho: G\to O(\calY^*)$.
Now we can apply Proposition \ref{prop:G_invariant_functions} to find a $G$--invariant polynomial
in the fundamental features, i.e. a $G$--invariant PPSD, which distinguishes the two
given configurations.
%---------------------------------------------------------
\subsection{Describing configurations by finitely many features}
\label{subsec:ConfFinitelyManyFeatures}
%---------------------------------------------------------
\noindent
For the last part of Theorem \ref{theorem:TopologicalCompleteness} we will need a stronger
version of this embedding of $\calX$ to $\calY^*$ which not only maps the orbits of $p$ and $q$
to disjoint sets, but is injective on compact subsets of $\calX$.
This is no problem in the case i: Lemma \ref{lemma:FundFeatDetConfig} shows that in that case
$\iota:\calX\to\calY^*$ is already injective. So we only consider the case ii here.
\smallspace
For some fixed $R>0$ we denote by $B_R$ the closed ball of radius $R$ around the
origin in $\BR^3$.
We will now give a strengthened version of the argument in \ref{subsec:VariantsDistinguishingCapabilities}.
\begin{lemma}\label{lemma:InjMapToVS}\phantom{.}\\
{[}Assuming the case ii, with Condition II$_w(2k)${]}\\
Let $k=k_1+...+k_c$, $R>0$, and $J$ be the finite subset of $I$ from Condition $II'_w(2k)$. Let
\[
   \calX :=\big(B_R^{k_1}/\Sigma_{k_1}\big) \times ... \times  \big(B_R^{k_c}/\Sigma_{k_c}\big).
\]
and with $\calP^{(2k)}$ the polynomials of degree $\leq 2k$ set
\[
    \calY := \BR^\calC \otimes \calP^{(2k)} \otimes \BR^J.
\]
Then the map
\[
   \iota: \calX \to \calY^*,  x \mapsto \big(y\mapsto \langle x, y\rangle\big).
\]
is injective, covariant, and given by fundamental features.
\end{lemma}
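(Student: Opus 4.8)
The fact that $\iota$ lands in $\calY^*$ (the pairing $\langle x,y\rangle$ is linear in $y$), is covariant ($\langle gx,gy\rangle=\langle x,y\rangle$ forces $\iota(gx)=g\,\iota(x)$), and is given by fundamental features (post-composing with a basis covector $\gamma\otimes P\otimes j$ of $\calY$ produces $x\mapsto\sum_{\vv r\in S_\gamma}P(\vv r/|\vv r|)\,g_j(|\vv r|)$) is verbatim the argument already given for the analogous map in Section~\ref{sec:CompletenessInvPol}. So the only new content is injectivity, and my plan is to derive it from the already-proved $X=\BS^2$ case of Theorem~\ref{theorem:TopologicalCompleteness}(1) via the radial-reduction trick of Section~\ref{subsec:VariantsDistinguishingCapabilities}, while keeping track that the features used actually lie in the \emph{fixed} finite-dimensional $\calY$.

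So let $x=(S_1,\dots,S_c)\neq x'=(S_1',\dots,S_c')$ be two points of $\calX$, i.e.\ two $\calC$-coloured multisets of fixed cardinalities $k_\gamma$ contained in $B_R$; we want $y\in\calY$ with $\langle x,y\rangle\neq\langle x',y\rangle$. Let $t_1<\dots<t_n$ be the at most $2k$ distinct positive radii occurring among the points of $x$ and $x'$. Condition~$II'_w(2k)$ then supplies, using only indices from the fixed finite set $J$, functions $f_i\in\operatorname{span}\{g_j:j\in J\}$ (each $g_j$, hence each $f_i$, vanishes at $0$) with $f_i(t_m)=\delta_{im}$. Now distinguish two cases. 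If for some colour $\gamma$ the number of non-origin points of $S_\gamma$ differs from that of $S_\gamma'$, then $h:=\sum_i f_i\in\operatorname{span}\{g_j:j\in J\}$ is $1$ at every $t_i$ and $0$ at $0$, so $x\mapsto\sum_{\vv r\in S_\gamma}h(|\vv r|)$ is a feature in $\calY$ (angular part $P=1\in\calP^{(2k)}$) whose values on $x,x'$ are exactly those counts, and it separates them. Otherwise all these counts agree; the cardinalities $k_\gamma$ being fixed, so do the origin multiplicities, and since $x\neq x'$ the multiset of non-origin points of some colour must differ. Then, as in Section~\ref{subsec:VariantsDistinguishingCapabilities}, reduce $x,x'$ to multisets $\tilde x,\tilde x'$ on $\BS^2$ coloured by $\calC\times\{1,\dots,n\}$ (a colour-$\gamma$ point at radius $t_i$ goes to its direction with colour $(\gamma,i)$); then $\tilde x\neq\tilde x'$ and every colour still occurs $\le k$ times, so by the $X=\BS^2$ case of Theorem~\ref{theorem:TopologicalCompleteness}(1) (equivalently Lemma~\ref{lemma:FundFeatDetConfig}, whose proof applies verbatim to multisets) there are a colour $(\gamma,i)$ and a polynomial $P$ of degree $\le 2k$ with $\sum_{\vv r'\in\tilde S_{(\gamma,i)}}P(\vv r')\neq\sum_{\vv r'\in\tilde S'_{(\gamma,i)}}P(\vv r')$. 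Writing $f_i=\sum_{j\in J}\alpha_j g_j$ and using that $f_i$ picks out precisely radius $t_i$,
\[
  \sum_{\vv r'\in\tilde S_{(\gamma,i)}}P(\vv r')
  =\sum_{\vv r\in S_\gamma}f_i(|\vv r|)\,P\!\left(\frac{\vv r}{|\vv r|}\right)
  =\sum_{j\in J}\alpha_j\sum_{\vv r\in S_\gamma}g_j(|\vv r|)\,P\!\left(\frac{\vv r}{|\vv r|}\right),
\]
a linear combination of the basis features $\gamma\otimes P\otimes j$ of $\calY$, and likewise for $x'$; the two values differ, so $\iota$ is injective.

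I expect the only real subtlety to be exactly this bookkeeping: making sure every separating feature produced sits inside the \emph{fixed} $\calY=\BR^\calC\otimes\calP^{(2k)}\otimes\BR^J$. The polynomial degree must stay $\le 2k$, which is precisely what Lemma~\ref{lemma:FundFeatDetConfig} guarantees since each reduced colour carries $\le k$ points; and, more delicately, the radial parts must be combinations of the same finitely many $g_j$ (and have no constant term, so they lie in $\calY$) regardless of which configuration pair is being separated. This is where one genuinely needs the \emph{uniform} Condition~$II'_w(2k)$ and not the merely pointwise Condition~$I'_w$: $II'_w(2k)$ delivers the radial $\delta$-functions $f_i$ with indices from one finite set $J$, valid simultaneously for all collections of $\le 2k$ radii below the bound $R$, and it is exactly this that makes $\calY$ finite-dimensional. (A harmless point: if some point of $x$ or $x'$ lies on the boundary sphere $|\vv r|=R$, apply $II'_w(2k)$ with a slightly larger bound; only finitely many radii are ever involved.)
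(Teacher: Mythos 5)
Your proposal is correct and follows essentially the same route as the paper's proof: use the radial $\delta$-functions from Condition $II'_w(2k)$ with the fixed finite index set $J$ to pick out the points at each of the at most $2k$ radii, then apply Lemma~\ref{lemma:FundFeatDetConfig} on $\BS^2$ (per radius, or equivalently with expanded colors $\calC\times\{1,\dots,n\}$ as in Section~\ref{subsec:VariantsDistinguishingCapabilities}) to recover the angular parts, pulling back via the $f_i$. Your explicit bookkeeping --- the case split on non-origin counts, the multiset caveat for Lemma~\ref{lemma:FundFeatDetConfig}, and the parenthetical adjustment for the boundary radius $|\vv r|=R$ --- spells out a few details the paper's argument leaves implicit, but the underlying approach is identical.
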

\begin{proof}
We have already seen this is covariant and given by fundamental features. For the 
injectivity, assume we are given two configurations of $k$ points. 
Then at most $2k$ radii occur in them, so let's say all radii appear in the list
$ 0 \leq t_1 < t_2 < ... < t_{2k} $. Then we can find linear combinations $f_a$ for $a=1,2,..,2k$
of the constant 1 and radial basis functions $g_j$ with $j\in J$ that satisfy
$f_a(t_b) = \delta_{a,b}$.\\
Then the fundamental features $\sum_{r\in S_\gamma} f_a(|\vv r|)$ count the number of 
points with radius $t_a$, so we can assume they are the same in both configurations.
According to Lemma \ref{lemma:FundFeatDetConfig}, the fundamental features on
$\BS^2$ given by polynomials of degree $\leq 2k$ are enough to determine the sets 
\[
  \left\{\frac{\vv r}{t_a}\ \Big|\ \vv r \in S_\gamma, |\vv r| = t_a\right\}
\]
for any $t_a>0$. Therefore, using
\[
   \sum_{\vv r\in S_\gamma, |\vv r|=t_a} P\left(\frac{\vv r}{|\vv r|}\right)
   = \sum_{\vv r\in S_\gamma}  f_a(|\vv r|) \cdot P\left(\frac{\vv r}{|\vv r|}\right),
\]
for $t_a>0$ we can determine also the set of original points of radius $t_a>0$ by these fundamental features, i.e. the map $\iota$ cannot map two different configurations to the same image in $\calY^*$.
\end{proof}

%---------------------------------------------------------
\subsection{Completeness of {\em covariant} PPSDs}
\label{sec:CovariantComplete}
%---------------------------------------------------------
\noindent
For $G$--invariant functions $f$ on point configurations we can
evaluate $f$ on equivalence classes of $G$--equivalent configurations,
and we formulated the completeness in part 2 of 
Theorem \ref{theorem:TopologicalCompleteness} as ``there are enough
invariant polynomial functions to distinguish any two non-equivalent
configurations''.\\
We cannot formulate such a completeness criterion directly 
for covariant functions with values in $V$ for a non-trivial
representation $(\rho, V)$: We cannot evaluate such functions
on equivalence classes of $G$--equivalent configurations, since
they get different values for different equivalent configurations.
\smallspace
However, in the above proof we embedded a space of configurations $\calX$ by an equivariant
map into a vector space $\calY^*$ given by fundamental features, and used the 
Stone--Weierstrass approximation theorem to prove the separation of orbits by approximating
a continuous function that was 1 on one orbit and 0 on the other orbit. This can be
used more generally as an approximation property that can be formulated also for
arbitrary representations $(\rho, V)$.
\smallspace
The approximation property in Theorem \ref{theorem:TopologicalCompleteness} is of the form
``uniform approximation on compact subsets of configurations'', so we need to specify what
the topological space of all configurations is:\\
Given the numbers of points $k_1 = |S_1|,...,k_c=|S_c|$ of each color, we have the topological
space of all configurations with these $(k_1,...,k_c)$ given by 
\[
   \calX_{(k_1,...,k_c)} = (\BR^{3k_1}/\Sigma_{k_1}) \times ... \times  (\BR^{3k_c}/\Sigma_{k_c})
\]
(Strictly speaking, this does not incorporate that the $k_i$ points of color $i$ should be 
different. However, as we have seen before, we can extend our PPSDs in a canonical way 
from taking sets as input to taking multisets as input, so this does not matter.)
We consider the space of all configurations to be the disjoint union of these spaces, so
the $\calX_{(k_1,...,k_c)}$ are the connected components of the space of all configurations.
This also means that a compact subset can only contain points from finitely many of these
$\calX_{(k_1,...,k_c)}$. According to part 2 of Proposition \ref{prop:NFPolynomial_unique}
we can give arbitrary polynomial functions on each of finitely many connected components,
and then find a PPSD that gives these functions as restrictions to their component.
(The context of Proposition \ref{prop:NFPolynomial_unique} was the polynomial function class
[case i], but in this specific part we only used that the $|S_i|$ are PPSDs, so this is equally 
valid in case ii.)
So if we are interested in approximation on a compact subset of configurations, we can
assume without loss of generality that this subset is contained in one $\calX_{(k_1,...,k_c)}$.
Furthermore, since compact subsets of Euclidean vector spaces are bounded, there must be a maximal
$|\vv r|$ of all points in these configurations, so with $B_R$ the closed ball of radius $R$ around
the origin in $\BR^3$, we can assume that the compact subset of configurations lies in
\[
   \calX :=\big(B_R^{k_1}/\Sigma_{k_1}\big) \times ... \times  \big(B_R^{k_c}/\Sigma_{k_c}\big)
\]
for some $R>0$. Then the statement is that any continuous $G$--covariant function $f:\calX\arrow V$
can be uniformly approximated by $G$--covariant polynomial functions $P_i: \calX \arrow V$.
(To be precise, these continuous / polynomial functions are continuous / polynomial functions
on $B_R^{k_1+...+k_c}$ that are invariant under $\Sigma_{k_1} \times ... \times \Sigma_{k_c}$.)
This can then be proven for arbitrary representations $(\rho_V, V)$
in a similar way as before :\\
\begin{itemize}
    \item Use the results of the previous section \ref{subsec:ConfFinitelyManyFeatures} 
    (which assumed Condition II$_w(2k)$ in case ii) to map $\calX$
    injectively by covariant fundamental features into a finite dimensional vector space $\calY^*$.
    \item Using Stone--Weierstrass:\\
    Let $X':=\iota(X)$ be the image in the vector space $\calY^*$, this
    is a compact subset of $\calY^*$. Then any continuous function on $X$ can be seen as a continuous function on $X'$, and polynomial function on $X'\subseteq \calY^*$ are PPSDs. 
    So we can use the Stone--Weierstrass theorem to approximate uniformly any continuous functions on
    $X'$ by polynomials.
    \item Averaging over the compact group $G$:\\
    The group $G$ acts on $\calY^*$ by some representation $\rho$.\\
    As above, we can apply an averaging operator to project functions $X\arrow V$ 
    or their equivalent $X'\arrow V$ to equivariant
    functions for the representations $(\calY^*, \rho)$ and $(V, \rho_V)$:\\
    We map a function $f:X'\arrow V$ to the average
    \begin{equation}
       F(x) := \int_{h\in G} \rho_V^{-1}(h) f(\rho(h) x)  dh
       \label{eq:AverageCovariantFunction}
    \end{equation}
    Then 
    \begin{eqnarray*}
    F(\rho(g) x) &=&  \int_{h\in G} \rho_V^{-1}(h) f(\rho(hg) x)  dh\\
      &=& \int_{h\in G} \rho_V(g)\rho_V^{-1}(hg) f(\rho(hg) x)  dh\\
      &=& \rho_V(g)\int_{h\in G} \rho_V^{-1}(hg) f(\rho(hg) x)  dh\\
      &=& \rho_V(g)F(x)
    \end{eqnarray*}
    \item Averaging does not increase the supremum norm:\\
    For a compact group $G$ we have a scalar product / norm on $V$ that is invariant under
    $\rho_V(G)$. Since all norms give the same topology on $V$, we can use this norm to prove
    density in the supremum norm. Then the averaging operator \eqref{eq:AverageCovariantFunction} is an isometry for the supremum norm, and applying this
    to a sequence of polynomials from the Stone--Weierstrass theorem that converge to a covariant continuous function, this then gives a sequence of covariant polynomials that
    converge to the continuous function.
\end{itemize}
This concludes the proof of part 3 of Theorem \ref{theorem:TopologicalCompleteness}.

%---------------------------------------------------------
\section{Proof of Theorem \ref{theorem:FiniteFeatures}}
\label{sec:ProofFiniteFeatures}
%---------------------------------------------------------
\subsection{Context of Theorem \ref{theorem:FiniteFeatures}}
\label{sec:ContextFiniteFeatures}
%--------------------------------------------------------
\noindent
Theorem \ref{theorem:FiniteFeatures} bounds the number of features necessary to 
give a complete characterization of configurations of $k$ points up rotations in $G$.\\
There are different ways to define the number of features necessary for a ``complete'' 
description of objects that can be seen as points on a manifold (/orbifold/stratified manifold) $M$:
\begin{enumerate}
    \item The dimension of $M$.\\
      This gives the number of features necessary to distinguish configurations locally, i.e. in a 
      neighborhood of a particular chosen object. Often the dimension of quotients
      $M = N/G$ will be $dim(N)-dim(G)$ (e.g. if $G$ is a Lie group that acts smoothly, freely
      and properly on a manifold $N$). In fact, in our case we have $dim(M)=3k-3$ for $k>2$.
    \item The embedding dimension of $M$.\\
      This is the number of features necessary to characterize points in $M$ globally; it is 
      usually larger than $dim(M)$: E.g. around any given point on the sphere
      $M:=\BS^2\subset\BR^3$, two of the three coordinates are sufficient,
      but each particular pair of coordinates will not be enough to identify points globally, and
      the Borsuk--Ulam theorem implies that even when allowing two arbitrary continuous features there 
      will always be two different (in fact, even antipodal) points on the sphere on which the
      two functions give the same pair of numbers.\\
    \item For algebraic varieties $M$, the minimal number of generators of $\BR[M]$.\\
      $\BR[M]$ is the algebra of polynomial functions on $M$, and a set of such functions
      is said to be generators of the algebra $\BR[M]$ if every function in $\BR[M]$ can be
      expressed as a polynomial in the generators.\\
      This is a quite strong condition on the basis set, and this number is usually again larger
      than the second number. 
      As a simple example, let $M$ be the curve in the plane $\BR^2$ given by $y^2=x^3$. Then 
      the $y$--coordinate alone already determines the point $(x,y)$, since there is a unique 
      cube root of $y^2$ in $\BR$, but the $x$--coordinate is not a polynomial in $y$, so just
      $\{y\}$ is not a set of generators. The
      algebra $\BR[x,y]/(y^2-x^3)$ of polynomials on $M$ can also be given as
      $\BR[t^2, t^3] \subset \BR[t]$ with $t:= y/x$, and we can see that no single polynomial 
      $P(t)\in \BR[t^2, t^3]$ generates the algebra $\BR[t^2, t^3]$: Up to an additive constant
      (which does not change the subalgebra generated by $P$) it has to have order 2 or higher;
      if it has order 2, $t^3$ cannot be written as a polynomial in $P$, and if it has order $>2$,
      $t^2$ cannot be written as a polynomial in $P$, so we need at least 2 generators for the 
      algebra of polynomials, but only one feature to distinguish points.
\end{enumerate}
We are here interested in the second number (embedding dimension), and will prove that it is 
$\leq 6k-5$.
This proof will also use the fact that the dimension is $3k-3$, and in the case of polynomial
functions, we will also show and use that the third number is finite.
\smallspace
To formulate Theorem \ref{theorem:FiniteFeatures} precisely, we need to distinguish the
different use cases (see Appendix \ref{sec:Variants}): We have $G=O(3)$ or $G=SO(3)$,
and the points are in
\begin{enumerate}
    \item Sphere: $X=\BS^2$
    \item Spherical shell: $X=\{\vv r \in \BR^3\,\Big|\, r_0 \leq |\vv r| \leq r_1\}$
    \item Full space: $X=\BR^3$
\end{enumerate}
and the functions are
\begin{enumerate}[label={\roman*.}]
    \item polynomials, or more generally
    \item linear combinations of products of polynomials on $\BS^2$ and a set of radial basis functions
          $g_i:\BR_{\geq 0}\arrow \BR$ for $i$ in some index set $I$.
\end{enumerate}
Theorem \ref{theorem:FiniteFeatures} is valid for cases 1, 2i, 2ii, 3i, but not 3ii.
For the case 2ii we also require that the $g_i$ are analytic.\\
The statement of Theorem \ref{theorem:FiniteFeatures} is:\\
We consider $G$--equivalence classes of colored point sets with at most $k_1,...,k_c$ points of colors $1,...,c$.
For $k:=k_1+...+k_c$, we can find $2k\cdot\dim(X)-5$ invariants (of the given function class) that already distinguish
all equivalence classes.
\smallspace
We can consider $k$--point configurations in $X$ as points in $X^k/\Sigma$ where $\Sigma\subseteq \Sigma_k$ is the 
subgroup of the symmetric group $\Sigma_k$ that only permutes points with the same color. Then we can identify 
$X^k/(\Sigma \times G)$ with the $G$--equivalence classes of configurations, if it was a manifold, its dimension 
would be $d:=\dim(X^k)-\dim(G)=k\cdot\dim(X)-3$, and Theorem \ref{theorem:FiniteFeatures} says we can characterize
points in $X/(\Sigma \times G)$ by $2d+1$ features.
\smallspace
Theorem \ref{theorem:FiniteFeatures} is reminiscent of the embedding theorem of Whitney (see e.g. \cite{guillemin2010differential}, Chapter 1.8):
Any smooth real $d$-dimensional manifold can be smoothly embedded in $\BR^{2d}$ for $d>0$.\\
(The slightly weaker statement that it can be embedded in $\BR^{2d+1}$ is easier to prove, we
will follow the general idea of such a proof.)\\
There are three main differences: 
\begin{itemize}
    \item The quotient $X^k/(\Sigma\times G)$ is not a manifold (but an orbifold, see below).
    \item Instead of arbitrary smooth features we only allow polynomial features (in case i, or other
          specific analytic features in case ii).
    \item We will prove a slightly stronger version which bounds the factor by which distances
          can shrink in this embedding.
\end{itemize}
\smallspace
We will prove Theorem \ref{theorem:FiniteFeatures} in the more detailed form:
\begin{enumerate}[label=\alph*.]
    \item There are finitely many $G$--invariants that distinguish all $G$--equivalence classes of colored
          point sets with at most $k_1,...,k_c$ points of colors $1,...,c$.
    \item Given a set of $N$ $G$--invariants that distinguish all $G$--equivalence classes of colored
          point sets with at most $k_1,...,k_c$ points in $X$ of colors $1,...,c$, and $k:=k_1+...+k_c$,
          we can find $2k\cdot\dim(X)-5$ linear combinations that already distinguish all equivalence classes.
    \item These $2k\cdot\dim(X)-5$ linear combinations can be found by random orthogonal projections:
          After $N-2k\cdot\dim(X)+5$ projections to 1-codimensional subspaces we arrive at $2k\cdot\dim(X)-5$
          linear combinations of the features, with probability 1 this will be successful in the sense
          that they also already distinguish all equivalence classes of configurations.
    \item For each such successful sequence of random projections, the distance of 
          the projected $2k\cdot\dim(X)-5$-dimensional feature vectors is lower bound by the 
          distance in the original $N$--dimensional space up to a factor, i.e. there is a constant
          $\eps>0$ such that for all configurations $\{r_i,\gamma_i\}$, $\{r'_i,\gamma'_i\}$ the 
          distances in the feature spaces satisfy
          \[
               d_{proj}\left(\{r_i,\gamma_i\}, \{r'_i,\gamma'_i\}\right) \geq \eps \cdot d_{orig}\left(\{r_i,\gamma_i\}, \{r'_i,\gamma'_i\}\right)
          \]
    \item We required in general that the radial basis functions allow approximating polynomials,
          but for this theorem for a fixed $k$ it is enough that for any $2k$ different radii
          $t_1,...,t_{2k}$ there are $2k$ radial basis functions $g_1,...,g_{2k}$ such that
          the ${2k}\times {2k}$--matrix $g_i(t_j)$ is nonsingular.
\end{enumerate}
\smallspace
With these additional specification of the projection procedure and the resulting inequality for the distances, this theorem reminds of Johnson--Lindenstrauss type lemmas.
However, in the original Johnson--Lindenstrauss lemma the point set is finite, and the embedding 
dimension increases with the number of points. For versions in which the point set is a manifold
(e.g. \cite{Clarkson2008},\cite{Baraniuk2009}), instead of the number of points, some geometric
properties of the manifold (volume, curvature) influence the embedding dimension.\\
In contrast, here the dimension only depends on the dimensionality of the input (stratified) manifold.
(Also, the Johnson--Lindenstrauss type lemmas try to achieve an almost isometric map, they succeed
with some high probability, but not with probability 1. Here we are satisfied with some lower
bound on the distances in the target space, and achieve success with probability 1.)
\smallspace
We now turn to the proof in the following subsections.\\
To simplify a bit, we will first fix the numbers $|S_1|,...,|S_c|$ of points of colors $1,2,...,c$.
In section \ref{subsec:LessThanKPointsFiniteFeatures} we will then
explain why we can also encode these numbers and characterize all configurations with \emph{at most}
$k_i$ points of color $i$ without using more features.
\smallspace
While PPSDs are defined for configurations of any size, in this theorem we are now looking at
functions on configurations of a fixed number of points.
For each fixed combination $|S_1|,...,|S_c|$ of points of colors $1,2,...,c$ Proposition \ref{prop:NFPolynomial_unique}
says that PPSDs on these point sets are the same functions as the polynomials in points 
$\vv x_1,...,\vv x_k\in \BR^3$ that are invariant under permutations of points of of the same color,
so in the proof we will rather use these concrete polynomials in $3k$ variables.

%---------------------------------------------------------
\subsection{Finiteness}
\label{sec:FinitenessOnly}
%--------------------------------------------------------
\noindent
We will first look at part a, the finiteness.
The previous topological completeness theorem \ref{theorem:TopologicalCompleteness}
showed that \emph{all} invariant PPSDs are enough to distinguish any two configurations
that are not equivalent under $G$, but this
is an infinite set, and we need a finite subset to start the projection procedure.
\smallspace
{\bf Proof in case i: Polynomials}\\
Obviously, a finite set of invariants that distinguishes all non-equivalent
configurations in $X=\BR^3$ also gives enough invariants for other cases
$X\subset \BR^3$, so we will only consider $X=\BR^3$.
In this case we can prove that the algebra of invariant polynomials
$\BR[x,y,z]^{\Sigma\times G}$ is finitely generated as an algebra over $\BR$:
The group is $\Sigma\times G$ is compact, hence reductive, so this is a special case of
Hilbert's finiteness theorem on invariants. (See e.g. \cite{mukai2003introduction}, chapter 
4.3. In this reference, the ground field is assumed to be algebraically closed, but this
does not make a difference: Any invariant $f\in \BC[x,y,z]$ can be written as 
$f=g + i\cdot h$ with $g,h\in \BR[x,y,z]$, and the operation of $\Sigma\times G$
on $f$ is just given by its operation on $g$ and $h$.)
% A better reference would be:
% \url{https://www.imprs-mis.mpg.de/fileadmin/imprs/imprs-ringvorlesung-2018_june-19.pdf}
% but apparently this has not been officially published.
\\
Since the values of these finitely many generators of the algebra already determine the
values of all invariant polynomials, these are enough to distinguish any two non--equivalent 
configurations.
\smallspace
{\bf Counterexample in case 3ii}\\
%--------------------------------------------------
In the case ii we have more freedom to choose the radial basis functions, hence
the algebraic proof of i does not work any more. In fact, we have to make up for
the relaxed restrictions on the function class by adding new assumptions, as we 
will show in this counterexample:
\smallspace
Assume $X=\BR^3$ and as in the counterexample in section \ref{sec:FiniteConditions}
the basis functions are (the constant 1 and) $\sin\big(\frac{2a-1}{2^b}\cdot r\big)$, and
$\cos\big(\frac{2a-1}{2^b}\cdot r\big)-1$ for natural numbers $a,b=1,2,...$.
We have seen in section \ref{sec:FiniteConditions} that although these satisfy 
Conditions I and II, no finite subset of these functions will be able to uniquely 
characterize configurations of only 1 point on the $x$--axis),
i.e. Theorem 3 does not hold in the case 3ii.
\smallspace
We also showed in section \ref{sec:FiniteConditions} that Condition II (analyticity)
was needed to derive Condition II$_w(2k)$, smoothness would not suffice.
So in the following we assume Conditions I and II (or for this finiteness part only,
assuming Condition II$_w(2k)$ only also is enough).

\smallspace
{\bf Proof in case 2ii: Compact $X$, Condition II$_w(2k)$}\\
%------------------------------------------------
Let $X:= \{\vv r\in \BR^3\,\big|\, r_0 \leq |r| \leq r_1\}$, and we want
to prove this for configurations of $k:=k_1+...+k_c$ points.
Assume Conditions I and II$_w{(2k)}$ (or I and II, as we showed in 
Appendix \ref{sec:FiniteConditions}, Proposition \ref{prop:AnalyticImpliesIIw}
that this implies Condition II$_w(n)$ for all $n$).
We use the same argument as in section \ref{subsec:ConfFinitelyManyFeatures}, this 
time for equivalence classes of configurations under $G$.
Let $J$ be the finite subset of $I$ guaranteed by condition II$_w(2k)$ for $R=r_1$.
Since we are given two configurations of $k$ points, at most $2k$ radii occur in 
them, so let's say all radii appear in the list
$ t_1 < t_2 < ... < t_{2k} $. Then we can find linear combinations $f_a$ of 
radial basis functions $g_j$ with $j\in J$ that satisfies $f_a(t_b) = \delta_{a,b}$.
Therefore features using these $f_a$ and the constant function $1$ on the sphere 
are enough to distinguish them if they do not have the same number of points on the same
radii.\\
Furthermore, for any polynomial feature on $\BS^2$ we can combine this feature with the radial basis
function $f_a / t_a$ for $a\in\{1,2,...,2k\}$ and thus use the finiteness of 
necessary features for $X=\BS^2$ and colors $\calC \times \{1,2,...,2k\}$ to deduce
the required finiteness condition for our case.
%---------------------------------------------------------
\subsection{General outline for bound on number of features}
\label{sec:OutlineFiniteFeatures}
%--------------------------------------------------------
\noindent
The proof of Theorem \ref{theorem:FiniteFeatures} will depend on the the variant that we are interested
in, in particular whether $X$ is compact (i.e. the sphere $X=\BS^2$ or the spherical shell
$X=\{ r\in \BR^3\,\big|\, r_0\leq|\vv r|\leq r_1\}$) or not (i.e. $X=\BR^3$), and whether 
we restrict the function space to polynomials or not. In the main part we have restricted our 
attention to the function space of polynomials, in this case we can use algebraic geometry to 
prove this, see section \ref{subsec:AlgebraicFiniteFeatures} below.
If we allow more general radial basis functions, we will 
need to use a different theory (subanalytic sets) and will also need more assumptions, in 
particular that $X$ is compact.
\smallspace
We will give the proof for $X$ compact (i.e. sphere or spherical shell) in 
section \ref{subsec:AnalyticFiniteFeatures}, assuming the radial basis
functions are analytic in the interval $[r_0,r_1]$ where they are used.
For $X=\BR^3$ and polynomial features
we will give the proof in section \ref{subsec:AlgebraicFiniteFeatures}.
\smallspace
Let $\calX$ be the set of $G$--equivalence classes of colored point sets $\subseteq X$ 
with $|S_1|, ..., |S_c|$ points of colors $1,2,...,c$.
Let $\calF$ be a vector space of functions on $\calX$ that separate points.
\smallspace
We want to show: There are $2\cdot dim(X)\cdot k-5$ elements of $\calF$ that already separate points.
The outline of the proof is: 
\begin{enumerate}
    \item $\calX$ is an orbifold (see below) of $dim(\calX) = k\cdot dim(X)-3$.
    \item According to part a) there is a finite tuple of functions $f_1,...,f_N\in \calF$ that
      separate points of $\calX$.
      This defines an embedding $f:\calX\hookrightarrow\BR^N$.
    \item Show that the set $D\subseteq \BP^{N-1}$ of directions in $\BR^N$
       given by two different points $f(\vv x_1), f(\vv x_2)$ has a closure 
       $\overline D$ of dimension $\dim(\overline D) \leq 2 \cdot \dim(\calX)$.
    \item If $N-1 > 2\cdot dim(\calX)$, then there is a linear map
       $\BR^N\arrow \BR^{N-1}$ that is injective on $\calX$ (and in fact only 
       decreases distances by a factor of at most $C$ for some constant $C>0$).
\end{enumerate}
Then the theorem follows because we can start with the embedding from 1, use 2 to decrease $n$ as long as
$n>2\cdot dim(\calX)+1$ (replacing the functions $f_1,...,f_n$ by $n-1$ linear combinations of them), and
end up with $n=2\cdot dim(\calX)+1$, which is $6k-5$ for $X=\BR^3$ and $4k-5$ for $X=\BS^2$.
\\
The problem is now to find appropriate notions of functions / sets / dimensions
to make this outline precise. 

%---------------------------------------------------------
\subsection{Dimension of orbifolds and stratified manifolds}
\label{subsec:Orbifolds}
%--------------------------------------------------------
\noindent
We will not use much of the general theories of orbifolds and stratified manifolds, only 
enough to give a meaning to ``$dim(\calX) = k\cdot dim(X)-3$'' (that is compatible with
the definitions of dimension in real algebraic geometry and subanalytic geometry).
\smallspace
We can consider the set $M:=X^k$ as a manifold on which the compact group $K :=\Sigma \times G$ operates.
In general, the quotient $M/K$ of a manifold by a compact group gives an orbifold.
For example, the 1-dimensional manifold $\BR$ divided by the group $\{\pm 1\}$
gives a quotient space that can be identified with $\BR_{\geq 0}$. This is no longer 
a manifold, as the point $0$ has no neighborhood that would be homeomorphic to an open
interval.\\
These quotients (or orbifolds in general) can be considered as stratified manifolds, see e.g. chapter 4
of \cite{Pflaum2001}, the definition of stratified manifolds is also
explained in chapter I.1 in \cite{goresky2012stratified}.
In particular, stratified manifolds have a decomposition as a finite disjoint union of manifolds (usually
of different dimensions). In the above example, the point 0 would be a 0-dimensional stratum,
and $\BR_{>0}$ a 1-dimensional stratum. The dimension of a stratified manifold is the largest $d$
for which there is a non--empty component of dimension $d$.
\\
The stratification of a quotient $M/K$ is given by orbit types: For $x\in M$ denote by 
$K_x$ the stabilizer
\[
    K_x := \{g\in K\,|\, gx=x\}.
\]
For any two points $x,y=gx$ in the same orbit the stabilizers are conjugate: $K_y = g K_x g^{-1}$.
The conjugacy class of $K_x$ is also called the orbit type of $x$.
Now Theorem 4.3.7 in \cite{Pflaum2001} shows that the orbit types define a stratification of $M$,
and Corollary 4.3.11 gives that the quotients of the strata by $K$ define a stratification
of $M/K$.
\\
The orbit types have a natural partial order: For two subgroups $H_1, H_2$ of $K$ 
and their conjugacy classes $(H_1), (H_2)$ we say $(H_1) \leq (H_2)$ if $H_2$ is conjugate
to a subgroup of $H_1$. Now Theorem 4.3.2 in \cite{Pflaum2001} shows that there is a unique
largest orbit type and the points of this orbit type are open and dense in $M$.
\\
In our case $K=\Sigma\times G$, and an element $\kappa=(\sigma, g)\in K$ operates on a configuration
$x = (\vv r_1, ... , \vv r_k)\in X^k$ by permuting points (of same color) by $\sigma$, and applying
the rotation $g$ to all points. If the distances from the origin $|\vv r_i|$ are all different, 
$(\sigma, g) x = x$ can only happen for $\sigma=e$ and $g \vv r_i=\vv r_i$ for all $i=1,2,...,k$.
This in turn means that for $g\neq Id$, all $\vv r_i$ must lie on the line that is mapped to itself by 
$g$. So if we choose points $\vv r_i$ that have all different radii and do not lie on a line, 
$K_x = \{(e, Id)\}$. Since we can always find such points for $k>1$, the largest orbit type
is given by the neutral element in $K$, and then $K$ operates freely on the largest stratum.
So for $k>1$ the largest stratum is a manifold of dimension $\dim(X^k)-\dim(G) = k\cdot\dim(X) - 3$.
(And for $k=1$ the orbifold $M/K$ is $\BR_{\geq 0}$ of dimension 1.)
\smallspace
Our stratified manifolds can be given as semi--algebraic or sub--analytic subsets of some $\BR^N$,
and instead of talking about general stratified manifolds, in the following we will use these sets
and the theory of semi--algebraic / sub--analytic sets instead.
\smallspace
We already proved that there are features $f_1,...,f_N$ that give a map
\[
   X^k \arrow \calX \subseteq \BR^N
\]
which is invariant under $\Sigma \times G$, and which distinguishes any two configurations in $X^k$
that are not equivalent under $\Sigma \times G$, so the image $\calX$ of this map can be identified 
with the set $X^G/(\Sigma\times G)$ in a natural way.
\smallspace
We will now have to define a dimension for this image (which will be the same as above)
and its secant set, and the closure of the secant set,
and show that they behave as expected (which will be the case for algebraic and analytic functions, but 
would again be wrong for smooth functions in general).

%---------------------------------------------------------
\subsection{Algebraic case}
\label{subsec:AlgebraicFiniteFeatures}
%--------------------------------------------------------
\noindent
When we restrict our functions to be polynomials on $X$, this becomes a problem of algebra,
which can be solved by purely algebraic means. For example, we could apply Theorem 5.3 of \cite{KamkeKemper2012}.
However, we here use a more geometric formulation that will be easier to extend to the analytic case.
Since we are looking at sets in $\BR^N$ and real--valued functions, we have to use real algebraic geometry.
A standard reference is e.g. \cite{bochnak2013real}, which we will use for citing the propositions that we need.
\smallspace
An \emph{algebraic subset} of $\BR^N$ is the set of zeros of some set of polynomials in $\BR[X_1,...,X_N]$.
(Def. 2.1.1., p. 23). For example, the sphere $\BS^2\subseteq \BR^3$ is an algebraic 
subset.
\smallspace
A \emph{semi--algebraic subset} of $\BR^N$ is a set of points that can be defined by (finitely many)
polynomial equations $P(X_1,...,X_N)=0$ and inequalities $P(X_1,...,X_N)>0$ combined with Boolean
operations. (Def. 2.1.4., p.24). For example, the spherical shell $\{\vv r \in \BR^3\,\Big|\, r_0 \leq |\vv r| \leq r_1\}$
is given by the inequalities
\[
 r_0^2 \leq x^2+y^2+z^2 \quad\wedge\quad x^2+y^2+z^2 \leq r_1^2,
\]
so it is a semi--algebraic (but no algebraic) subset of $\BR^3$.
\smallspace
We can extend that definition to maps: A map $A\arrow B$ is said to be semi--algebraic if its graph 
$\subseteq A\times B$ is semi--algebraic. For example, rational functions are semi--algebraic:
Let $A\subseteq \BR^d$ be a semi--algebraic subset on which the polynomial $g(x_1,...,x_d)$ is not zero.
Then the map $A\arrow \BR$ given by the rational function
\[
   q(x_1,...,x_d):= \frac{f(x_1,...,x_d)}{g(x_1,...,x_d)}
\]
is semi--algebraic, since its graph is
\[
   \{\,(x_1,...,x_d, y)\ |\ y \cdot g(x_1,...,x_d) = f(x_1,...,x_d)\,\}.
\]
\smallspace
A fundamental theorem of real algebraic geometry is that any projection of a semi--algebraic set is 
again semi--algebraic (Theorem 2.2.1), this can be formulated as a model 
theoretic statement: This theory has ``quantifier elimination''  --- 
Given any formula $\Phi(X_1,...,X_n)$ of first--order predicate logic using only polynomial equalities
and / or inequalities, the set $\{\vv x\in \BR^N\,\big|\, \Phi(\vv x)\}$ is also semi--algebraic
(Proposition 2.2.4).\\
In particular, since the distance squared is given by a polynomial expression and the closure of a set
can be described as ``the points for which for each distance $d$ there is a point of distance 
$\leq d$ in the set'', we get that the closure of a semi--algebraic set is again a semi--algebraic 
set (Proposition 2.2.2). Similarly the image of a semi--algebraic set under a
semi--algebraic map is again a semi--algebraic set (Proposition 2.2.7), and that gives us in particular
that our model $\calX\subseteq \BR^N$ of $X^k/(\Sigma\times G)$ is a semi--algebraic set.\\

\smallspace
The dimension of a semi--algebraic set can be defined by using Theorem 2.3.6 of \cite{bochnak2013real}: All semi--algebraic sets are disjoint unions of semi--algebraic sets (semi--algebraically) homeomorphic to the $d$--dimensional open hypercube $]0,1[^d$ for some $d$ (for $d$=0 this is defined as 
a point). The dimension of the set is then the highest $d$ that appears in such decomposition (see chapter 2.8
in \cite{bochnak2013real}). This dimension has the expected properties:
\begin{itemize}
    \item An open semi--algebraic subset of $\BR^n$ has dimension $n$. (Prop. 2.8.4)
    \item For a semi--algebraic set $A$ and a semi--algebraic map $f$, $dim(A)\geq\dim(f(A))$. (Prop 2.8.8)
    \item For a semi--algebraic set $A$, the closure $\bar A$ has $\dim(\bar A)=\dim(A)$. (Prop.2.8.13)
\end{itemize}
From the first two properties we also can see that this notion of dimension coincides with the dimension
of a stratified manifold as described in section \ref{subsec:Orbifolds}.
\smallspace
Points in the (real) projective space $\BP^{N-1}$ are given by 1-dimensional subspaces (i.e. lines through
the origin) in $\BR^N$, they are denoted by $[x_1:...:x_N]$ for $(x_1, ... x_N) \neq \{(0,...,0)\}$
with the convention that
\[
    [x_1:...:x_N] = [\lambda \cdot x_1:...: \lambda \cdot x_N] \quad \hbox{for any}\ \lambda \neq 0
\]
We can map this set bijectively to a semi--algebraic subset of matrices in $\BR^{N\times N}$ by
\begin{equation}
   [x_1:...:x_N] \ \mapsto\  \left(\frac{x_i\cdot x_j}{\sum_m x_m^2}\right)_{i,j}
   \label{eq:ProjMatrix}
\end{equation}
This means we encode a line in $\BP^{N-1}$ as the orthogonal projection to that line in $\BR^{N\times N}$.
The image is given by the set of those matrices that satisfy $A^T=A, A^2=A, \Tr(A)=1$, so it is even an 
algebraic set (see chapter 3.4 in \cite{bochnak2013real}).
By looking at the usual decomposition of $\BP^{N-1}$ into affine sets $\BR^m$ for 
$m=0,1,...,N-1$ one can see that $\dim(\BP^{N-1}) = N-1$.
\smallspace
Now we define our secant set: Let $\Delta \subset \calX\times\calX$ be the diagonal
\[
   \Delta := \{(x,x)\,|\, x\in \calX\}
\]
and consider the map
\[
   sec: \quad \calX \times \calX \setminus \Delta \quad\longrightarrow \quad \BP^{N-1}
\]
defined by
\[
   \Big((x_1,...,x_N), (y_1,...,y_N)\Big) \ \mapsto\  [x_1-y_1\ :\ ...\ :\ x_N-y_N]
\]
Considering $\BP^{N-1}$ as set of matrices given by the image of \eqref{eq:ProjMatrix},
this becomes a semi--algebraic map (recall the remark above about rational functions being
semi--algebraic).
\smallspace
Using these properties of semi--algebraic sets and functions, and the definition of dimension 
of semi--algebraic sets, we now can make the statement 3 in the outline precise (for the algebraic
case): Let $D$ be the image of the semi--algebraic function $sec$ 
defined on the semi--algebraic set $\calX \times \calX \setminus \Delta$, and $\bar D$ its 
closure (which is again semi--algebraic), then its dimension must be
be
\[ 
   \dim(\bar D) = \dim(D) \leq \dim(\calX \times \calX \setminus \Delta) = 2 \cdot \dim(\calX).
\]
Since $\dim(\BP^{N-1}) = N-1$, the condition $N-1 > 2\cdot \dim(\calX)$ in statement 4 implies 
that there is a direction in $\BP^{N-1}\setminus \bar D$, and that in turn means that the orthogonal
projection in this direction to the orthogonal complement of this direction is injective.
Since $\bar D$ is closed, this furthermore means there must be a positive angle $\alpha$ between this
direction and the closest point in $\bar D$, which then translates to the fact that the orthogonal
projection to the orthogonal complement multiplies distances in $\calX$ by a factor 
$ > \sin(\alpha)>0$.

%---------------------------------------------------------
\subsection{Smooth case (counterexample)}
%--------------------------------------------------------
\noindent
In the above proof, we used for the semi--algebraic set $A := \calX \times \calX \setminus \Delta$
and the semi--algebraic (secant) map $f$ the general formula
\[
    \dim(\overline{f(A)}) \leq \dim(A).
\]
This formula would not be valid for smooth maps $f$:
Let $A$ be the open interval $]0,1[$, this is a 1-dimensional manifold.
Let $\vv x_i\in ]0,1[^d$ be any sequence in some $d$--dimensional hypercube.
Then we can construct a smooth function $f$ with $f(1/(i+1))=\vv x_i$ for $i=1,2,3,...$.
Since the set of rational numbers $\BQ$ is countable, also the set of points of the hypercube
in $\BQ^d$ is a countable set, so we can define a $f:]0,1[\arrow \BR^d$
that goes through all these points. Since they are dense in the hypercube, we have
\[
   \dim(\overline{f(A)}) = d > 1 = \dim(A).
\]
So the above arguments would not be valid for smooth functions.
However, we will see that they work for analytic functions.

%---------------------------------------------------------
\subsection{Analytic case}
\label{subsec:AnalyticFiniteFeatures}
%--------------------------------------------------------
\noindent
In the analytic case, we will follow mostly the same arguments as in the semi--algebraic case,
but use the theory of subanalytic geometry. While the properties of subanalytic sets
are much more difficult to prove, using them for our purpose works almost the same
as in the semi--algebraic case.
A standard reference for sub--analytic geometry is e.g. 
\cite{Shiota1997}, see also chapter I.1 in \cite{goresky2012stratified} and the notes \cite{Valette2023}.

\begin{definition}[semi--analytic subset]\phantom{.}\\
(\cite{goresky2012stratified}, p.43)
A semi--analytic subset $A$ of $\BR^N$ is a subset which can
be covered by open sets $U \subseteq \BR^N$ such that each $U\cap A$ is a union of connected
components of sets of the form $g^{-1}(0) - h^{-1}(0)$, where $g$ and $h$ belong to some
finite collection of real valued analytic functions in $U$.
\end{definition}
Examples:
\begin{itemize}
    \item $A=\BR^N$ is a semi--analytic subset:\\
      Take $g(\vv x):=0, h(\vv x):=1$.
    \item $\{\vv x\in \BR^N\,|\, g(\vv x)=0\}$ and $\{\vv x\in \BR^N\,|\, h(\vv x)\neq 0\}$ are semi--analytic subsets:\\
      Take $h(\vv x):=1$ or $g(\vv x):=0$ respectively.
    \item $\{\vv x\in \BR^N\,|\, h(\vv x)>0\}$ is a semi--analytic subset: It is a union of connected components
       of $\{\vv x\in \BR^N\,|\, h(\vv x)\neq 0\}$ 
    \item The union and intersection of semi--analytic subsets are semi--analytic:\\
       Union by definition, for intersection use
       \begin{eqnarray*}
          g_1=0\,\wedge\,g_2=0 \quad &\Leftrightarrow&\quad g_1^2 + g_2^2=0\\
          h_1\neq 0\,\wedge\,h_2\neq 0 \quad &\Leftrightarrow& \quad h_1 \cdot h_2\neq 0.
       \end{eqnarray*}
    \item The product of semi--analytic subsets are semi--analytic subsets of the product of their manifolds.\\
        (Use the same formulas as for intersection.)
\end{itemize}
It follows that semi--algebraic sets are also semi--analytic; in particular, 
the sets $X=\{\vv r \in \BR^3\,\big|\, r_0 \leq |\vv r| \leq r_1\} \subset \BR^3$ 
and $X^k \subset \BR^{3k}$ are semi--analytic.\\
We want to derive some properties about the image of $X^k$, but the image of semi--analytic sets under 
analytic maps is not guaranteed to be semi--analytic, therefore we need a more general definition:
\smallspace
\begin{definition}[subanalytic subset / map]\phantom{.}\\
(\cite{goresky2012stratified}, p.43)
A subanalytic subset $B$ of $\BR^N$ is a subset which can be covered by open
sets $V\subseteq \BR^N$ such that $V \cap B$ is a union of sets, each of which is a connected
component of $f(G)- f(H)$, where $G$ and $H$ belong to some finite family $\calG$ of
semi--analytic subsets of $\BR^{N'}$, and where $f: \BR^{N'}\to \BR^N$ is an
analytic mapping such that the restriction of $f$ to the closure of $\cup \calG$ is proper.
A subanalytic map between two subanalytic sets is one whose graph is subanalytic. 
\end{definition}
\noindent
In particular, if $f:\BR^{N'}\to \BR^N$ is a proper analytic map, 
and $G\subseteq \BR^{N'}$ is semi--analytic, then $f(G)$ is subanalytic in $\BR^N$.
\smallspace
Note that the word ``proper'' here is needed, even when going to subanalytic sets it is not in general
true that the image of subanalytic sets under analytic maps is again subanalytic. This is
another reason why we need the restriction to compact $X$ in the analytic case.
\smallspace
To mimic the arguments of the semi--algebraic case, we will also use the following 
properties: For subanalytic sets $A,B$, also $A\times B$, $A\cap B$, $A\setminus B$, and the closure
$\bar A$ are subanalytic. (\cite{Shiota1997}, property I.2.1.1, p.41)
Also subanalytic sets have a stratification (\cite{Shiota1997}, Lemma I.2.2, p. 44)
that allows to assign a dimension to subanalytic sets, and this dimension again has the property
$\dim(\bar A) = \dim(A)$ (property I.2.1.2, p.41), and $\dim(f(A))\leq \dim(A)$ for bounded
subanalytic sets $A$ and subanalytic maps $f$ (\cite{Valette2023}, chapter 2.3).
This gives for bounded subanalytic $A$ and subanalytic $f$ the formula
\[
   \dim(\overline{f(A)}) \leq \dim(A)
\]
and allows transferring the proof of the semi--algebraic case also to the 
subanalytic case.

%---------------------------------------------------------
\subsection{Variable number of points \texorpdfstring{$\leq k$}{up to k}}
\label{subsec:LessThanKPointsFiniteFeatures}
%--------------------------------------------------------
\noindent
Special PPSDs are 
\[
    |S_1| = \sum_{\vv r \in S_1} 1\ ,\quad ... \ ,\quad  |S_c| = \sum_{\vv r \in S_c} 1
\]
There are only finitely many value combinations that these functions
can take for configurations of $\leq k$ points.
(In fact, their number is
$\binom{k+c}{c}$, although the concrete number is not important for the following arguments).
\smallspace
To treat configurations of less than $k$ points together with configurations 
of $k$ points, let us adopt the convention that the points are enumerated by
first writing down the $|S_1|$ points of color 1, then the $|S_2|$ points of color 2,
..., then the $|S_c|$ points of color $S_c$, and then add $k-|S_1|-...-|S_c|$ points
at the origin $(0,0,0)$.
\smallspace
Since $X$ is compact, each feature $f$ has a bounded image. So we can encode the finite
amount of information in the $|S_1|,...,|S_c|$ together with $f$
in one feature
\[
   f + C\cdot |S_1| + C^2\cdot |S_2| + ... +  C^c \cdot |S_c|
\]
for $C$ large enough.

%---------------------------------------------------------
\section{Proof of Theorem \ref{theorem:AlgCompleteness}}
\label{sec:ProofAlgCompleteness}
%---------------------------------------------------------
\noindent
{\bf Part 1:} ``All scalar PPSDs are some linear combination of fields in this schema.''\\
First row: In case i, this is the well known statement that harmonic functions are 
linear combinations of spherical harmonics, and that polynomials of degree $n$ are
$|\vv r|^2\cdot$ polynomials of degree $n-2$ $\oplus$  harmonic polynomials of degree $n$,
see e.g. \cite{kosmann2009}, chapter 7, Prop. 2.7.\\ In case ii, this follows
from the definitions and the statement for $X=\BS^2$.\\
Following rows: By definition, the PPSDs of order $d$ are polynomials of order $d$ in the fundamental features, so as 
a vector space they are spanned by the products of PPSDs of order $d-1$ and PPSDs of order 1.
\\
If $f, g$ are scalar components of the vector values functions $F,G$, then the product $f\cdot g$
appears as scalar component in $F\otimes G$.
\smallspace
{\bf Part 2:} ``Any \emph{$SO(3)$--covariant} PPSD with values in $\irred{l}$ is a linear combination of vectors in the $l$--th column.''
\smallspace
In column $l$ we give the vectors of $\irred{l}$ as $2l+1$ entries in the standard basis $\vv e_{-l},...,\vv e_l$
of $\irred{l}$ such that the entries correspond to the standard real spherical harmonics. 
Each such vector of PPSDs corresponds to a covariant function from the configurations to $\calH$, or equivalently,
a covariant map from $\calH$ to the vector space $\calF$ of PPSDs, so all functions occurring in any field 
in the $l$--th column is in the isotypic component of $\irred{l}$ in $\calF$.\\
It remains to show that these functions only fit together in one way, i.e. that any given collection on $2l+1$
PPSDs, of which each individually can be written as linear combination of functions occurring in fields of the 
$l$--th column, can also as a whole vector be written as linear combination of vectors.
\smallspace
The vector space and $G$--representation of all functions in the $l$--th column and row $d$ is usually infinite dimensional,
but we can write it as a union of finite dimensional pieces: If we limit ourselves to fundamental features of $l\leq l_{max}$
and a finite subspace $\calF_{fin}\subseteq \BR + \calR$ of radial functions, then we have only finitely
many fundamental features and hence also only finitely many products of degree $\leq d$. Therefore in the following
we can use the theory of finite dimensional representations even when talking about infinite dimensional representations.
\smallspace
Consider the $G$--representation on the vector space $\calF_{l,d}$ which is the isotypic component of $\irred{l}$
of all PPSDs of degree $d$ (i.e. occurring in the $d$--th row). We can write it (if necessary, restrict to finite 
dimensional subspaces as explained above) as $\irred{l}\oplus...\oplus\irred{l}$,
see \ref{subsec:IsotypicDecomposition}.
Now any $G$--equivariant function on configurations gives a covariant map
$\irred{l}\to\irred{l}\oplus...\oplus\irred{l}$, so by the Lemma of Schur it must have the form
$\vv v \mapsto \alpha_1 \vv v \oplus ... \oplus \alpha_m \vv v$.
Given our standard base $\vv e_{-l},...,\vv e_l$ this means that the $m\cdot (2l+1)$--dim vector space 
$\irred{l}\oplus...\oplus\irred{l}$ can be written as the direct sum of
$2l+1$ pieces $\vv e_i \cdot \BR$ (only as vector space, not as representation), and hence any function
in one of the $2l+1$ pieces can only be written as linear combination of functions in the same piece.
Furthermore, since also the our given new function must be of the form $\vv v \mapsto \alpha_1 \vv v \oplus ... \oplus \alpha_m \vv v$, the linear combination must be the same for all components, i.e. any covariant PPSD with values in 
$\irred{l}$ must be a linear combination of the covariant PPSDs with values in $\irred{l}$ that occur in
our schema.
\smallspace
{\bf Part 3:} ``Any \emph{$O(3)$--covariant} PPSD with values in $\irred{l}$ is a linear combination of vectors in the $l$--th column of the appropriate parity.''
\smallspace
In our schema all functions are even or odd, so we only need to pick the ones with the right parity.

%---------------------------------------------------------
\section{Proof of Theorem \ref{theorem:InvariantTheory}}
\label{sec:ProofInvariantTheory}
%---------------------------------------------------------
\noindent
Lemma \ref{lemma:NFDecomposition_unique} (uniqueness of the normal form) also describes the
operation of $SO(3)$ on PPSDs $\calD$ in terms of its operation on polynomials:
Let $\calD$ be written in normal form as a sum of terms $\calE_{m_1,...,m_c;e_1,...,e_c}$
and which contain the polynomials $P_{m_1,...,m_c;e_1,...,e_c}$ as in \eqref{eq:NormalForm}.
Applying a $g\in SO(3)$ to $\calD$ can be done by applying them to the polynomials $P_{m_1,...,m_c;e_1,...,e_c}$, and this gives a representation of $\calD$ which is again
in normal form. Since the normal form is unique, this gives the unique way that $g$ operates on PPSDs given in normal form.
In particular, if a PPSD is invariant under $SO(3)$, the polynomials $P_{m_1,...,m_c;e_1,...,e_c}$ must be invariant under $SO(3)$.
\smallspace
We now can apply the First Fundamental Theorem of Invariant Theory for the group $SO(3)$, see
e.g. \cite{procesi2006lie}, section 11.2.1, p. 390: This polynomial 
$P_{m_1,...,m_c;e_1,...,e_c}$ in the vectors
$\vv r^{(1)},\vv r^{(2)},...,\vv r^{(m)}\in\BR^3$ (with $m=m_1+...+m_c$) 
can be written as a polynomial in expressions of the 
form $\langle \vv r^{(i)}, \vv r^{(j)}\rangle$ and $\det(\vv r^{(i)},\vv r^{(j)}, \vv r^{(k)})$.
To prove Theorem \ref{theorem:InvariantTheory}, it is then enough to show that all parts
corresponding to the monomials of this polynomial can be written as a contraction of a tensor product of moment tensors and (optionally) $\eps_{ijk}$.
\smallspace
We rewrite this in tensor notation, but we don't use the Einstein summation convention which
may obscure the change of summation order that will be used here.
For a vector $\vv v\in \BR^3$ we write $t_i(\vv v)$ for the 
$i$--th component of $\vv v$, and more general we write $t_i(\vv v)$, 
$t_{ij}(\vv v)$, $t_{ijk}(\vv v)$etc. for the tensors 
$v$, $v^{\otimes 2}$, $v^{\otimes 3}$ etc., i.e. $t_{ij}(\vv v) := t_i(\vv v) \cdot t_j(\vv v)$ etc.
For a variable $\vv r$ summing over the points in $S_{\gamma}$ of some color $\gamma$ 
we then get the moment tensors as
\[
     T_{i_1,...,i_k}(\gamma) := \sum_{\vv r \in S_\gamma}  t_{i_1,...,i_k}(\vv r)
\]
This gives now the following recipe to rewrite a sum over points of products of scalar products and determinants as a contraction of a products of moment tensors and (optionally) Levi--Civita symbols:
\begin{itemize}
    \item Rewrite scalar products $\langle \vv r^{(a)}, \vv r^{(b)}\rangle$ as
        $\sum_i t_i(r^{(a)}) t_i(r^{(b)})$ (using new indices for every new factor).
    \item Rewrite determinants $\det(\vv r^{(a)}, \vv r^{(b)}, \vv r^{(c)})$ as
        $\sum_{i,j,k} \eps_{ijk} t_i(r^{(a)}) t_j(r^{(b)}) t_k(r^{(c)})$ (using new indices for every new factor).
    \item Move the summations over coordinate indices $\{1,2,3\}$ to the left of the the summations
          over the points.
    \item Replace the product of $k$ vector components $t_{i_j}(\vv r)$ involving the same
          vector variable $\vv r$ by one tensor expression $t_{i_1,...,i_k}(\vv r)$.
    \item Replace the sum over one variable $\sum_{\vv r\in S_\gamma}$ and the corresponding
          expression $t_{i,...}(\vv r)$ by the moment tensor $T_{i,...}(\gamma)$
\end{itemize}
After these transformations, we have the rewritten the PPSD $\calD$ as a linear combination
of contractions of moment tensors and Levi--Civita symbols, as required by Theorem
\ref{theorem:InvariantTheory}.
\smallspace
We illustrate this recipe in a generic example:
\begin{eqnarray*}
  \lefteqn{\calD(S_1,S_2) }\\
    &:=& \sum_{\vv r^{(1)}\in S_1} \sum_{\vv r^{(2)}\in S_1} \sum_{\vv r^{(3)}\in S_2} \\
    & &  \ \ \ \langle \vv r^{(1)}, \vv r^{(1)}\rangle
               \langle \vv r^{(1)}, \vv r^{(2)}\rangle
               \det(\vv r^{(1)}, \vv r^{(2)}, \vv r^{(3)}) \nonumber\\
   &=&  \sum_{\vv r^{(1)}\in S_1} \sum_{\vv r^{(2)}\in S_1} \sum_{\vv r^{(3)}\in S_2}  \\
    & &  \ \ \left(\sum_i t_i(\vv r^{(1)}) t_i(\vv r^{(1)})\right)
         \left(\sum_j t_j(\vv r^{(1)}) t_j(\vv r^{(2)})\right)\\
    & &  \ \  \left(\sum_{k,l,m}  \eps_{klm} t_k(\vv r^{(1)}) t_l(\vv r^{(2)}) t_m(\vv r^{(3)})\right)\\
   &=&  \sum_{\vv r^{(1)}\in S_1} \sum_{\vv r^{(2)}\in S_1} \sum_{\vv r^{(3)}\in S_2}   \\
    & & \ \  \sum_{i,j,k,l,m} t_{iijk}(\vv r^{(1)}) t_{jl}(\vv r^{(2)}) t_m(\vv r^{(3)})
          \eps_{klm}\\
   &=&  \sum_{i,j,k,l,m} \sum_{\vv r^{(1)}\in S_1} \sum_{\vv r^{(2)}\in S_1} \sum_{\vv r^{(3)}\in S_2}\\
    & &\ \ t_{iijk}(\vv r^{(1)}) t_{jl}(\vv r^{(2)}) t_m(\vv r^{(3)})\eps_{klm}\\
   &=&  \sum_{i,j,k,l,m} \left(\sum_{\vv r^{(1)}\in S_1} t_{iijk}(\vv r^{(1)})\right)\\
    & & \ \ \left(\sum_{\vv r^{(2)}\in S_1} t_{jl}(\vv r^{(2)})\right)
        \left(\sum_{\vv r^{(3)}\in S_2} t_{m}(\vv r^{(3)})\right) \\
    &=&  \sum_{i,j,k,l,m}     T_{iijk}(\gamma_1) T_{jl}(\gamma_1) T_m(\gamma_2) \eps_{klm}    
\end{eqnarray*}

%---------------------------------------------------------
\section{Matrix moments examples}
\label{sec:MatrixMomentEx}
%---------------------------------------------------------

%---------------------------------------------------------
\subsection{\texorpdfstring{$3\times 3$}{3x3} matrices}
%---------------------------------------------------------

\noindent
The simplest example of matrix moments occurs for $a=b=1$, i.e. $3\times 3$ matrices:
From
\[
   \irred{1} \otimes \irred{1} \simeq \irred{0} \oplus \irred{1} \oplus \irred{2}
\]
we see that the 9--dimensional space of $3\times 3$--matrices decomposes into
irreducible subrepresentations of dimensions 1,3,5, these are given by 
\begin{itemize}
  \item Multiples of the identity matrix,
  \item antisymmetric matrices,
  \item symmetric matrices of trace 0.
\end{itemize}
\smallspace
The corresponding moment matrices $M_l:=M_{1,1,l}$ are
\[ 
    M_0 =  \left(
    \begin{array}{ccc}
     1 & 0 & 0 \\
     0 & 1 & 0 \\
     0 & 0 & 1  \\
    \end{array}
    \right)
\]
and
\[
     M_1 =  \left(
    \begin{array}{ccc}
     0 & -z & y \\
     z & 0 & -x \\
     -y & x & 0  \\
    \end{array}
    \right)
\]
and
\[
     M_2 = \left(
    \begin{array}{ccc}
     \frac{2 x^2-y^2-z^2}3 & xy & xz \\
     xy & \frac{2 y^2-x^2-z^2}3  & yz \\
     xz & yz & \frac{2 z^2-x^2-y^2}3  \\
    \end{array}
    \right)
\]
We observe that  $M_1^2 = M_2 - \frac23 r^2 Id$, which corresponds to the
fact that $M_1\cdot M_1$ encodes both the scalar product $r^2 = \vv r^T \cdot \vv r$
of 
\[
   \vv r := \left(\begin{array}{c} x\\ y\\ z\end{array}\right)
\]
with itself and the outer product $\vv r\cdot \vv r^T$, while the vector product
$\vv r\times \vv r=0$. Readers may want to convince themselves that the matrix product
$M_1 \cdot M'_1$ corresponding to two different vectors $\vv r, \vv r'$ indeed 
encodes scalar product, vector product, and outer product of $\vv r$ and $\vv r'$.
(To get the $\irred{2}$--component of the product, we have to subtract from
the outer product the appropriate multiple of the identity to get a traceless symmetric
matrix.)
%---------------------------------------------------------
\subsection{\texorpdfstring{$5\times 5$}{5x5} matrices}
%---------------------------------------------------------
The $5\times 5$ moment matrices $M_l:=M_{2,2,l}$ with
$0\leq l \leq 2$ are 
\[
 M_0 =  \left(
\begin{array}{ccccc}
 1 & 0 & 0 & 0 & 0 \\
 0 & 1 & 0 & 0 & 0 \\
 0 & 0 & 1 & 0 & 0 \\
 0 & 0 & 0 & 1 & 0 \\
 0 & 0 & 0 & 0 & 1 \\
\end{array}
\right)
\ ,\qquad
M_1 = 
\left(
\begin{array}{ccccc}
 0 & 2 x & z & -y & 0 \\
 -2 x & 0 & y & z & 0 \\
 -z & -y & 0 & x & -\sqrt{3} y \\
 y & -z & -x & 0 & \sqrt{3} z \\
 0 & 0 & \sqrt{3} y & -\sqrt{3} z & 0 \\
\end{array}
\right)
\]
\begin{equation}
{\small
   M_2 = \left(
\begin{array}{ccccc}
 -2 x^2+y^2+z^2 & 0 & 3 x y & 3 x z & -2 \sqrt{3} y z \\
 0 & -2 x^2+y^2+z^2 & -3 x z & 3 x y & \sqrt{3} \left(z^2-y^2\right) \\
 3 x y & -3 x z & x^2-2 y^2+z^2 & 3 y z & \sqrt{3} x z \\
 3 x z & 3 x y & 3 y z & x^2+y^2-2 z^2 & \sqrt{3} x y \\
 -2 \sqrt{3} y z & \sqrt{3} \left(z^2-y^2\right) & \sqrt{3} x z & \sqrt{3} x y & 2 x^2-y^2-z^2 \\
\end{array}
\right)
}
\end{equation}
\smallspace
The matrix $M_2$ can also be given as
\[
   M_2 = M_1^2 + 2 r^2 \cdot Id\qquad \hbox{with}\quad r^2 := x^2+y^2+z^2
\]
\smallspace
The $M_i$ are antisymmetric for odd $i$, 
we write them as $D_i - D_i^T$ with upper triangular matrices $D_i$.
\smallspace
For even $i$, the $M_i$ are symmetric, we write them as $D_i + diag(d_i) + D_i^T$
with upper triangular matrices $D_i$ and diagonal matrices with entries $d_i$.\\
Then $D_2, d_2$ are given as
\[ \left(
\begin{array}{ccccc}
 0 & 0 & 3 x y & 3 x z & -2 \sqrt{3} y z \\
 0 & 0 & -3 x z & 3 x y & \sqrt{3} \left(z^2-y^2\right) \\
 0 & 0 & 0 & 3 y z & \sqrt{3} x z \\
 0 & 0 & 0 & 0 & \sqrt{3} x y \\
 0 & 0 & 0 & 0 & 0 \\
\end{array}
\right), \ 
\left(
\begin{array}{c}
 r^2-3 x^2 \\
 r^2-3 x^2 \\
 r^2-3 y^2 \\
 r^2-3 z^2 \\
 3 x^2-r^2 \\
\end{array}
\right)
\]
The matrices $D_3, D_4$ are
{\small
\[
D_3 = 
\left(
\begin{array}{ccccc}
 0 & 3 r^2 x-5 x^3 & 10 z^3-6 r^2 z & 6 x^2 y-4 y^3+6 y z^2 & 5 \sqrt{3} \left(x z^2-x y^2\right) \\
 0 & 0 & -6 x^2 y-y^3+9 y z^2 & -6 x^2 z+9 y^2 z-z^3 & 10 \sqrt{3} x y z \\
 0 & 0 & 0 & 10 x^3-6 r^2 x & \sqrt{3} \left(r^2 y-5 x^2 y\right) \\
 0 & 0 & 0 & 0 & -\sqrt{3} \left(r^2 z-5 x^2 z\right) \\
 0 & 0 & 0 & 0 & 0 \\
\end{array}
\right)
\]
}
\smallspace
and 
\smallspace
{\scriptsize
\[ D_4 = 
\left(
\begin{array}{ccccc}
 0 & 70 y z \left(y^2-z^2\right) & -20 x y \left(r^2-7 z^2\right) & -20 x z \left(r^2-7 y^2\right) & -10 \sqrt{3} y z \left(r^2-7 x^2\right) \\
 0 & 0 & 10 x z \left(2 x^2+9 y^2-5 z^2\right) & -10 x y \left(2 x^2-5 y^2+9 z^2\right) & 5 \sqrt{3} \left(6 x^2 \left(y^2-z^2\right)-y^4+z^4\right) \\
 0 & 0 & 0 & -20 y z \left(r^2-7 x^2\right) & 10 \sqrt{3} x z \left(7 x^2-3 r^2\right) \\
 0 & 0 & 0 & 0 & 10 \sqrt{3} x y \left(7 x^2-3 r^2\right) \\
 0 & 0 & 0 & 0 & 0 \\
\end{array}
\right)
\]
}
\smallspace
and the diagonal entries of $M_4$ are
\smallspace
{\small
\[
d_4 = \left(
\begin{array}{c}
 4 x^4-12 x^2 y^2-12 x^2 z^2-16 y^4+108 y^2 z^2-16 z^4 \\
 4 x^4-12 x^2 y^2-12 x^2 z^2+19 y^4-102 y^2 z^2+19 z^4 \\
 -16 x^4-12 x^2 y^2+108 x^2 z^2+4 y^4-12 y^2 z^2-16 z^4 \\
 -16 x^4+108 x^2 y^2-12 x^2 z^2-16 y^4-12 y^2 z^2+4 z^4 \\
 24 x^4-72 x^2 y^2-72 x^2 z^2+9 y^4+18 y^2 z^2+9 z^4 \\
\end{array}
\right)
\]
}
%--------------------------------------------------------------------------------------
\section{Proof of Theorem \refTheoremMatMultComplete}
\label{sec:ProofTheoremMatMult}

%--------------------------------------------------------------------------------------
%------------------------------------------------------
\subsection{The representation on matrices}
%-------------------------------------------------------
\noindent
The Clebsch--Gordan relation gives the isomorphism
\begin{equation} \label{eq:Clebsch_Gordan2}
  \calH^{(|a-b|)}\oplus\calH^{(|a-b|+1)}\oplus ... \oplus\calH^{(a+b)} \simeq \calH^{(a)} \otimes \calH^{(b)},
\end{equation}
and since $\calH^{(a)}$ is also isomorphic to its dual, this can be identified with linear maps
${\calH^{(a)}}^* \otimes {\calH^{(b)}} = Lin(\calH^{(a)}, \calH^{(b)})$ (see section \ref{subsec:DualTensorHom}).
If the elements of $\calH^{(a)}$ and $\calH^{(b)}$ are expressed numerically as $(2a+1)$--dimensional
vectors $\vv w$ and $(2b+1)$--dimensional vectors $\vv v$, this identifies $\vv v \otimes \vv w$ with the $(2b+1)\times(2a+1)$
matrix $\vv v \cdot \vv w^T$.
\smallspace
In fact, \ref{eq:Clebsch_Gordan2} is also valid for $a,b\in \frac12+\BN$ as representations of
the double cover of $SO(3)$, which can be identified with $SU(2)$. While a 
``rotation by $2\pi$'' in the double cover of $SO(3)$ operates as $-1$ on $\irred{a}$ and $\irred{b}$,
it then operates as identity on $\irred{a}\otimes\irred{b}$, so $\irred{a}\otimes\irred{b}$ is again
a representation of $SO(3)$, which we can interpret as $(2a+1)\times(2b+1)$--matrices, which are now
matrices with {\em even} side lengths. However, in practice this may be less attractive since it would
require computations with complex numbers, so we are not exploring this further in this paper.
\smallspace
Note that the ``matrix of matrices'' computations are just using reducible representations $V, W$
to form matrices $Lin(V,W)$: The reducible representations are written as direct sums
\[
   V = \bigoplus_a \irred{a}, \qquad W = \bigoplus_b \irred{b}
\]
and this gives the partition of the total side length of the big matrices into the side lengths $(2a+1)$
or $(2b+1)$ of the smaller matrices.
\smallspace
We can also generalize this from matrices to tensors of arbitrary order: Repeated application of the Clebsch--Gordan isomorphism gives a decomposition into irreducible representations of any tensor products (of irreducible representations,
or of arbitrary representations that come with a decomposition into irreducible ones).\\
We will see in the next section that any bilinear product between representations can be built up from Clebsch--Gordan
representations, so this applies in particular to matrix products and more generally to tensor contractions. So we
could also generalize the matrix based features of appendix \ref{sec:Algorithm} to tensor based features.

%------------------------------------------------------
\subsection{Using Schur's lemma}
%-------------------------------------------------------
\noindent
As explained in Section \ref{subsec:SchursLemma}, Schur's lemma says that any morphism between irreducible
representations is 0 if the representations are not isomorphic, and the only morphisms
from an irreducible representation to itself are scalars for odd dimensional representation over $\BR$.
(For our $G$, all irreducible representations are odd dimensional.)
\smallspace
Let $\irred{a}\subseteq U$ and $\irred{b}\subseteq V$ be some irreducible components of $G$--representations
on $U$ and $V$, let $\circ: U\times V \to W$ be a bilinear covariant map, and $W\to \irred{c}$ be the 
orthogonal projection to an irreducible component $\irred{c}$ of $W$. Then we can restrict this bilinear map
to the subspaces $\irred{a}\subseteq U$ and $\irred{b}\subseteq V$ and get a covariant map
\begin{equation}
    \irred{a} \otimes \irred{b} \xhookrightarrow{\ \ } U \otimes V \xrightarrow{\ \circ\ } W \to \irred{c}.
    \label{eq:CGinProduct}
\end{equation}
Since 
\[
  \irred{a} \otimes \irred{b} \simeq \irred{|a-b|}\oplus\irred{|a-b|+1}\oplus ... \oplus\irred{a+b},
\]
formula \eqref{eq:isotypic3} says that $Hom_G(\irred{a}\otimes\irred{b}, \irred{c})$ is one dimensional if
$|a-b|\leq c \leq a+b$ and zero dimensional else. 
Since the Clebsch--Gordan map is nonzero, the map \eqref{eq:CGinProduct}
can only be non--zero if $|a-b|\leq c \leq a+b$ and then it must be a composition of
the Clebsch--Gordan product and the multiplication by a scalar $\lambda$:
\begin{equation}
  \irred{a} \otimes \irred{b} \xrightarrow{\hbox{\footnotesize Clebsch--Gordan}} \irred{c} 
  \xrightarrow{\ \cdot\lambda\ } \irred{c}.
  \label{eq:LambdaInProduct}
\end{equation}
Repeating this argument for all other $\irred{a}, \irred{b}, \irred{c}$ in a decomposition of $U,V,W$ into 
irreducible representations, we see that any bilinear covariant map $U\times V\to W$ must be a linear combination 
of Clebsch--Gordan maps applied to irreducible components of $U$, $V$, and $W$.\\
In particular, this applies to the matrix product (or, more generally, also any contraction of two tensors).\\
Since bilinear covariant maps are linear combinations of Clebsch--Gordan operations, we can reasonably hope that
we may also be able to go the other way round and recover the Clebsch--Gordan operations from enough 
bilinear covariant maps.
\smallspace
To show that the resulting features satisfy an
algebraic completeness theorem, we have to show that every sequence of Clebsch--Gordan operations
really appears with a non--zero coefficient in our procedure using a particular bilinear
covariant operation.\\
In the following section, we will prove that for products of two matrices, the $\lambda$ in
\eqref{eq:LambdaInProduct} is always nonzero, and then we will use that in the following
section to show that we can indeed write every sequence of Clebsch--Gordon operations
as a sequence of matrix multiplications (up to a non--zero factor).

%-------------------------------------------------------
\subsection{Using complex spherical harmonics}
\label{subsec:ComplexSphericalHarmonics}
%-------------------------------------------------------
\noindent
We introduced the operation of $G$ on the real vector spaces $\irred{l}$, which assigns to each
$g\in G$ a $(2l+1)\times(2l+1)$--matrix $\rho(g)$ with real entries. But we can as well interpret
the $\rho(g)$ as matrices of complex numbers, i.e. as endomorphisms of a complex vector space
$\irred{l}\otimes_\BR \BC$. The real spherical harmonic functions are a basis of $\irred{l}$
both as $\BR$--vector space and of $\irred{l}\otimes_\BR \BC$ as $\BC$--vector space.
Another basis of $\irred{l}\otimes_\BR \BC$ is given by the complex spherical harmonic 
functions. While for numerical computations the real vector space $\irred{l}$ is usually preferable,
the representation on the complex vector space $\irred{l}\otimes_\BR \BC$ is easier to use
for theoretical purposes.\\
In this section we will use the complex version, which we simply denote again by $\irred{l}$ in the
rest of this section.
\smallspace
We use the basis of the Lie algebra $\lalgso(3)$:
\begin{equation}
   l_x = \begin{pmatrix}
      0 &  0 &  0 \\
      0 &  0 & -1 \\
      0 &  1 &  0
   \end{pmatrix}
   \qquad
   l_y = \begin{pmatrix}
      0 &  0 & 1 \\
      0 &  0 & 0 \\
     -1 &  0 & 0
   \end{pmatrix}
   \qquad
   l_z = \begin{pmatrix}
      0 & -1 & 0 \\
      1 &  0 & 0 \\
      0 &  0 & 0
   \end{pmatrix}
   \label{eq:LieGenerators}
\end{equation}
and $L_x := i\cdot l_x, L_y := i\cdot l_y, L_z := i\cdot l_z$. 
The complex spherical harmonics are eigenvectors
of $L_z$ with eigenvalues $-l,...,l-1,l$. 
% Using this basis we can also give a 
% $O(l^3)$ algorithm to convert an element of $\irred{0}\oplus\irred{1}\oplus...\oplus{2l}$
% to a $(2l+1)\times (2l+1)$--matrix and back.
\smallspace
We will use the operation of $L_z$ on the complex version of $\irred{l}$, but once we know that matrix 
multiplication is related to the Clebsch--Gordan operation by a nonzero factor, we are free to go back to
computing in the real vector space spanned by the real spherical harmonics.
\smallspace
On $Lin(\irred{l_1}, \irred{l_2}) \times Lin(\irred{l_2}, \irred{l_3})$ we have 
the concatenation of linear maps (i.e. matrix product), 
in terms of $\irred{l_2} \otimes {\irred{l_1}}^*$ and $\irred{l_3} \otimes {\irred{l_2}}^*$
this product is given by 
\[
      (\vv v_3 \otimes \vv v_2^*) \cdot (\vv w_2 \otimes \vv w_1^*) =
    \langle \vv v_2^*, \vv w_2 \rangle_{alg} \cdot (\vv v_3 \otimes \vv w_1^*)
\]
We now choose a (complex) orthonormal basis $\vv b_{-l},...,\vv b_{l-1}, \vv b_l$ 
of $\irred{l}$ consisting of eigenvectors of $L_z$, e.g. the complex spherical harmonics.
This gives the dual basis $\vv b_{m}^*$ of ${\irred{l}}^*$ with 
\[
   \langle \vv b_{i}^* , \vv b_{j} \rangle_{alg} = \delta_{ij}
\]
and a basis $\vv b_{m_1} \otimes \vv b_{m_2}^*$ of $\irred{l_1}\otimes {\irred{l_2}}^*$
where $-l_1\leq m_1 \leq l1$ and $-l_2 \leq m_2 \leq l_2$.
\smallspace
The rotation $R_z(\alpha)$ around the $z$--axis with angle $\alpha$ can be given on 
$\irred{l}$ and ${\irred{l}}^*$ by its action on the basis
\[
   \rho(R_z(\alpha)) \vv b_m = e^{i\cdot m\cdot \alpha}\vv b_m, \qquad 
   \rho^*(R_z(\alpha)) \vv b_m = e^{-i\cdot m\cdot \alpha} \vv b_m
\]
and hence on $\irred{l_2}\otimes{\irred{l_1}}^*$ as
\[
   (\rho_2\otimes\rho_1^*)(R_z(\alpha)) (\vv b_{m_2} \otimes  \vv b_{m_1}^*)
   = e^{i\cdot (m_2 - m_1)\cdot \alpha} (\vv b_{m_2} \otimes  \vv b_{m_1}^*)
\]
For an eigenvector of $L_z$ we call the eigenvalue the \emph{weight} of the 
eigenvector. So the weight of $b_m$ is $m$, the weight of $b_m^*$ is $-m$, and the 
weight of $(\vv b_{m_2} \otimes  \vv b_{m_1}^*)$ is $m_2-m_1$.
\smallspace
The matrix multiplication is given on the basis vectors as
\begin{eqnarray}
    (\vv b_{m_3} \otimes  \vv b_{m'_2}^*) \cdot (\vv b_{m_2} \otimes  \vv b_{m_1}^*)
    &=& \langle \vv b_{m'_2}^*, \vv b_{m_2}\rangle_{alg}  (\vv b_{m_3} \otimes \vv b_{m_1}^*)
    \nonumber\\
    &=& \begin{cases}
       \vv b_{m_3} \otimes \vv b_{m_1}^* & \hbox{if}\ m'_2 = m_2 \\
       0 & \hbox{else}
    \end{cases}
    \label{eq:matrix_mult_tensor_prod}
\end{eqnarray}
In particular, the weights add up when we multiply these basis vectors of 
$Lin(\irred{l}, \irred{l'})$.\\
As matrices with respect to the bases $\vv b_{-l},...,\vv b_l$ of $\irred{l}$ and
$\vv b_{-l'},...,\vv b_{l'}$ of $\irred{l'}$this
$\vv b_m \otimes \vv b_{m'}^*$ is the matrix that has a 1 at row $m$ and column $m'$ and is 0
elsewhere, its weight is how far above the diagonal the 1 entry is.\\
In the following, we will use the abbreviation 
\[
    \vv b_{m, m'} := \vv b_m \otimes \vv b_{m'}^*.
\]
for these basis elements of $Lin(\irred{l},\irred{l'})$.
\smallspace
The irreducible representations $\irred{a}$ inside $\irred{l_2}\otimes{\irred{l_1}}^*$
have a basis of $L_z$ eigenvectors of weights $-a,...,a$ and in particular
have a highest weight vector $\vv v_a$ (unique up to scalars) which is characterized by
$L_+ \vv v_a=0$ for $L_+ = L_x + i L_y$. 
Since it is of weight $a$, it must be a linear combination of $\vv b_{m, m'}$
with $m-m' = a$. 
The action of an element $L\in \lalgso(3)$ on 
$\vv b_{m,m'}\in\irred{l_2}\otimes{\irred{l_1}}^*$ is given by
\[ 
    (\rho_2 \otimes \rho_1^*) (L) (\vv b_m \otimes \vv b_{m'}^*) 
    = (\rho_2(L) \vv b_m)\otimes \vv b_{m'}^* 
      + \vv b_m \otimes (\rho_1^*(L) \vv b_{m'}^*)
\]
so in particular
\begin{eqnarray*}
    (\rho_2 \otimes \rho_1^*) (L_+) (\vv b_{m,m'})
    &=& (\rho_2(L_+) \vv b_m) \otimes \vv b_{m'}^* 
      + \vv b_m \otimes (\rho_1^*(L_+) \vv b_{m'}^*) \\
    &=& c_1 \cdot \vv b_{m+1, m'} \, + \, c_2 \cdot \vv b_{m,m'+1}
\end{eqnarray*}
for some nonzero constants $c_1,c_2\in \BC$.\\
As a consequence, the highest weight vector $\vv v_a$ of the copy of $\irred{a}$ in 
$\irred{l_2}\otimes{\irred{l_1}}^*$ must be a linear combination of $\vv b_{p} \otimes \vv b_q$
with $p-q = a$ in which every term has a nonzero coefficient, since at every position
$p',q'$ with $p'-q'=a+1$ the contributions of $p'-1,q'$ and $p',q'-1$ have to cancel.
\smallspace
So write for any $a=0,1,2,...,2l$ the highest weight vector $\vv v_a$ as
\[
    \vv v_a = c_{a,l}\cdot\vv b_{l,l-a} + c_{a,l-1} \cdot \vv b_{l-1,l-a-1} + ... 
             + c_{a,a-l} \cdot \vv b_{a-l,-l}
\]
with all $c_{a,j}\in \BC$ nonzero.\\
Then for $0\leq a,b, a+b \leq 2l$
the matrix product of highest weight vectors of the copies of 
$\irred{a}$ and $\irred{b}$ in  $\irred{l_2}\otimes{\irred{l_1}}^*$ 
must have the form 
\[
   \vv v_a \cdot \vv v_b \quad = \quad d_{a+b,l}\cdot \vv b_{l,l-a-b} 
   \quad +\quad  d_{a+b,l-1}\cdot \vv b_{l-1,l-a-b-1}\quad
     +\quad  ...\quad  +\quad d_{a+b,a+b-l} \cdot \vv b_{a+b-l,-l}
\]
and from \eqref{eq:matrix_mult_tensor_prod} we get
\begin{eqnarray*}
    d_{a+b,l} &=& c_{a,l} \cdot c_{b,l} \\
    ...\\
    d_{a+b, a+b-l} &=& c_{a,a-l} \cdot c_{b,b-l}
\end{eqnarray*}
so also all resulting coefficients of the matrix product are nonzero, so in particular the
product cannot be 0.
%-------------------------------------------------------
\subsection{Proof of Theorem \refTheoremMatMultComplete}
\label{subsec:FinishProofMatMultComplete}
%-------------------------------------------------------
\noindent
The previous computation was the main argument needed to prove Theorem 
\refTheoremMatMultComplete: We now know that a matrix multiplication
computes all Clebsch--Gordan operations (up to non--zero scalar factors).
This would be enough to use matrix multiplication for Clebsch--Gordan operations
if we always used one matrix for one input, and would extract one output from 
the result. It remains to show that starting from the matrix moments
\begin{equation}
   M_{a,b,l}(\colori) := \iota_{a,b,l} \sum_{\vv r \in S_\colori} Y_l(\vv r)
   \label{eq:MatrixMoments2}
\end{equation}
and computing matrix products of several factors
(without extracting the irreducible components after each matrix multiplication)
gives enough covariants to span the vector space of all covariant functions 
with values in $\irred{l}$ for all $l$.
\smallspace
However, for this we just need to start with a vector and keep multiplying matrices from the left, 
the result is again a vector, so there is nothing to extract. To make sure this works, we write it down 
in detail:
\smallspace
We prove this by induction on the number $d$.
For $d=1$ there is nothing to prove: The $M_{0,l,l}(\gamma)$ are the fundamental features.
So assume we have $d>1$ and we already get all concatenation of Clebsch--Gordan operations for $d-1$.\\
Assuming the previous step of the Clebsch--Gordan operations resulted in a vector in $\irred{a}$, and
now we need to simulate the operation $\irred{l}\otimes\irred{a}\arrow\irred{b}$ for some $|a-l|\leq b \leq a+l$.
These inequalities are equivalent to $|a-b|\leq l \leq a+b$ (both are ways to express that $a,b,l$ can appear
as sides of a triangle), so we can use $M_{a,b,l}$ and multiply this matrix with the previous result vector in 
$\irred a$ to obtain the result vector in $\irred b$.
\smallspace
As in Theorem \ref{theorem:AlgCompleteness}, all results are either even or odd covariants, so to
get $O(3)$--covariants, we only need to include thow with the right parity in the linear combination.
%--------------------------------------------------------------------------------------
\section{JAX implementation of matrix multiplication}
\label{sec:JaxMatrixMultiplication}
%--------------------------------------------------------------------------------------
\noindent
We implemented the Clebsch--Gordan operation
\[
   (\irred{0} \oplus ... \oplus \irred{L}) \otimes
   (\irred{0} \oplus ... \oplus \irred{L})
   \arrow  \irred{0} \oplus ... \oplus \irred{L}
\]
in the obvious way as a general product in an $(2L+1)^2$--dimensional $\BR$--algebra with the
Clebsch--Gordan coefficients as multiplication table, and by changing
the multiplication table to the one corresponding table for the $(2L+1)^2$--dimensional algebra of
$(2L+1)\times (2L+1)$--matrices gives an alternative implementation of matrix multiplication.
It turned out that this more complicated way of formulating matrix multiplication is actually
significantly faster for small $L$ on TPUs and GPUs! Of course, once $L$ is large enough,
the benefit of the hardware support kicks in. 
\smallspace
An example computation is:
\smallspace
{\bf Computation A} (Multiplication in arbitrary algebra of dimension 9):
\begin{verbatim}
Inputs: jnp.arrays of size   
input1:  [20480,9,8]
input2:  [20480,9,8]
multTable: [9,9,9,8]
Compute: jnp.einsum('clf,cmf,lmnf->cnf',
                    input1, input2, multTable)
Output:  [20480,9,8]
\end{verbatim}
\smallspace
{\bf Computation B} (Multiplication for the special case of $3\times 3$--matrices):
\begin{verbatim}
Inputs: jnp.arrays of size   
input1:  [20480,3,3,8]
input2:  [20480,3,3,8]
Compute: jnp.einsum('clmf,cmnf->clnf',
                    input1, input2)
Output:  [20480,9,8]                    
\end{verbatim}
\smallspace
The first two inputs are the same (except treating $3\times 3$ as 9), computation B is a matrix multiplication of $3\times3$ matrices.
Computation A does something more general, which needs an extra input of coefficients (for a particular choice of 0's and 1's 
you get back matrix multiplication).
The time used for these operations measured on different platforms was:

\begin{table}[ht]
    \centering
    \begin{tabular}{c|c|c|c}
         &  \ CPU\  & \ TPU\  & \  P100\  \\
       \hline
       A\  &\  15\  &  0.4\  & \ 0.8 \\
       B\  &\   7\  &  4.6\  & \ 1.5 \\
    \end{tabular}
    \caption{Time in ms used for computations A / B}
    \label{tab:my_label}
\end{table}
As expected, the matrix multiplication is faster on CPU, as it does not need to multiply with the entries of the multiplication table.
However, on accelerators the more general / complicated computation was faster!
(Using {\tt jax.lax.batch\_matmul} for computation B gave the same result.) On TPUs, this effect persisted up to $L=4$, i.e. $9\times 9$--matrices, for $L\geq 5$ and above the special case matrix multiplication was faster.

% ==================================================================================
\section{Details for experiments}
\label{sec:ExperimentDetails}
% ==================================================================================
%--------------------------------------------------------------------------------------
\subsection{Distinguishing configurations}
\label{subsec:DistinguishingConfigs}
%--------------------------------------------------------------------------------------
We first want to demonstrate in concrete examples that the invariants of Theorem \refTheoremMatMultComplete\ indeed can distinguish configurations in challenging pairs that cannot be distinguished by invariants of low body orders, see Figure \refFigOverview, part D.\\
The examples are point sets on the unit sphere $\BS^2$. The 2--body invariants would only measure the distance from the center, so in 
this case the only such invariant is the number of points. In the cases of Figure \refFigOverview, part D, this would only distinguish
between the 3 pairs of 2, 4, 7 points, but not the two configurations of each pair. The corresponding matrix invariant would be 
$\sum_{\vv r\in S} M_{0,0,0} = \sum_{\vv r\in S} 1 = |S|$.
\smallspace
The first pair can be distinguished by a 3--body invariant, corresponding to a product of two fundamental features:
The fundamental feature is just $\vv r \mapsto r$, and the (only) invariant polynomial (up to scalar multiples) in the fundamental features is
\[
  \left|\sum_{\vv r} \vv r\right|^2 = 
    \left( \sum x \right)^2 + \left( \sum y \right)^2 + \left( \sum z \right)^2
\]
For the configuration 1a the $\sum_{\vv r} \vv r$ is zero, but for the configuration 1b the sum
$\sum_{\vv r} \vv r$ gives $(1,1,0)^T$, which has square norm 2.\\
To express this as matrix product, we write the $L=1$ fundamental feature as $3\times 1$ and $1\times 3$-- matrices,
i.e. use a=1, b=0 and a=0, b=1 to obtain
\[
   M_{1,0,1}\cdot M_{0,1,1} = (1,1,0)\cdot \begin{pmatrix} 1 \\ 1\\ 0 \end{pmatrix} = 2.
\]
\smallspace
For the second pair (which needs 4--body invariants), we need a product of 3 matrices, and we use
\[  
   M_{1,0,1}\cdot M_{1,1,2} \cdot M_{0,1,1}
\]
where $M_{1,1,2}$ writes the $L=2$ fundamental feature as a symmetric, traceless $3\times 3$--matrix.
\smallspace
For the third pair, we define an invariant as a product of 4 matrices which give a linear map
\[
   \BR \xrightarrow{M_{0,2,2}} \BR^5 \xrightarrow{M_{2,1,1}} \BR^3 \xrightarrow{M_{1,1,2}} \BR^3 \xrightarrow{M_{1,0,1}} \BR 
\]
which is then of course again interpreted as a scalar.

%--------------------------------------------------------------------------------------
\subsection{Synthetic data experiment}
\label{subsec:InvariantDefinitions}
%--------------------------------------------------------------------------------------
\noindent
For the synthetic experiment we created an invariant PPSD 
on point sets on $\BS^2$ colored with 5 colors. Using the moment tensors
\[
   T_{i_1 i_2...i_k}(\gamma) := \sum_{\vv r\in S_\gamma} \vv r^{\otimes k}
\]
it can be expressed as a tensor contraction
\[
   T_{abcdefghij}(\gamma_1)T_{akn}(\gamma_2)T_{bckl}(\gamma_3)
           T_{deflm}(\gamma_4)T_{ghijmn}(\gamma_5)
\]
(using Einstein summation convention, i.e. summation over all indices is implied, but we write all indices as lower indices). Since tensor contractions are $O(3)$--covariant, this gives an $O(3)$--invariant function on point configurations colored with 5 colors. Each of the 
five tensors can be computed as a fundamental feature, so the degree of this invariant is 5 (i.e. its body order 6). The first tensor
is of order 10, so it contains irreducible representations up to $L=10$.
\smallspace
We generated 8192 training configurations, and 4096 test configurations of 20 points uniformly randomly 
sampled on the $\BS^2$, with 4 points assigned to each of the 5 colors.
\smallspace
For the {\bf Clebsch--Gordan} nets we use maximal degree $L=10$ and 25 channels.
First we use the method {\tt e3x.so3.irreps.spherical\_harmonics} to 
compute the spherical harmonics up to $L=10$ and then combine them linearly (with 
learnable weights) to 25 combinations for each of 5 ``factors''. Then we use the 
the method {\tt e3x.so3.clebsch\_gordan} from E3x (see \cite{Unke2024e3x}),
to ``multiply'' these layers 1 -- 4.
This means in particular that the data in each layer are 25 vectors in 
\[
  \irred{0} \oplus ... \oplus \irred{10}
\]
and each Clebsch--Gordan operation takes $11^3\cdot 25 = 33275$ learnable parameters.\\
For the last ``multiplication'' we use a scalar product (a performance optimization,
since we would only use the $\irred{0}$--part of the Clebsch--Gordan product), this then only
uses $11\cdot 25 \cdot 25 = 6875$ learnable parameters to compute all scalar products, 
which are then added up to give the scalar result.
\smallspace
For the {\bf matrix multiplication} nets we used big matrices with total side length
\[
   4 \times (2\cdot {\bf 4} + 1) +  4 \times (2 \cdot {\bf 5} + 1 ) = 80,
\]
i.e. 4 copies of $L=4$ and 4 copies of $L=5$.
Since the $(2\cdot5+1)\times (2 \cdot 5 + 1)$--matrices correspond
to the representation
\[
  \irred{5}\otimes\irred{5} = \irred{0} \oplus ... \oplus \irred{10},
\]
this corresponds again to a maximal $L=10$ being used, matching the Clebsch--Gordan setting.
The number of matrices is 64, which is a bit larger than the 25 channels used in the Clebsch--Gordan
setting, but on the other hand this only contains 16 matrices of the full size containing a $\irred{10}$ component, a bit less than the 25 in the Clebsch--Gordan case. (We cannot give fully equivalent
settings if we want to use also non--square constituent matrices, but this seems to be a reasonable approximation which also gives comparable accuracies.)
\smallspace
As in the Clebsch--Gordan case, we start with computing the spherical harmonics and combine their
sums into vectors in $\irred{0},...,\irred{10}$; again we use 5 factors (i.e. matrices in this case).
After computing their matrix product, we take the traces of the $2\cdot (4 \times 4)=32$ square sub--matrices that occur in the
big matrix, and combine them in a linear combination (with learnable coefficients), this is the 
direct equivalent of combining the $\irred{0}$--components as in the Clebsch--Gordan case (although 
without the performance optimization).
\smallspace
We trained with stochastic gradient descent for 40 episodes on the training set (with learning rate and batch size tuned separately for the two cases), and evaluated on the test set.
Averaging over 10 runs gave the learning curves reported in the main part.
\smallspace
As is to be expected, more / larger matrices usually give lower error, but longer execution time.
Below we add as an example a network with matrices of size
\[
   4 \times (2\cdot {\bf 1} + 1) + 4 \times (2\cdot {\bf 2} + 1) 
   + 4 \times (2\cdot {\bf 4} + 1) +  4 \times (2 \cdot {\bf 6} + 1 )
   % =  4 * (3 + 5 + 9 + 13) = 4 * 30,
   = 120
\]
In the resulting learning curves (averaged over 10 runs) 
\\ \noindent
\begin{adjustwidth}{0pt}{-10mm}
  \includegraphics[width=0.47\textwidth]{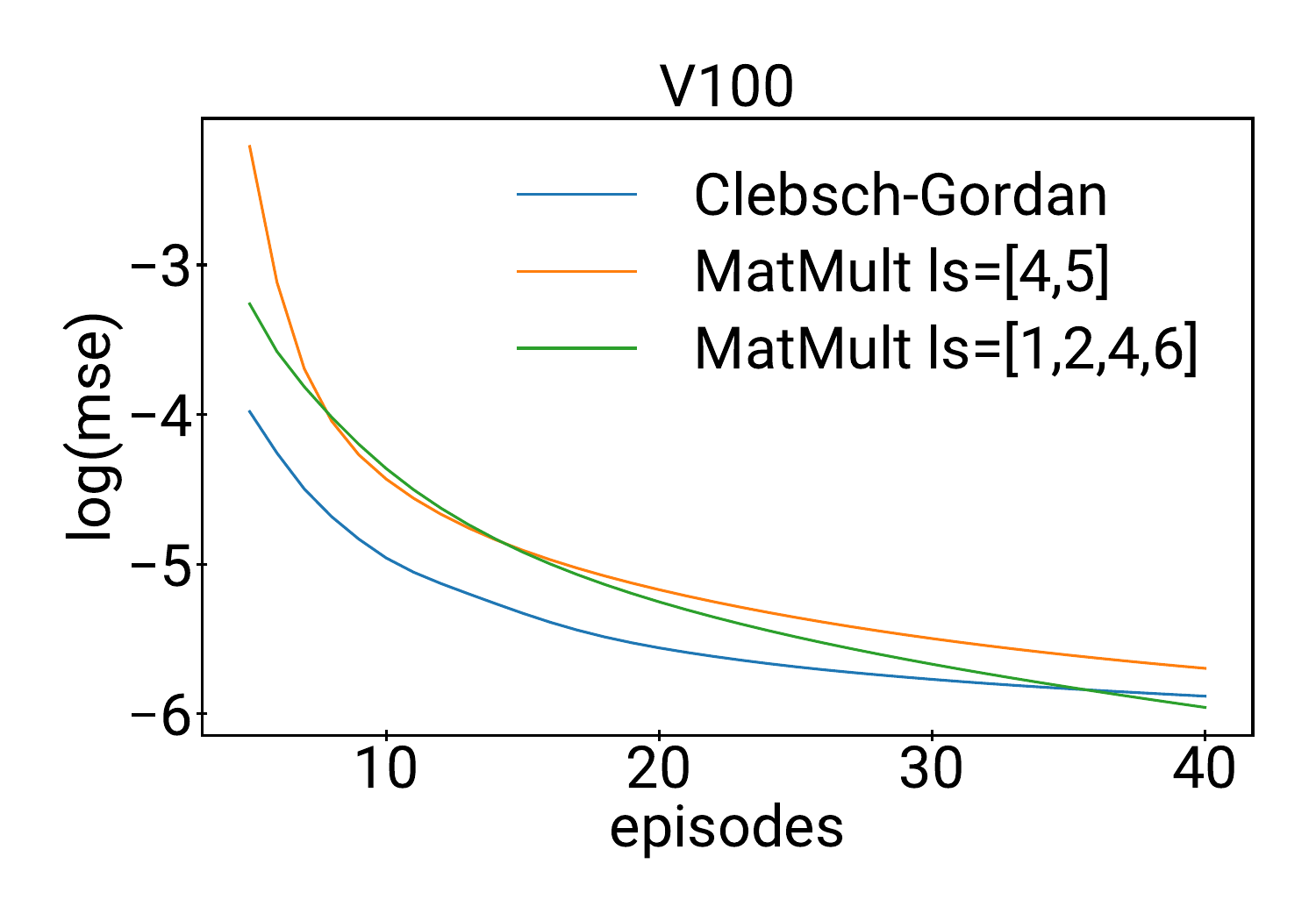}
  \raisebox{3mm}{\includegraphics[width=0.44\textwidth]{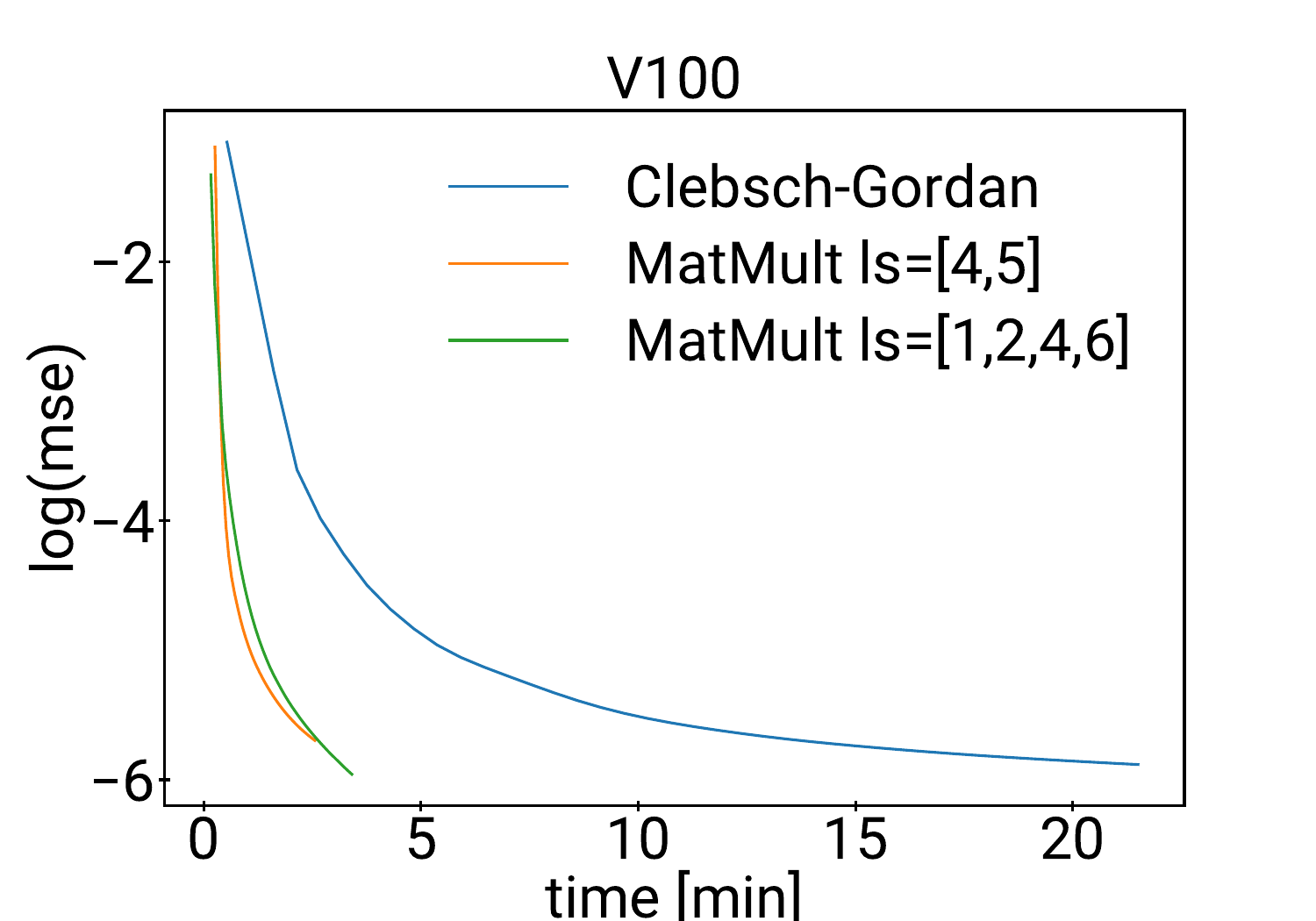}}
\end{adjustwidth}
we can see on the left hand side that the accuracy using these Matrix Multiplication nets for the same number of episodes is similar to the Clebsch--Gordan nets, but on the right hand side we can see that the Matrix Multiplication nets obtain this accuracy much faster.
%
%--------------------------------------------------------------------------------------
\subsection{Experiment on atomistic simulation}
%--------------------------------------------------------------------------------------
\noindent
In this experiment we want to show that we can use this method to obtain interesting results
on real world data. We use the data set MD-22 published with \cite{Chmiela2023}, and compare 
the results with the accuracies reported there and in \cite{frank2022so3krates}.
\smallspace
We use the architecture specified in Appendix \ref{sec:Algorithm} to learn ``energy
contributions'' for each atom in a molecule, these contributions are summed over all
atoms to give the energy of the configuration. To obtain the force acting on one atom, the
derivative of the energy in direction of the position of this atom is computed. This 
is done by automatic differentiation in JAX. The network is trained mainly on the forces:
We use a mean square error as the loss function on both forces and energies, but with
a scaling that puts almost all the weight on the forces.
\smallspace 
Since these energy contributions should depend on the element type of each atom, we 
ideally would learn different functions for each atom type. However, as a more 
parsimonious solution we just add learnable additive constants to all vectors and
sub--matrices which depend on the element. 
\smallspace
In contrast to the synthetic data experiment, the points are now in $\BR^3$, and 
we use as radial functions Chebyshev polynomials of the log radius and a cutoff
radius of 6 \AA (so we are effectively in our case 2ii). This is implemented in E3x
as {\tt e3x.nn.functions.chebyshev.exponential\_chebyshev}.
\smallspace
It seems to be beneficial to start with a smaller number of radial basis functions
and increase their number during the training, this increases the expressivity
of the radial basis functions only gradually and makes the end result a bit smoother.
We start with the first 12 Chebyshev polynomials as radial basis functions, and add a
new radial basis function every 4 epochs.\\
Similarly, we start with products of $b=4$ matrices and increase every 5 epochs by one, up to a maximal $b=12$.
\smallspace
The sub--matrices have side lengths 1,3,...,11, corresponding to $L=0,1,...,5$, 
so the maximal $L$ occurring in the sub--matrices is $L=10$. We use each of these
lengths 3 times, so the big matrix has a side length of
\[
   3 \cdot \Big( (2\cdot 0 + 1) + (2\cdot 1 + 1) + ... + (2 \cdot 5 + 1) \Big) = 108
\]
We use $n_{mat}=8$ of such matrix products, and we use $n_{vec}=12$ vectors on each
end of the product, but in slight deviation to the original algorithm as specified in 
Appendix \ref{sec:Algorithm} we do not reuse the vectors, so we ``only'' get
\[
  n_{mat} \cdot r \cdot n_{vec} = 8 \cdot (6 \cdot 3) \cdot 12 = 1728
\]
invariants from the full products of 12 matrices. 
We also do the same for each partial product of 1,2,...,11 matrices that we obtain while
computing the full products, so in total we get $12 \cdot 1728 = 20736$ invariants, 
of which we then learn the best linear combination.
\smallspace
For the two largest molecules we use slightly smaller networks, as specified
in Table \ref{tab:NetworkSizes}.
\begin{table}[ht]
    \centering
    \begin{tabular}{c||c|c|c}
         & \ small\  & buckyball catcher & nanotubes \\
         \hline
         $n_{mat}$      &        8 &        5 &   3 \\
         $n_{vec}$      &       12 &       10 &  10 \\
         $L_{max}$      &       10 &       10 &   8 \\ 
         \# repeats $L$ &        3 &        3 &   4 \\
         \hline
         \# invariants & 20,736 & 10,800 & 7,200 \\
    \end{tabular}
    \caption{Size of the network}
    \label{tab:NetworkSizes}
\end{table}
\smallspace
Maybe surprisingly, these simple networks using only a linear combination of the invariants, no message passing, no nonlinearities (apart from the matrix multiplication), give results that are similar to those of the more sophisticated 
methods, see Table \refTabMD\ in the main part.

\bibliographystyle{plain}
\bibliography{references}

\end{document}